\documentclass[11pt]{article}

\usepackage{format}
\usepackage{macros}

\title{
    From Pseudorandomness to Multi-Group Fairness and Back
    \blfootnote{
        This work was supported in part by the Simons Foundation Grant 733782 and  the Sloan Foundation Grant G-2020-13941.
    }
}
\author{
    Cynthia Dwork \\
    Harvard University and Microsoft Research \\
    \url{dwork@seas.harvard.edu}
\and
    Daniel Lee \\
    University of Pennsylvania \\
    \url{daniellee@alumni.upenn.edu}
\and
    Huijia Lin \\
    University of Washington \\
    \url{rachel@cs.washington.edu}
\and
    Pranay Tankala \\
    Harvard University \\
    \url{pranay_tankala@g.harvard.edu}
}
\date{\vspace{-30pt}}

\begin{document}

\maketitle
\noteswarning

\begin{abstract}
  We identify and explore connections between the recent literature on \emph{multi-group fairness} for prediction algorithms and the pseudorandomness notions of \emph{leakage-resilience} and \emph{graph regularity}. We frame our investigation using new variants of multicalibration based on statistical distance and closely related to the concept of \emph{outcome indistinguishability}. 
  Adopting this perspective leads us not only to new, more efficient algorithms for multicalibration, but also to our graph theoretic results and a proof of a novel \emph{hardcore lemma} for real-valued functions.
\end{abstract}


\section{Introduction}
\label{sec:introduction}

A central question in the field of algorithmic fairness concerns the extent to which prediction algorithms, which assign numeric ``probabilities'' to individuals in a population, systematically mistreat members of large demographic subpopulations. Although interest in \emph{group fairness} is far from new~\cite{turini2008discrimination,kamiran2009classifying}, the study of \emph{multi-group fairness}, originally conceived as a bridge between \emph{individual fairness}~\cite{dwork2012fairness} and group fairness notions and in which the subpopulations of interest are numerous and overlapping, is relatively young, initiated by the seminal works of \cite{hebertjohnson2018multicalibration,kearns2018preventing}. In the past few years, a fruitful line of research has investigated how to achieve various notions of multi-group fairness and their applications to learning e.g.,~\cite{gopalan2022omnipredictors, kim2019universal, deng2023happymap, gopalan2022lowdegree, jung2021moment, gupta2022online}. In this work, complementing the work of \cite{casacuberta2023finding, casacuberta2024complexity}, we excavate a deep connection between multi-group fairness and pseudorandomness, and exhibit a productive relationship between the key concept of multicalibration introduced in \cite{hebertjohnson2018multicalibration}, and notions of leakage simulation \cite{jetchev2014how}, graph regularity \cite{szemeredi1975regular,frieze1996regularity}, and hardcore lemmas \cite{impagliazzo1995hardcore}, drawing particular inspiration from \cite{trevisan2009regularity,chen2018complexity}.

\paragraph{A New Definition.}
Speaking informally, multicalibration requires that predictions be calibrated simultaneously on each member $c$ of a pre-specified collection ${\cal C}$ of arbitrarily intersecting population groups, the intuition being that a score of $v$ "means the same thing" independent of one's group membership(s)~\cite{kleinberg2016inherent}. We begin with a new variant of the definition of multicalibration that draws on the indistinguishability-based point of view of \cite{dwork2021outcome} and generalizes prior definitions along several axes. Consider a distribution ${\cal D}$ on individual-outcome pairs $(i, \os_i)$ for individuals $i \in {\cal X}$ and their associated real-world outcomes $\os_i$, and a collection ${\cal C}$ of functions capturing intersecting subpopulations in the fairness-based view and {\em distinguishers} in the outcome-indistinguishabilty framework.
Letting $\ps$ denote the real-world outcome distribution, where $\os_i \sim \Ber(\ps_i)$, the goal of a predictor $\tilde p$ is to provide outcome distributions $\tilde p_i$, for $i \in {\cal X}$,  in a way that cannot be distinguished from $\ps$ by the distinguishers. Unlike in previous work, our definition of multicalibration is in terms of {\em statistical distance}, requiring that for all $c \in {\cal C}$,
\begin{align*}
    (c(i), \tilde o_i, \tilde p_i)  \text{ and } 
    (c(i), o^*_i , \tilde p_i)  \text{ are $\varepsilon$-statistically close, where }(i,  o_i^*) \leftarrow {\cal D} \text{ and } \tilde o_i \sim \Ber(\pt_i). 
\end{align*}
(See Section~\ref{sec:definitions} for a formal treatment.) Our definition naturally accommodates outcomes $o^*_i \in {\cal O}$ for an arbitrary set ${\cal O}$ of possible outcomes, as well as functions $c: {\cal X} \rightarrow {\cal Y}$ with arbitrary ranges. A weaker variant of our definition corresponds to {\em multiaccuracy}~\cite{hebertjohnson2018multicalibration,kim2019multiaccuracy} and equivalently {\em no-access} outcome indistinguishability~\cite{dwork2021outcome} and the approximation notion in Theorem 1.1 of~\cite{trevisan2009regularity}, which in the group-fairness view only requires the predictor be accurate in expectation (rather than calibrated) on each group simultaneously. We also describe a stronger variant called {\em strict multicalibration}, which is closely related to the notion of ``swap'' multicalibration that was independently proposed by the concurrent work of \cite{gopalan2023characterizing}. The strong statistical distance condition in our definitions lends itself naturally to applications and gives a strikingly simple proof of the observation, due to~\cite{gopalan2023loss, gopalan2023characterizing}, that {\em omniprediction}~\cite{gopalan2022omnipredictors} can be achieved from multiaccuracy and overall calibration\footnote{An omnipredictor allows post-processing to obtain best-in-class (with respect to ${\cal C}$) loss with respect to any loss function in a rich set.}, as well as a generalization of omniprediction to new settings. These new settings include the \textit{multi-objective learning} problem studied in \cite{haghtalab2023unifying}, where the goal is to achieve strong performance not only across multiple loss functions but also across shifts in the underlying distribution over ${\cal X}$.

\paragraph{From Pseudorandomness to Fairness.}
The leakage simulation lemma~\cite{jetchev2014how} is a cryptographic result concerning when a few bits of auxiliary input regarding a secret can be "faked," and therefore pose no threat to secrecy. Translating to our setting, in the simplest case there is a single bit of "auxiliary" input and this corresponds to $\os_i\sim \Ber(\ps_i)$ or $\ot_i\sim\Ber(\pt_i)$, where again $\ps_i$ is the true distribution from which individual~$i$'s outcome is chosen and $\pt_i$ is the distribution proposed by the predictor. The lemma provides a construction for creating a simulator that outputs "fake" bits that fool any function in a family $\cal F$ of distinguishers that receive $(i,\os_i)$ or $(i,\ot_i)$, and is a strengthening of a conceptually similar result in~\cite{trevisan2009regularity}. The simulator is a simple combination of only a small number of functions in~$\cal F$. 

Upon inspection, multiaccuracy is a "moral equivalent" to leakage simulation. Armed with this observation, we leverage a lower bound on the size of leakage simulators~\cite{chen2018complexity} to obtain the first relative lower bound on the size of \emph{multiaccurate} predictors, refuting the possibility of having ``dream'' predictors that are more efficient than functions in ${\cal C}$. In other words, there is nothing analogous to a pseudorandom generator, {\it i.e.}, no small predictor $\tilde{p}$ can fool all polynomial-sized distinguishers.

Inspired by the specific leakage simulation algorithm of~\cite{chen2018complexity}, we next construct a general framework for multicalibration algorithms using no-regret learning. A particular instantiation of the framework results in a set of new algorithms with improved sample complexity in the \emph{multiclass} and \emph{low-degree} settings recently introduced by \cite{gopalan2022lowdegree}\footnote{\cite{gopalan2022lowdegree} defines a hierarchy of relaxations of multicalibration in which the $k$th level  ("degree $k$") constrains the first $k$ moments of the predictor, conditioned on subpopulations in ${\cal C}$, and demonstrates that some properties of multicalibration related to fairness and accuracy manifest as low-degree properties. }.

Specifically, when ${\cal C} \subseteq \{0, 1\}^{\cal X}$ is the collection of demographic subpopulations and there are $\ell \gg 2$ possible outcomes for each member of the population, our algorithm in Section~\ref{sec:no-regret-and-complexity} needs only $\log|{\cal C}|  + (1/\varepsilon)^{\ell}$ samples to achieve $({\cal C}, O(\varepsilon))$-multicalibration, ignoring factors of $(\ell/\varepsilon)^{O(1)}$. In contrast, the previous best upper bound of \cite{gopalan2022lowdegree} required $\log|{\cal C}| \times (\ell/\varepsilon)^{4\ell}$, again ignoring factors of $(\ell/\varepsilon)^{O(1)}$.\footnote{\cite{gopalan2022lowdegree} reports sample complexity in terms of ${\sf VC}({\cal C})$, which could be as large as $\log(|{\cal C}|)$.} In particular, in our sample complexity, the $\log |{\cal C}|$ term is additive and the dependence on $\ell$ is simply $\exp(\ell)$. Similarly, for degree-$k$ multicalibration our algorithm uses $\Theta\left((\log|{\cal C}| + k \log(\ell/\varepsilon))\varepsilon^{-4} \times \log(\ell)\right)$ samples, yielding an exponential improvement on the dependence in $\ell$ over the bound in~\cite{gopalan2022lowdegree} of $\Theta\left((\log|{\cal C}| + k \log(\ell/\varepsilon))\varepsilon^{-4} \times \ell\right)$.

\paragraph{From Fairness Back to Pseudorandomness} We find a tantalizing parallel between multicalibration and the \emph{Szemer\'{e}di regularity lemma} from extremal graph theory, which decomposes large dense graphs into parts that behave pseudorandomly \cite{szemeredi1975regular}. We show in Theorem~\ref{thm:regularity-fairness} that for an appropriate graph-based instantiation of the multi-group fairness framework, there is a tight correpondence between predictors satisfying \emph{strict} multicalibration and Szemer\'{e}di regularity partitions of the underlying graph. Our theorem can be viewed as an extension of a result of \cite{trevisan2009regularity} that established an anlogous link between \emph{multiaccuracy} and the \emph{Frieze-Kannan weak regularity lemma} \cite{frieze1996regularity}. It also builds on the work of \cite{skorski2017cryptographic}, which showed that the criteria of Szemer\'{e}di regularity can be phrased in terms of distinguishers---our work further shows that these distinguishers have the structure of tests for strict multicalibration with respect to an appropriate collection ${\cal C}$. Finally, by considering the standard notion of multicalibration, which is more demanding than multiaccuracy but less so than strict multicalibration, our analogy naturally leads us at a new notion of graph regularity situated between Frieze-Kannan regularity and Szemer\'{e}di regularity that we call  \emph{intermediate regularity}, which may be of independent interest.

The regularity lemma of Trevisan, Tulsiani, and Vadhan yields important implications in different areas, including the weak Szemerédi regularity lemma in graph theory, Impagliazzo’s Hardcore Lemma in complexity theory~\cite{impagliazzo1995hardcore}, the Dense Model Theorem in additive combinatorics~\cite{Imp09,reingold2008dense}, computational analogues of entropy in information theory~\cite{vadhan2012characterizing,vadhan2013uniform,Zhe14}, and weaker notions of zero-knowledge in cryptography~\cite{chung2015weak}.  Capitalizing on the increased strength of multicalibration over multiaccuracy, applying our multicalibration algorithm we derive a version of the hardcore lemma for bounded \emph{real-valued} functions with respect to natural notions of hardness and pseudorandomness\footnote{In concurrent work also capitalizing on the strength of multicalibration as a starting point, \cite{casacuberta2023finding, casacuberta2024complexity} obtains stronger and more general versions of the Hardcore Lemma, Dense Model Theorem, and characterizations of pseudo-average min-entropy.
}.

\paragraph{Organization} In Section~\ref{sec:preliminaries}, we formally state the problem setup for multi-group fairness, with an emphasis on the case of multiclass prediction. In Section~\ref{sec:definitions}, we present our new notions of multicalibration, relate them to prior definitions in the literature, and discuss their applications. In Section~\ref{sec:leakage-simulation-and-oi}, we discuss outcome indistinguishability and its relationship to leakage simulation, a connection that motivates several of our results. In Section~\ref{sec:no-regret-and-complexity}, we give an algorithm template that unifies prior algorithms for achieving outcome indistinguishability, and also derive our improved sample complexity upper bounds. In Section~\ref{sec:graph-regularity}, we detail the connection to graph regularity. In Section~\ref{sec:hardcore}, we prove our novel variant of the hardcore lemma.

\paragraph{General Notation} For sets $A$ and $B$, we let $B^A$ denote the set of functions $f : A \to B$. For $f \in B^A$, we let $f(a)$ and $f_a$ both denote the output of $f$ on input $a \in A$.

\section{The Multi-Group Fairness Framework}
\label{sec:preliminaries}

\paragraph{Individuals and Outcomes} Building on the framework introduced by \cite{dwork2021outcome}, we consider a pair $(i, o^*_i)$ of jointly distributed random variables, where $i$ is an \emph{individual} drawn from some fixed distribution over a finite \emph{population} ${\cal X}$ and $o^*_i$ is an \emph{outcome} of individual $i$ that belongs to a finite set ${\cal O}$ consisting of $\ell = |{\cal O}|$ possible outcomes.

\paragraph{Modeled Outcomes} A \emph{predictor} associates a probability distribution over possible outcomes to each member of the population. In other words, a predictor is a function $\tilde{p} : {\cal X} \to \Delta {\cal O}$, where \[\Delta {\cal O} = \left\{f \in [0, 1]^{\cal O} : \sum_{o \in {\cal O}} f(o) = 1\right\}.\] Let $\tilde{p}_j$ denote the output $\tilde{p}(j)$ of the function $\tilde{p}$ on input $j \in {\cal X}$, and let $\tilde{o}_i \in {\cal O}$ be a random variable whose conditional distribution given $i$ is specified by $\tilde{p}_i$. In other words, $\Pr[\tilde{o}_i = o \mid i] = \tilde{p}_{i}(o)$ for each possible outcome $o \in {\cal O}$. We call $\tilde{o}_i$ the \emph{modeled outcome} of individual $i$.

\paragraph{Binary Outcomes} We say that outcomes are \emph{binary} if ${\cal O} = \{0, 1\}$. In this case, we can naturally identify $\Delta {\cal O}$ with the unit interval $[0, 1]$ by mapping the distribution $\tilde{p}_j \in \Delta {\cal O}$ to the probability $\tilde{p}_j(1) \in [0, 1]$ that $\tilde{p}_j$ assigns to a positive outcome. With this convention, the conditional distribution of $\tilde{o}_i$ given $i$ is $\Ber(\tilde{p}_i)$, which is how $\tilde{o}_i$ was originally defined in \cite{dwork2021outcome}.

\paragraph{Demographic Subpopulations}
Multi-group fairness examines the ways that a predictor $\tilde{p}$ might mistreat members of large, possibly overlapping \emph{subpopulations} $S \subseteq {\cal X}$ in a prespecified collection ${\cal C}$. Each such subpopulation has an associated indicator function ${\bf 1}_S : {\cal X} \to \{0, 1\}$, and, following \cite{gopalan2022omnipredictors}, it will be notationally convenient for us to represent ${\cal C}$ directly as a collection of such functions. Concretely, we allow ${\cal C}$ to be any collection of functions $c : {\cal X} \to {\cal Y}$ for some set ${\cal Y}$. For consistency, we will write the output $c(j)$ of the function $c$ on input $j \in {\cal X}$ as $c_j$.

\paragraph{Discretization} 
We will sometimes round predictions to the nearest point in a finite set ${\cal G} \subseteq \Delta{\cal O}$, which we assume to be an $\eta$-covering of $\Delta {\cal O}$ with respect to the \emph{statistical distance metric} $\delta(f, g) = \frac{1}{2}\sum_{o \in {\cal O}}|f(o) - g(o)|$, meaning that for all $f \in \Delta {\cal O}$, there exists $g \in {\cal G}$ such that $\delta(f, g) \le \eta$. Formally, we say that $\hat p : {\cal X} \to {\cal G}$ is the \emph{discretization of $\tilde{p}$ to ${\cal G}$} if \[\hat{p}_j = \argmin_{g \in {\cal G}} \delta(\tilde{p}_j, g)\] for each $j \in {\cal X}$, breaking ties arbitrarily. We define $\hat{o}_i \in {\cal O}$ to be the modeled outcome of an individual $i$ with respect to the predictor $\hat{p}$, meaning that $\Pr[\hat{o}_i = o \mid i] = \hat{p}_{i}(o)$ for each $o \in {\cal O}$.

The size of ${\cal G}$ will also play an important role in deriving our improved complexity upper bounds in Section~\ref{sec:complexity}. We show in Section~\ref{sec:definitions} that by taking ${\cal G}$ to be the intersection of $\Delta{\cal O}$ with the grid $\left\{0, \frac{1}{m}, \frac{2}{m}, \ldots, \frac{m-1}{m}, 1\right\}^\ell$ for an appropriate integer $m$, we achieve $|{\cal G}| < (3/\eta)^{\ell-1}$. In this case, discretization to ${\cal G}$ amounts to coordinate-wise rounding.

Finally, we remark that discretization merges a predictor's level sets, and this process may destroy any special structure these level sets possess. This observation will become important in our discussion of graph regularity in Section~\ref{sec:graph-regularity}.

\section{Multicalibration via Statistical Closeness}
\label{sec:definitions}

\emph{Multiaccuracy} and \emph{multicalibration} are two essential multi-group fairness notions introduced by \cite{hebertjohnson2018multicalibration}. In this section, we state new, natural variants of these definitions in terms of \emph{statistical distance}, which, for random variables $X$ and $Y$ with finite support, is measured by the formula \[\delta(X, Y) = \max_A \big|\Pr[X \in A] - \Pr[Y \in A]\big|.\] The maximum is taken over $A \subseteq {\rm supp}(X) \cup {\rm supp}(Y)$, and we write $X \approx_\varepsilon Y$ if $\delta(X, Y) \le \varepsilon$.

\begin{definition}
\label{def:multiaccuracy}
A predictor $\tilde{p}$ is \emph{$({\cal C}, \varepsilon)$-multiaccurate} if \((c_i, \tilde{o}_i) \approx_\varepsilon (c_i, o^*_i)\) for all $c \in {\cal C}$.
\end{definition}

\begin{definition}
\label{def:multicalibration}
A predictor $\tilde{p}$ is \emph{$({\cal C}, \varepsilon)$-multicalibrated} if \((c_i, \tilde{o}_i, \tilde{p}_i) \approx_\varepsilon (c_i, o^*_i, \tilde{p}_i)\) for all $c \in {\cal C}$.
\end{definition}

A notable way in which Definitions~\ref{def:multiaccuracy} and \ref{def:multicalibration} differ from previous definitions in the multi-group fairness literature is that in the case of ${\cal C} \subseteq \{0, 1\}^{\cal X}$, our definitions concern the behavior of $\tilde{p}$ on \emph{both} the $0$-level set and $1$-level set of each $c \in {\cal C}$, as opposed to merely the $1$-level sets. We will demonstrate shortly that this distinction is unimportant if ${\cal C} \subseteq \{0, 1\}^{\cal X}$ is closed under complement, meaning that ${\bf 1}_S \in {\cal C}$ if and only if ${\bf 1}_{{\cal X} \setminus S} \in {\cal C}$.

In our discussion of graph regularity in Section~\ref{sec:graph-regularity}, we will also need a stronger variant of the multicalibration definition, which we call \emph{strict multicalibration}, that has appeared only implicitly in prior works on algorithmic fairness. To state the definition succinctly, we introduce the shorthand \[\delta(X, Y \mid Z) = \max_{A} \, \big|\Pr[X \in A \mid Z] - \Pr[Y \in A \mid Z]\big|,\] which is a function of a random variable $Z$ distributed jointly with $X$ and with $Y$. Specifically, if the value of $Z$ is $z$, then the value of $\delta(X, Y \mid Z)$ is \[\delta(X, Y \mid Z = z) = \max_{A} \, \big|\Pr[X \in A \mid Z = z] - \Pr[Y \in A \mid Z = z]\big|.\]

\begin{definition}
\label{def:strict-multicalibration}
A predictor $\tilde{p}$ is \emph{strictly $({\cal C}, \varepsilon)$-multicalibrated} if \[\E\left[\max_{c \in {\cal C}} \, \delta \big( (c_i, \tilde{o}_i), (c_i, o^*_i) \mid \tilde{p}_i \big)\right] \le \varepsilon.\]
\end{definition}

Intuitively, strict multicalibration asks that a predictor be multiaccurate on most of its level sets. As we will see shortly, Definition~\ref{def:multicalibration}, which is closest to the original definition of multicalibration and suffices for some applications, does \emph{not} require multiaccuracy on even a single level set.

In Sections \ref{sec:relationships-among} and \ref{sec:relationships-prior}, respectively, we will explain the relationships of these definitions to each other and to existing notions of multi-accuracy and multi-calibration that appear in the algorithmic fairness literature. In Section \ref{sec:omniprediction}, we demonstrate the usefulness of our new definitions by (re)proving some recent interesting results on omniprediction by~\cite{gopalan2022omnipredictors,gopalan2023loss} and deriving novel extensions.

\subsection{Relationships Among Definitions}
\label{sec:relationships-among}

First, we show that strict $({\cal C}, \varepsilon)$-multicalibration implies $({\cal C}, \varepsilon)$-multicalibration, which in turn implies $({\cal C}, \varepsilon)$-multiaccuracy:

\begin{theorem}
If $\tilde{p}$ is strictly $({\cal C}, \varepsilon)$-multicalibrated, then $\tilde{p}$ is $({\cal C}, \varepsilon)$-multicalibrated.
\end{theorem}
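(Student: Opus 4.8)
Fix an arbitrary $c \in {\cal C}$; our goal is to show $\delta\big((c_i, \tilde{o}_i, \tilde{p}_i), (c_i, o^*_i, \tilde{p}_i)\big) \le \varepsilon$. The plan is to reduce this joint statistical distance to an average of conditional statistical distances, and then recognize the resulting quantity as bounded by the left-hand side of Definition \ref{def:strict-multi-calibration}.

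The key tool is a standard decomposition lemma: if $X$, $Y$, $Z$ are random variables with finite support such that $Z$ has the same marginal distribution when paired with $X$ as when paired with $Y$, then $\delta\big((X, Z), (Y, Z)\big) = \E\big[\delta(X, Y \mid Z)\big]$. I would prove this by exhibiting the optimal distinguishing event explicitly: for each value $z$ in the support of $Z$, let $A_z$ be the event (over the support of $X$ and $Y$) witnessing $\delta(X, Y \mid Z = z)$, and take $A = \bigcup_z \big(A_z \times \{z\}\big)$. Summing the per-slice contributions, weighted by $\Pr[Z = z]$, shows $\Pr[(X,Z) \in A] - \Pr[(Y,Z) \in A] = \E[\delta(X, Y \mid Z)]$, while the same slice-wise argument shows no event can exceed this value. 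All distributions here have finite support because ${\cal X}$ is finite, so there are no measure-theoretic subtleties.

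I then apply this lemma with $Z = \tilde{p}_i$, $X = (c_i, \tilde{o}_i)$, and $Y = (c_i, o^*_i)$. The marginal-matching hypothesis holds because $\tilde{p}_i$ is a deterministic function of $i$, and $i$ has a single fixed distribution; note that $c_i$ need not be a function of $\tilde{p}_i$, but this does not matter, since the lemma only conditions on $Z$. Thus
\[
\delta\big((c_i, \tilde{o}_i, \tilde{p}_i), (c_i, o^*_i, \tilde{p}_i)\big) = \E\Big[\delta\big((c_i, \tilde{o}_i), (c_i, o^*_i) \mid \tilde{p}_i\big)\Big].
\]
Finally, for every realization of $\tilde{p}_i$ we have $\delta\big((c_i, \tilde{o}_i), (c_i, o^*_i) \mid \tilde{p}_i\big) \le \max_{c' \in {\cal C}} \delta\big((c'_i, \tilde{o}_i), (c'_i, o^*_i) \mid \tilde{p}_i\big)$ since $c \in {\cal C}$; taking expectations and invoking the strict multi-calibration hypothesis bounds the right-hand side by $\varepsilon$. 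Since $c$ was arbitrary, $\tilde{p}$ is $({\cal C}, \varepsilon)$-multi-calibrated.

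The only real work is the decomposition lemma; everything else is bookkeeping. I expect the mild subtlety to be making sure the "union of optimal slice-events" genuinely attains the maximum and that both directions of the absolute value are handled — this is taken care of automatically because each per-slice term $\delta(X, Y \mid Z = z)$ is itself a maximum over events and hence already accounts for the sign.
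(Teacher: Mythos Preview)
Your proof is correct and follows exactly the same route as the paper: the identity $\delta\big((c_i,\tilde o_i,\tilde p_i),(c_i,o^*_i,\tilde p_i)\big)=\E\big[\delta((c_i,\tilde o_i),(c_i,o^*_i)\mid \tilde p_i)\big]$, then bounding by the max over $c'\in{\cal C}$ and invoking the strict multi-calibration hypothesis. The only difference is that you spell out and justify the decomposition lemma, whereas the paper simply writes the equality as a single line.
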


\begin{proof}
If $\tilde{p}$ is strictly $({\cal C}, \varepsilon)$-multicalibrated, then for all $c \in {\cal C}$, \[\delta \big( (c_i, \tilde{o}_i, \tilde{p}_i), (c_i, o^*_i, \tilde{p}_i) \big) = \E\left[\delta \big( (c_i, \tilde{o}_i), (c_i, o^*_i) \mid \tilde{p}_i \big)\right] \le \E\left[\max_{c' \in {\cal C}} \, \delta \big( (c'_i, \tilde{o}_i), (c'_i, o^*_i) \mid \tilde{p}_i \big)\right] \le \varepsilon,\] so $\tilde{p}$ is $({\cal C}, \varepsilon)$-multicalibrated, as well.
\end{proof}

\begin{theorem}
If $\tilde{p}$ is $({\cal C}, \varepsilon)$-multicalibrated, then $\tilde{p}$ is $({\cal C}, \varepsilon)$-multiaccurate.
\end{theorem}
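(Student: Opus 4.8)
The plan is to observe that this is nothing more than the data-processing (post-processing) inequality for statistical distance: applying a fixed deterministic map to two random variables cannot increase the statistical distance between them. Here the map in question is the projection that forgets the last coordinate $\tilde{p}_i$, sending the triple $(c_i, \tilde{o}_i, \tilde{p}_i)$ to the pair $(c_i, \tilde{o}_i)$ and likewise $(c_i, o^*_i, \tilde{p}_i)$ to $(c_i, o^*_i)$. Since multi-calibration is the statement that the two triples are $\varepsilon$-close, this projection immediately yields that the two pairs are $\varepsilon$-close, which is exactly multi-accuracy.

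\textbf{Key steps.} First, fix an arbitrary $c \in {\cal C}$; by Definition~\ref{def:multi-calibration} we have $\delta\big((c_i, \tilde{o}_i, \tilde{p}_i), (c_i, o^*_i, \tilde{p}_i)\big) \le \varepsilon$. Second, let $A$ be any subset of the common support of $(c_i, \tilde{o}_i)$ and $(c_i, o^*_i)$, and consider its ``cylinder'' preimage $A' = \{(y, o, q) : (y, o) \in A\}$ inside the support of the two triples. Because the first two coordinates of the triple are precisely $(c_i, \tilde{o}_i)$ (resp. $(c_i, o^*_i)$), we have the exact identities $\Pr[(c_i, \tilde{o}_i) \in A] = \Pr[(c_i, \tilde{o}_i, \tilde{p}_i) \in A']$ and $\Pr[(c_i, o^*_i) \in A] = \Pr[(c_i, o^*_i, \tilde{p}_i) \in A']$. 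Third, subtract and apply the definition of statistical distance to the triples:
\[
\big|\Pr[(c_i, \tilde{o}_i) \in A] - \Pr[(c_i, o^*_i) \in A]\big| = \big|\Pr[(c_i, \tilde{o}_i, \tilde{p}_i) \in A'] - \Pr[(c_i, o^*_i, \tilde{p}_i) \in A']\big| \le \varepsilon.
\]
Finally, take the maximum over all admissible $A$ to conclude $\delta\big((c_i, \tilde{o}_i), (c_i, o^*_i)\big) \le \varepsilon$, i.e. $(c_i, \tilde{o}_i) \approx_\varepsilon (c_i, o^*_i)$; since $c$ was arbitrary, $\tilde{p}$ is $({\cal C}, \varepsilon)$-multi-accurate.

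\textbf{Main obstacle.} There is essentially no hard step here; the only thing to be careful about is the bookkeeping that the cylinder set $A'$ is a legitimate event in the support of the triples and that the two marginalization identities hold on the nose (not merely up to error), which is immediate since the pair is a coordinate projection of the triple. One could alternatively phrase the whole argument as a one-line invocation of a general lemma stating $\delta(f(X), f(Y)) \le \delta(X, Y)$ for any function $f$, applied with $f$ the projection onto the first two coordinates; I would likely state it in the inline form above to keep the section self-contained.
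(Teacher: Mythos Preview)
Your proposal is correct and matches the paper's approach exactly: the paper's proof is the single inequality $\delta\big((c_i,\tilde{o}_i),(c_i,o^*_i)\big)\le\delta\big((c_i,\tilde{o}_i,\tilde{p}_i),(c_i,o^*_i,\tilde{p}_i)\big)\le\varepsilon$, which is precisely the data-processing inequality you spell out via cylinder sets.
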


\begin{proof}
If $\tilde{p}$ is $({\cal C}, \varepsilon)$-multicalibrated, then for all $c \in {\cal C}$, \[\delta \big( (c_i, \tilde{o}_i), (c_i, o^*_i) \big) \le \delta \big( (c_i, \tilde{o}_i, \tilde{p}_i), (c_i, o^*_i, \tilde{p}_i) \big) \le \varepsilon,\] so $\tilde{p}$ is $({\cal C}, \varepsilon)$-multiaccurate, as well.
\end{proof}

Next, we prove that the following theorem, which shows that the converse implications do not hold.

\begin{theorem}
\label{thm:multicalibration-separation}
For all $\varepsilon > 0$, there exist finite sets ${\cal X}$ and ${\cal C} \subseteq \{0, 1\}^{\cal X}$, a pair of joint random variables $(i, o^*_i) \in {\cal X} \times \{0, 1\}$, and predictors $\tilde{p}, \tilde{p}' : {\cal X} \to [0, 1]$ such that:
\begin{itemize}
    \item[(a)] $\tilde{p}$ is $({\cal C}, 0)$-multiaccurate but not $({\cal C}, 1/3)$-multicalibrated.
    \item[(b)] $\tilde{p}'$ is $({\cal C}, \varepsilon)$-multicalibrated but not strictly $({\cal C}, 1/4)$-multicalibrated.
\end{itemize}
\end{theorem}

\begin{proof}
\begin{itemize}
    \item[(a)] Let ${\cal X} = \{0, 1\}$ and ${\cal C} = \{{\bf 1}_{\cal X}\}$. Consider an individual $i$ drawn uniformly from ${\cal X}$ whose outcome $o^*_i$ is conditionally distributed as $\Ber(1/2)$ given $i$. Then the predictor $\tilde{p}_j = j$ is $({\cal C}, 0)$-multiaccurate but not $({\cal C}, 1/2-\alpha)$-multicalibrated for any $\alpha > 0$.
    \item[(b)] Let ${\cal X} = [m] \times [m]$ for some positive integer $m$, and let ${\cal C} = \{c_k : k \in [m]\}$ where \[c_k(j) = {\bf 1}[j_1 = k \text{ and } j_2 \le k]\] for each member $j = (j_1, j_2)$ of the population. Consider an individual $i = (i_1, i_2)$ drawn uniformly from ${\cal X}$ whose outcome is $o^*_i = {\bf 1}[i_1 \ge i_2]$, and let $\tilde{p}_j = j_1 / m$. The range of $\tilde{p}$ is $\{1/m, 2/m, \ldots 1\}$. A simple calculation shows that for each function $c_k \in {\cal C}$ and each value $v$ in the range of $\tilde{p}$, we have that $\Pr[\tilde{p}_i = v] = 1/m$ and \[\delta \big( (c_{ki}, \tilde{o}_i), (c_{ki}, o^*_i) \mid \tilde{p}_i = v \big) = \begin{cases} 2v(1-v) & \text{if } v = k/m \\ 0 &\text{otherwise} \end{cases}\] (recall from Section \ref{sec:preliminaries} that $c_{ki} = c_k(i)$). Therefore, as $m \to \infty$, \[\max_{c_k \in {\cal C}} \, \delta \big( (c_{ki}, \tilde{o}_i, \tilde{p}_i), (c_{ki}, o^*_i, \tilde{p}_i) \big) = \max_{k \in [m]} \, \frac{2(k/m)(1-k/m)}{m} \to 0\] but \[\E\left[\max_{c_k \in {\cal C}} \, \delta \big( (c_{ki}, \tilde{o}_i), (c_{ki}, o^*_i) \mid \tilde{p}_i \big)\right] = \sum_{k=1}^m \frac{2(k/m)(1-k/m)}{m} \to \frac{1}{3},\] so $\tilde{p}$ is $({\cal C}, \varepsilon)$-multicalibrated but not strictly $({\cal C}, 1/3-\alpha)$-multicalibrated for any $\alpha > 0$.
\end{itemize}
\end{proof}

The separation of multicalibration and strict multicalibration comes with an important caveat: any multicalibrated predictor $\tilde{p}$ can be \emph{discretized} to achieve strict multicalibration with respect to the same collection ${\cal C}$ but a significantly worse parameter $\varepsilon$. To state this result, recall that $\hat p_i$ denotes the discretization of $\tilde{p}_i$ to a finite $\eta$-covering ${\cal G}$ of $\Delta{\cal O}$.

\begin{theorem}
\label{thm:discretize}
If $\tilde{p}$ is $({\cal C}, \varepsilon)$-multicalibrated, then $\hat p$ is strictly $({\cal C}, |{\cal G}|\varepsilon + \eta)$-multicalibrated.
\end{theorem}

\begin{proof}
Since ${\cal G}$ is an $\eta$-covering of $\Delta{\cal O}$, then the inequality $\delta(\hat{o}_i, \tilde{o}_i \mid i) \le \eta$ holds almost surely, so
\[\E\left[\max_{c \in {\cal C}} \, \delta \big( (c_i, \hat{o}_i), (c_i, o^*_i) \mid \hat{p}_i \big)\right] \le \E\left[\max_{c \in {\cal C}} \, \delta \big( (c_i, \tilde{o}_i), (c_i, o^*_i) \mid \hat{p}_i \big)\right] + \eta\] and the expectation on the right hand side is precisely \[\sum_{v \in {\cal G}} \max_{c \in {\cal C}} \, \delta \big( (c_i, \tilde{o}_i), (c_i, o^*_i) \mid \hat{p}_i = v \big) \Pr[\hat{p}_i = v].\] Each of $|{\cal G}|$ terms in the sum can be bounded as follows:
\begin{align*}
    \delta \big( (c_i, \tilde{o}_i), (c_i, o^*_i) \mid \hat{p}_i = v \big) \Pr[\hat{p}_i = v] &\le \delta \big( (c_i, \tilde{o}_i, \hat{p}_i), (c_i, o^*_i, \hat{p}_i) \big) \\
    &\le \delta \big( (c_i, \tilde{o}_i, \tilde{p}_i), (c_i, o^*_i, \tilde{p}_i) \big) & \text{since }\hat{p}_i\text{ is a function of }\tilde{p}_i\text{,} \\
    &\le \varepsilon &\text{since }\tilde{p}\text{ is }({\cal C}, \varepsilon)\text{-multicalibrated.}
\end{align*}
Thus, $\hat{p}$ is strictly $({\cal C}, |{\cal G}|\varepsilon + \eta)$-multicalibrated.
\end{proof}

\begin{corollary}
\label{thm:rounding}
Let $\ell = |{\cal O}|$. For sufficiently small $\varepsilon > 0$, any $\left({\cal C}, \varepsilon^\ell\right)$-multicalibrated predictor can be made strictly $({\cal C}, 4\varepsilon)$-multicalibrated by coordinate-wise rounding to a precision depending only on $\varepsilon$ and $\ell$.
\end{corollary}

This corollary follows immediately from Theorem~\ref{thm:discretize} and the following lemma, which gives us a grid ${\cal G}$ such that $|{\cal G}|\varepsilon^\ell + \eta < 4\varepsilon$ when $\eta = 3\varepsilon$ is sufficiently small.

\begin{lemma}
\label{thm:grid-size}
For all sufficiently small $\eta > 0$, the grid ${\cal G} = \Delta{\cal O} \cap \left\{0, \frac{1}{m}, \frac{2}{m}, \ldots, \frac{m-1}{m}, 1\right\}^\ell$ with $m = \lceil (\ell-1)/\eta \rceil$ is an $\eta$-covering of $\Delta{\cal O}$ of size $|{\cal G}| < (3/\eta)^{\ell-1}$.
\end{lemma}

\begin{proof}
    Given any $f \in \Delta{\cal O}$, we can find a grid point $g \in {\cal G}$ such that $f$ and $g$ differ by at most $1/m$ in all but one coordinate. Since $(\ell-1)/m \le \eta$, this means that ${\cal G}$ is an $\eta$-covering. A counting argument shows that the size of $G$ is \[|{\cal G}| = \binom{m+\ell-1}{\ell-1} \le \left(\frac{e(m+\ell-1)}{\ell-1}\right)^{\ell-1} \le \left(e\left(\frac{1}{\eta}+2\right)\right)^{\ell-1} < \left(\frac{3}{\eta}\right)^{\ell-1}\] for all $\ell \ge 2$ and all sufficiently small $\eta > 0$.
\end{proof}

The relationships among our new definitions are depicted in Figure~\ref{fig:definitions}.

\begin{figure}[tb]
    \centering
    \[\begin{tikzcd}
        {\text{strict multicalibration}} && {\text{multicalibration}} && {\text{multiaccuracy}}
        \arrow[shift left=2, from=1-1, to=1-3]
        \arrow[from=1-3, to=1-5]
        \arrow[shift left=2, dashed, from=1-3, to=1-1]
    \end{tikzcd}\]
    \caption{Implications (solid arrow: rounding not required, dashed arrow: rounding required)}
    \label{fig:definitions}
\end{figure}

\subsection{Relationships to Prior Definitions}
\label{sec:relationships-prior}

In this section, we explain the relationships of our new definitions to existing notions of multiaccuracy and multicalibration that appear in the algorithmic fairness literature. We emphasize that strict multicalibration has not been explicitly defined in prior works. Nevertheless, their algorithms actually achieve this stronger notion.

The original definitions of multiaccuracy and multicalibration from the algorithmic fairness literature roughly correspond to our notions of the same names when we restrict attention to binary outcomes and ${\cal C} \subseteq \{0, 1\}^{\cal X}$. To facilitate the comparison, we state the following two definitions:

\begin{definition}
\label{def:multiaccuracy-conditional}
Assume ${\cal O} = \{0, 1\}$ and ${\cal C} \subseteq \{0, 1\}^{\cal X}$. We say a predictor $\tilde{p}$ is \emph{conditionally $({\cal C}, \varepsilon)$-multiaccurate} if
    \begin{align*}
        &\text{for all }{\bf 1}_S \in {\cal C} \text{ such that } \Pr[i \in S] \ge \varepsilon, \\
        &\big| \Pr[o^*_i = 1 \mid i \in S] - \Pr[\tilde{o}_i = 1 \mid i \in S] \big| \le \varepsilon.
    \end{align*}
\end{definition}

\begin{definition}
\label{def:multicalibration-conditional}
Assume ${\cal O} = \{0, 1\}$ and ${\cal C} \subseteq \{0, 1\}^{\cal X}$. We say a predictor $\tilde{p}$ is \emph{conditionally $({\cal C}, \varepsilon)$-multicalibrated} if
\begin{align*}
    &\text{for all }{\bf 1}_S \in {\cal C} \text{ such that } \Pr[i \in S] \ge \varepsilon, \\
    &\text{there exists }S' \subseteq S\text{ such that } \Pr[i \in S' \mid i \in S] \ge 1 - \varepsilon \text{ and for all } v \in {\rm supp}(\tilde{p}_i \mid i \in S'), \\
    &\Big| \Pr[o^*_i = 1 \mid i \in S' \text{ and } \tilde{p}_i = v] - v \Big| \le \varepsilon.
\end{align*}
\end{definition}

These conditional versions of multiaccuracy and multicalibration closely resemble their original definitions in \cite{hebertjohnson2018multicalibration}. They capture the intuition that the predictions of a multiaccurate (resp. multicalibrated) predictor are approximately accurate in expectation (resp. calibrated) on each subpopulation under consideration. It is also possible to give a conditional version of our definition of \emph{strict} multicalibration:

\begin{definition}
\label{def:strict-multicalibration-conditional}
Assume that ${\cal O} = \{0, 1\}$ and ${\cal C} \subseteq \{0, 1\}^{\cal X}$. We say a predictor $\tilde{p}$ is \emph{conditionally and strictly $({\cal C}, \varepsilon)$-multicalibrated} if
\begin{align*}
    &\text{there exists }V \subset [0, 1] \text{ such that } \Pr[\tilde{p}_i \in V] \ge 1 - \varepsilon \text{ and for all }v \in V,\\
    &\text{for all }{\bf 1}_S \in {\cal C} \text{ such that } \Pr[i \in S \mid \tilde{p}_i = v] \ge \varepsilon, \\
    &\Big| \Pr[o^*_i = 1 \mid i \in S \text{ and } \tilde{p}_i = v] - v \Big| \le \varepsilon.
\end{align*}
\end{definition}

Comparing Definitions~\ref{def:multicalibration-conditional} and \ref{def:strict-multicalibration-conditional} gives insight into the qualitative difference between multicalibration and strict multicalibration. Specifically, strict multicalibration \emph{reverses the order of quantifiers} in the definition of multicalibration. If $\tilde{p}$ is a strictly multicalibrated predictor, then most of its $v$-level sets satisfy the fairness guarantee uniformly across all protected subpopulations. In other words, $\tilde{p}$ is ${\cal C}$-multiaccurate on its $v$-level set. In contrast, if $\tilde{p}$ is multicalibrated but not strictly so, then each level set $\tilde{p}^{-1}(v)$ may fail the test of calibration on some subpopulation, and perhaps a different one for each $v$ in the range of $\tilde{p}$. If one intends to use multicalibration as a certificate of \emph{fairness} for a prediction algorithm, then such behavior is clearly undesirable. 

The next theorem shows that when ${\cal C} \subseteq \{0, 1\}^{\cal X}$ is closed under complement, the definitions in this section are equivalent to those of the previous section up to a polynomial change in $\varepsilon$. The proof is based on a simple application of Markov's inequality that previously appeared in \cite{gopalan2022omnipredictors}.

\begin{figure}[tb]
    \centering
    \[\begin{tikzcd}[row sep=tiny]
        {\text{Multiaccuracy:}} & {\text{Definition~\ref{def:multiaccuracy}}} && {\text{Definition~\ref{def:multiaccuracy-conditional}}} \\
        {\text{Multicalibration:}} & {\text{Definition~\ref{def:multicalibration}}} && {\text{Definition~\ref{def:multicalibration-conditional}}} \\
        {\text{Strict Multicalibration:}} & {\text{Definition~\ref{def:strict-multicalibration}}} && {\text{Definition~\ref{def:strict-multicalibration-conditional}}}
        \arrow[shift left=1, from=1-2, to=1-4]
        \arrow[shift left=1, from=1-4, to=1-2]
        \arrow[shift left=1, from=2-2, to=2-4]
        \arrow[shift left=1, from=2-4, to=2-2]
        \arrow[shift left=1, from=3-2, to=3-4]
        \arrow[shift left=1, from=3-4, to=3-2]
    \end{tikzcd}\]
    \caption{Relationships to Prior Definitions}
    \label{fig:prior-definitions}
\end{figure}

\begin{theorem}
\label{thm:definition-equivalences}
Assume ${\cal O} = \{0, 1\}$ and ${\cal C} \subseteq \{0, 1\}^{\cal X}$ is closed under complement. For each arrow from Definition \textsf{A} to Definition \textsf{B} in Figure~\ref{fig:prior-definitions}, Definition \textsf{A} with parameters $({\cal C}, \varepsilon)$ implies Definition \textsf{B} with parameters $({\cal C}, \varepsilon^c)$ for sufficiently small $\varepsilon > 0$ and an absolute constant $c \in (0, 1)$.
\end{theorem}

\begin{proof}
We consider each of the six implications separately:

\vspace{0.25cm} \noindent {\bf (\ref{def:multiaccuracy} $\implies$ \ref{def:multiaccuracy-conditional}).} If $c = {\bf 1}_S \in {\cal C}$, then \[\left|\Pr[o_i^* = 1 \mid i \in S] - \Pr[\tilde{o}_i = 1 \mid i \in S]\right| = \frac{|\Pr[o^*_i = 1, i \in S] - \Pr[\tilde{o}_i = 1, i \in S]|}{\Pr[i \in S]}.\] The numerator of this fraction is at most $\delta\big((c_i, \tilde{o}_i), (c_i, o^*_i)\big)$, which, by Definition \ref{def:multiaccuracy}, is at most $\varepsilon$. If the denominator satisfies $\Pr[i \in S] \ge \sqrt{\varepsilon}$, then the value of the fraction can be at most $\sqrt{\varepsilon}$. Thus, Definition \ref{def:multiaccuracy} with parameter $\varepsilon$ implies Definition \ref{def:multiaccuracy-conditional} with parameter $\sqrt{\varepsilon}$.

\vspace{0.25cm} \noindent {\bf (\ref{def:multicalibration} $\implies$ \ref{def:multicalibration-conditional}).} For $c = {\bf 1}_S \in {\cal C}$ and $v \in \tilde{p}({\cal X})$, consider the \emph{multicalibration violation} \[\nabla_{S, v} = \Big| \Pr[o^*_i = 1 \mid i \in S \text{ and } \tilde{p}_i = v] - v \Big| = \delta\big(\tilde{o}_i, o^*_i \mid \tilde{p}_i = v, i \in S\big),\] and observe that \[\E[\nabla_{S, \tilde{p}_i}] = \delta\big((\tilde{o}_i, \tilde{p}_i), (o^*_i, \tilde{p}_i)  \mid i \in S\big) \le \frac{\delta\big((c_i, \tilde{o}_i, \tilde{p}_i), (c_i, o^*_i, \tilde{p}_i)\big)}{\Pr[i \in S]}.\] By Definition \ref{def:multicalibration}, the numerator of this fraction is at most $\varepsilon$. If the denominator satisfies $\Pr[i \in S] \ge \varepsilon^{1/3}$, it follows that $\E[\nabla_{S, \tilde{p_i}}] \le \varepsilon^{2/3}$. Let $S' = \{j \in S : \nabla_{S, \tilde{p}_j} \le \varepsilon^{1/3}\}$. By Markov's inequality, \[\Pr[i \notin S' \mid i \in S] \le \frac{\E[\nabla_{S, \tilde{p}_i}]}{\varepsilon^{1/3}} \le \varepsilon^{1/3}.\] Thus, Definition \ref{def:multicalibration} with parameter $\varepsilon$ implies Definition \ref{def:multicalibration-conditional} with parameter $\varepsilon^{1/3}$.

\vspace{0.25cm} \noindent {\bf (\ref{def:strict-multicalibration} $\implies$ \ref{def:strict-multicalibration-conditional}).} Let $V = \{v \in \tilde{p}({\cal X}) : \max_{c \in {\cal C}} \, \delta \big( (c_i, \tilde{o}_i), (c_i, o^*_i) \mid \tilde{p}_i \big) \le \varepsilon^{2/3} \}$. By Markov, \[\Pr[\tilde{p}_i \notin V] \le \frac{\E\left[\max_{c \in {\cal C}} \, \delta \big( (c_i, \tilde{o}_i), (c_i, o^*_i) \mid \tilde{p}_i \big)\right]}{\varepsilon^{2/3}},\] which is at most $\varepsilon^{1/3}$ by Definition \ref{def:strict-multicalibration}. For $c = {\bf 1}_S \in {\cal C}$ and $v \in V$, we have \[\nabla_{S, v} = \delta\big(\tilde{o}_i, o^*_i \mid \tilde{p}_i = v, i \in S\big) \le \frac{\delta \big( (c_i, \tilde{o}_i), (c_i, o^*_i) \mid \tilde{p}_i = v \big)}{\Pr[i \in S \mid \tilde{p}_i = v]}.\] The numerator of this fraction is at most $\varepsilon^{2/3}$ by construction of $V$. If the denominator satisfies $\Pr[i \in S \mid \tilde{p}_i = v] \ge \varepsilon^{1/3}$, then it follows that $\nabla_{S, v} \le \varepsilon^{1/3}$. Thus, Definition \ref{def:strict-multicalibration} with parameter $\varepsilon$ implies Definition \ref{def:strict-multicalibration-conditional} with parameter $\varepsilon^{1/3}$.

\vspace{0.25cm} \noindent {\bf (\ref{def:multiaccuracy-conditional} $\implies$ \ref{def:multiaccuracy}).} For $c = {\bf 1}_S \in {\cal C}$, either $\Pr[i \in S] \le \varepsilon$ or $\big|\Pr[\tilde{o}_i \mid i \in S] - \Pr[o^*_i \mid i \in S]\big| \le \varepsilon$ by Definition \ref{def:multiaccuracy-conditional}. Thus, their product satisfies \(\big|\Pr[i \in S, o^*_i = 1] - \Pr[i \in S, \tilde{o}_i = 1]\big| \le \varepsilon.\) Since ${\cal C}$ is closed under complement, the same inequality holds with ${\cal X} \setminus S$ in place of $S$. It follows that \[\delta\big((c_i, \tilde{o}_i), (c_i, o^*_i)\big) = \big|\Pr[i \in S, o^*_i = 1] - \Pr[i \in S, \tilde{o}_i = 1]\big| + \big|\Pr[i \notin S, o^*_i = 1] - \Pr[i \notin S, \tilde{o}_i = 1]\big|\] is at most $2\varepsilon$. Thus, Definition \ref{def:multiaccuracy-conditional} with parameter $\varepsilon$ implies Definition \ref{def:multiaccuracy} with parameter $2\varepsilon$.

\vspace{0.25cm} \noindent {\bf (\ref{def:multicalibration-conditional} $\implies$ \ref{def:multicalibration}).} For $c = {\bf 1}_S \in {\cal C}$, we want to upper bound \[\delta\big((c_i, \tilde{o}_i, \tilde{p}_i), (c_i, o^*_i, \tilde{p}_i)\big) = \delta\big((\tilde{o}_i, \tilde{p}_i), (o^*_i, \tilde{p}_i) \mid i \in S\big)\Pr[i \in S] + \delta\big((\tilde{o}_i, \tilde{p}_i), (o^*_i, \tilde{p}_i) \mid i \notin S\big)\Pr[i \notin S].\] We will bound the two terms on the right side separately. By Definition \ref{def:multicalibration-conditional}, there exists $S' \subseteq S$ such that $\Pr[i \notin S' \mid i \in S] \le \varepsilon$ and and $\nabla_{S, \tilde{p}_j} \le \varepsilon$ for all $j \in S'$. It follows that the first term satisfies \[\delta\big((\tilde{o}_i, \tilde{p}_i), (o^*_i, \tilde{p}_i) \mid i \in S\big)\Pr[i \in S] \le \Pr[i \in S \setminus S'] + \E[\nabla_{S, \tilde{p}_i} \mid i \in S'] \le 2\varepsilon.\] Since ${\cal C}$ is closed under complement, we also have \[\delta\big((\tilde{o}_i, \tilde{p}_i), (o^*_i, \tilde{p}_i) \mid i \notin S\big)\Pr[i \notin S] \le 2\varepsilon.\] Thus, Definition \ref{def:multicalibration-conditional} with parameter $\varepsilon$ implies Definition \ref{def:multicalibration} with parameter $4\varepsilon$.

\vspace{0.25cm} \noindent {\bf (\ref{def:strict-multicalibration-conditional} $\implies$ \ref{def:strict-multicalibration}).} Take $V$ as in Definition \ref{def:strict-multicalibration-conditional}. Then \[\E\left[\max_{c \in {\cal C}} \, \delta \big( (c_i, \tilde{o}_i), (c_i, o^*_i) \mid \tilde{p}_i \big)\right] \le \Pr[\tilde{p}_i \notin V] + \E\left[\max_{c \in {\cal C}} \, \delta \big( (c_i, \tilde{o}_i), (c_i, o^*_i) \mid \tilde{p}_i \big) \mid \tilde{p}_i \in V\right].\] By our choice of $V$, the first term satisfies $\Pr[\tilde{p}_i \notin V] \le \varepsilon$. For $v \in V$, it remains to upper bound \[\delta\big((c_i, \tilde{o}_i), (c_i, o^*_i) \mid \tilde{p}_i = v\big) = \Pr[i \in S \mid \tilde{p}_i = v]\nabla_{S,v} + \Pr[i \notin S \mid \tilde{p}_i = v]\nabla_{{\cal X} \setminus S, v}.\] We will bound the two terms on the right side separately. By our choice of $V$, either $\Pr[i \in S \mid \tilde{p}_i = v] \le \varepsilon$ or $\nabla_{S, v} \le \varepsilon$. Thus, their product satisfies \(\Pr[i \in S \mid \tilde{p}_i = v] \nabla_{S,v} \le \varepsilon.\) Similarly, \(\Pr[i \notin S \mid \tilde{p}_i = v] \nabla_{{\cal X} \setminus S,v}\le \varepsilon.\) Thus, Definition \ref{def:strict-multicalibration-conditional} with parameter $\varepsilon$ implies Definition \ref{def:strict-multicalibration} with parameter $3\varepsilon$.
\end{proof}

Until this point, we have focused on the case ${\cal O} = \{0, 1\}$ and ${\cal C} \subseteq \{0, 1\}^{\cal X}$. However, there are other definitions of multi-calibration in the algorithmic fairness literature that apply to more general sets ${\cal O}$ and ${\cal C}$. One particularly noteworthy extension, introduced in Gopalan \emph{et al.} \cite{gopalan2022omnipredictors}, applies to the case that ${\cal O} = \{0, 1\}$ and ${\cal C} \subseteq [0, 1]^{\cal X}$. We include a rephrased statement here:

\begin{definition}
 \label{def:covariance-multi-calibration}
Assume ${\cal O} = \{0, 1\}$ and ${\cal C} \subseteq [0, 1]^{\cal X}$. We say \emph{$\tilde{p}$ satisfies covariance-based $({\cal C}, \varepsilon)$-multi-calibration} if \[\E \big|{\rm Cov}(c_i, o^*_i \mid \tilde{p}_i)\big| \le \varepsilon\] for all $c \in {\cal C}$.
\end{definition}

Rather than measuring the statistical distance between $(c_i, \tilde{o}_i)$ and $(c_i, o^*_i)$ given $\tilde{p}_i$ as we do, this definition measures the absolute value of the covariance of $c_i$ and $o^*_i$ given $\tilde{p}_i$. We conclude this section by showing that our version of multi-calibration is at least as strong as this covariance-based version whenever both are applicable.

\begin{theorem}\label{thm:covariance-mc-equivalence}
Assume ${\cal O} = \{0, 1\}$ and ${\cal C} \subseteq [0, 1]^{\cal X}$. If $\tilde{p}$ is $({\cal C}, \varepsilon)$-multi-calibrated, then $\tilde{p}$ also satisfies covariance-based $({\cal C}, \varepsilon)$-multi-calibration.
\end{theorem}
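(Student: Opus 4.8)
The plan is to bound the conditional covariance ${\rm Cov}(c_i, o^*_i \mid \tilde p_i)$ by the \emph{per-level-set} statistical distance that multi-calibration already controls, using the modeled outcome $\tilde o_i$ from Section~\ref{sec:preliminaries} as a reference point. The starting observation is that $\tilde o_i$ has zero conditional covariance with $c_i$: since $\tilde o_i \mid i \sim {\rm Bern}(\tilde p_i)$ while $c_i$ and $\tilde p_i$ are deterministic functions of $i$, the tower rule gives $\E[\tilde o_i \mid c_i, \tilde p_i] = \tilde p_i$, so ${\rm Cov}(c_i, \tilde o_i \mid \tilde p_i) = 0$ and, more usefully, $\Pr[\tilde o_i = 1 \mid c_i, \tilde p_i = v] = v$.

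Now fix $c \in {\cal C}$ and a value $v$ in the support of $\tilde p_i$, and write $\bar c = \E[c_i \mid \tilde p_i = v]$. Using ${\rm Cov}(X,Y) = \E[(X - \E X)Y]$, conditioning on $c_i$, and subtracting the vanishing quantity $v\cdot\E[c_i - \bar c \mid \tilde p_i = v] = 0$, one rewrites
\[{\rm Cov}(c_i, o^*_i \mid \tilde p_i = v) = \E\big[(c_i - \bar c)\big(\Pr[o^*_i = 1 \mid c_i, \tilde p_i = v] - v\big) \,\big|\, \tilde p_i = v\big].\]
Since $v = \Pr[\tilde o_i = 1 \mid c_i, \tilde p_i = v]$ and both outcomes are $\{0,1\}$-valued, the inner difference equals $\pm\,\delta(o^*_i, \tilde o_i \mid c_i, \tilde p_i = v)$; bounding $|c_i - \bar c| \le 1$ (as $c_i, \bar c \in [0,1]$) and applying the triangle inequality yields $\big|{\rm Cov}(c_i, o^*_i \mid \tilde p_i = v)\big| \le \E\big[\delta(o^*_i, \tilde o_i \mid c_i, \tilde p_i) \mid \tilde p_i = v\big]$. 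Taking the expectation over $v \sim \tilde p_i$ and using the standard fact that the statistical distance between two joint laws sharing a marginal is the expectation over that marginal of the conditional statistical distances, I get $\E\big|{\rm Cov}(c_i, o^*_i \mid \tilde p_i)\big| \le \delta\big((c_i, o^*_i, \tilde p_i), (c_i, \tilde o_i, \tilde p_i)\big) \le \varepsilon$ by Definition~\ref{def:multi-calibration}, which is the claim.

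The computations are routine; the only delicate point is keeping the constant at $\varepsilon$ rather than $2\varepsilon$. A cruder argument, bounding the covariance by a $\pm 1$-valued test statistic against $\delta\big((c_i, o^*_i, \tilde p_i), (c_i, \tilde o_i, \tilde p_i)\big)$, would lose a factor of two. Avoiding this is exactly why I condition on the full pair $(c_i, \tilde p_i)$ and not merely on $\tilde p_i$: this lets me substitute $v$ for $\Pr[\tilde o_i = 1 \mid c_i, \tilde p_i = v]$ inside the expectation and then split the statistical distance marginal-by-marginal, while the crude estimate $|c_i - \bar c| \le 1$ absorbs the remaining factor at no cost.
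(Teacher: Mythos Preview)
Your proof is correct and reaches the same conclusion with the same constant $\varepsilon$, but the route differs from the paper's. The paper expands ${\rm Cov}(c_i, o^*_i \mid \tilde p_i)$ over the values $y$ of $c_i$, centers $c_i$ at $1/2$ (exploiting $|y - 1/2| \le 1/2$), and splits the result into two pieces: one comparing the joint laws $(c_i, o^*_i)$ and $(c_i, \tilde o_i)$ given $\tilde p_i$, the other comparing the marginal laws of $o^*_i$ and $\tilde o_i$ given $\tilde p_i$. Each piece is bounded by $\varepsilon/2$ and summed. You instead center $c_i$ at its conditional mean $\bar c$ and condition on the \emph{pair} $(c_i, \tilde p_i)$, which lets you absorb the entire covariance into a single expectation of $\delta(o^*_i, \tilde o_i \mid c_i, \tilde p_i)$; the identity $\E_Z[\delta(X, Y \mid Z)] = \delta((Z,X),(Z,Y))$ for common $Z$-marginal then recovers the multi-calibration bound in one stroke. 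Your argument is somewhat more streamlined (one term instead of two) and makes explicit the tower-rule fact $\Pr[\tilde o_i = 1 \mid c_i, \tilde p_i] = \tilde p_i$ that the paper uses only implicitly; the paper's $(y - 1/2)$ centering, on the other hand, is more symmetric in the values of $c_i$ and makes the $\varepsilon/2 + \varepsilon/2$ accounting visible.
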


\begin{proof}
Fix $c : {\cal X} \to [0, 1]$ and let ${\cal Y} \subseteq [0, 1]$ be the range of $c$. Since we assume ${\cal X}$ is finite, so is ${\cal Y}$. Some straightforward algebra shows that
\begin{align*}
    {\rm Cov}(c_i, o^*_i \mid \tilde{p}_i) = \sum_{y \in {\cal Y}} \left(y - \frac{1}{2}\right)\left(\begin{array}{rr}\Pr[c_i = y, o^*_i = 1 \mid \tilde{p}_i] & + \Pr[c_i = y \mid \tilde{p}_i]\Pr[\tilde{o}_i = 1 \mid \tilde{p}_i] \\ - \Pr[c_i = y, \tilde{o}_i = 1 \mid \tilde{p}_i] & - \Pr[c_i = y \mid \tilde{p}_i] \Pr[o^*_i = 1 \mid \tilde{p}_i]\end{array}\right).
\end{align*}
We will split the above expression into a sum of two parts and bound the expected absolute value of each. First, because $|y - 1/2| \le 1/2$ for each $y \in {\cal Y}$ and $\tilde{p}$ is $({\cal C}, \varepsilon)$-multi-calibrated, we have
\begin{align*}
    \E\left|\sum_{y \in {\cal Y}} \left(y - \frac{1}{2}\right)\Big(\Pr[c_i = y, o^*_i =  1 \mid \tilde{p}_i] - \Pr[c_i = y, \tilde{o}_i = 1 \mid \tilde{p}_i]\Big)\right| \le\frac{1}{2}\E[\delta \big( (c_i, o^*_i), (c_i, \tilde{o}_i) \mid \tilde{p}_i \big) ] \le \frac{\varepsilon}{2}.
\end{align*}
Using the additional fact that $\sum_{y \in {\cal Y}} \Pr[c_i = y \mid \tilde{p}_i] = 1$, we see that
\begin{align*}
    \E\left|\sum_{y \in {\cal Y}} \left(y - \frac{1}{2}\right)\Pr[c_i = y \mid \tilde{p}_i]\big( \Pr[\tilde{o}_i = 1 \mid \tilde{p}_i] - \Pr[o^* = 1 \mid \tilde{p}_i]\Big)\right| \le\frac{1}{2}\E[\delta \big( \tilde{o}_i, o^*_i \mid \hat{p}_i \big) ] \le \frac{\varepsilon}{2}.
\end{align*}
By the triangle inequality, we conclude that \[\E \big|{\rm Cov}(c_i, o^*_i \mid \tilde{p}_i)\big| \le \frac{\varepsilon}{2} + \frac{\varepsilon}{2},\] so $\tilde{p}$ satisfies covariance-based $({\cal C}, \varepsilon)$-multi-calibration.
\end{proof}

A few remarks are in order. Although our statistical distance-based definitions handle real-valued functions $c$, our algorithms for achieving them only work with discretized ranges. When $c$ has continuous range, $c_i$ can completely describe~$i$, and statistical closeness would then force $\pt$ to be essentially equal to $\ps$. At the same time, we {\em can} achieve covariance-based multicalibration for continuous functions $c$ with range $[0,1]$ by viewing $c_i$ as a probability distribution and replacing $c_i$ with a random instantiation $b_i\sim\Ber(c_i)$ (see Lemma~\ref{lem:sd-cov} in Section~\ref{sec:hardcore}). This gives rise to a weaker statistical closeness condition that is nonetheless sufficient for some applications.

\subsection{Omniprediction}
\label{sec:omniprediction}

In this section, we show how our new, statistical distance-based notions of multiaccuracy and multicalibration lend themselves naturally to applications by giving a remarkably simple extension of a state-of-the-art result in \emph{omniprediction} to the multiclass setting. We will also extend the concept of omniprediction to consider loss functions that may depend on information of individuals. We remark that while our theorems hold whether the functions $c \in {\cal C}$ are discrete or continuous, they are only operationalizable for discrete-valued functions.

A concept introduced by \cite{gopalan2022omnipredictors}, an \emph{omnipredictor} is a single predictor capable of minimizing a wide range of \emph{loss functions} $\ell \in {\cal L}$ while achieving competitive performance against a large class of \emph{hypotheses} $c \in {\cal C}$. Informally speaking, the original omniprediction theorem \cite{gopalan2022omnipredictors} showed that any predictor $\tilde{p}$ satisfying an appropriate multicalibration condition must also be an omnipredictor for all convex, Lipschitz, and bounded loss functions. However, a recent work \cite{gopalan2023loss} made significant strides by relaxing the assumptions of this theorem while strengthening its conclusion. The stronger version of the omniprediction theorem in \cite{gopalan2023loss} only assumes that $\tilde{p}$ is multiaccurate and calibrated (not multicalibrated) and establishes omniprediction even for non-convex loss functions.

In what follows, let ${\cal C}$, as usual, be a collection of functions $c : {\cal X} \to {\cal Y}$, which we now call \emph{hypothesis} functions. Also, let ${\cal L}$ be a collection of \emph{loss} functions $\ell : {\cal Y} \times {\cal O} \to [0, 1]$. Note that each loss function $\ell$ takes as input both an outcome $o \in {\cal O}$ and an \emph{action} $y \in {\cal Y}$. It outputs a real number between $0$ and $1$ measuring the cost of choosing action $y$ when the outcome is $o$. Also, consider the following notion of post-processing a prediction to minimize a loss function, which we have modified slightly from its form in \cite{gopalan2022omnipredictors, gopalan2023loss}.

\begin{definition}
    \label{def:post-processing}
    Say that ${\rm post}_\ell : \Delta{\cal O} \to {\cal Y}$ is a \emph{post-processing function} for the loss $\ell \in {\cal L}$ if \[{\rm post}_\ell(v) \in \argmin_{y \in {\cal Y}} \E_{o \sim v}[\ell(y, o)]\] for each distribution $v \in \Delta{\cal O}$. For ease of notation, we also write $v^\ell = {\rm post}_\ell(v)$.
\end{definition}

We now recall the definition of an omnipredictor.

\begin{definition}
    \label{def:omnipredictor}
    Say that $\tilde{p} : {\cal X} \to \Delta {\cal O}$ is a $({\cal L}, {\cal C}, \varepsilon)$-omnipredictor if \[\E[\ell(\tilde{p}^\ell_i, o^*_i)] \le \E[\ell(c_i, o^*_i)] + \varepsilon\] for all $\ell \in {\cal L}$ and $c \in {\cal C}$.
\end{definition}

In order to state the theorem of interest, we first emphasize an important special case of the definition of multicalibration in Section \ref{sec:definitions}:

\begin{definition}
\label{def:calibration}
A predictor $\tilde{p}$ is \emph{$\varepsilon$-calibrated} if $(\tilde{o}_i, \tilde{p}_i) \approx_\varepsilon (o^*_i, \tilde{p}_i)$.
\end{definition}

Indeed, a predictor is $\varepsilon$-calibrated if and only if it is $(\{{\bf 1}_{\cal X}\}, \varepsilon)$-multicalibrated. We these definitions in hand, we are ready to present the main proof of this section. For clarity, we will first consider the special case that $\varepsilon = 0$.

The following two lemmas will be of use. The first says that calibrated predictions, even when post-processed, incur the same loss on real outcomes as on modeled outcomes.

\begin{lemma}
    \label{thm:calibration-omniprediction}
    If $\tilde{p}$ is $0$-calibrated, $f : \Delta{\cal O} \to {\cal Y}$ is any function, and $\ell \in {\cal L}$, then \[\E\left[\ell(f(\tilde{p}_i), o^*_i)\right] = \E\left[\ell(f(\tilde{p}_i), \tilde{o}_i)\right].\]
\end{lemma}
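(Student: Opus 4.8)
The plan is to recognize both quantities as expectations of one and the same deterministic function, evaluated on two pairs of random variables that $0$-calibration forces to be identically distributed.

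First I would package the post-processing map and the loss together: define $g : \Delta{\cal O} \times {\cal O} \to [0,1]$ by $g(v, o) = \ell(f(v), o)$. Then $\ell(f(\tilde p_i), o^*_i) = g(\tilde p_i, o^*_i)$ and $\ell(f(\tilde p_i), \tilde o_i) = g(\tilde p_i, \tilde o_i)$, so it suffices to show $\E[g(\tilde p_i, o^*_i)] = \E[g(\tilde p_i, \tilde o_i)]$. Next I would unpack Definition \ref{def:calibration} with $\varepsilon = 0$: the relation $(\tilde o_i, \tilde p_i) \approx_0 (o^*_i, \tilde p_i)$ says that the statistical distance between these two pairs is zero, and since both are supported on the finite set ${\cal O} \times \tilde p({\cal X})$, this means $\Pr[(\tilde o_i, \tilde p_i) \in B] = \Pr[(o^*_i, \tilde p_i) \in B]$ for every event $B$; that is, the pairs $(\tilde p_i, \tilde o_i)$ and $(\tilde p_i, o^*_i)$ have exactly the same joint law (reordering coordinates is harmless).

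Finally, since $g$ is a fixed deterministic function, applying it to two identically distributed inputs yields identically distributed outputs, so in particular their expectations agree; both expectations are finite because $g$ takes values in $[0,1]$. Concretely, one can write $\E[g(\tilde p_i, \tilde o_i)] = \sum_{v, o} g(v, o)\Pr[\tilde p_i = v, \tilde o_i = o] = \sum_{v,o} g(v,o)\Pr[\tilde p_i = v, o^*_i = o] = \E[g(\tilde p_i, o^*_i)]$, which is the desired identity. I do not anticipate any real obstacle here; the only points requiring a word of care are that statistical distance zero on finite support upgrades to equality of distributions, and that $g$ depends on $\tilde o_i$ (resp. $o^*_i$) only through the joint pair with $\tilde p_i$, so that calibration — which controls exactly that joint pair — is the right hypothesis to invoke.
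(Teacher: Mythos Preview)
Your proposal is correct and follows exactly the same idea as the paper: the paper's one-line proof simply observes that $0$-calibration means $(\tilde{o}_i, \tilde{p}_i)$ and $(o^*_i, \tilde{p}_i)$ have the same joint distribution, from which the equality of expectations of any function of the pair (in particular $g(v,o)=\ell(f(v),o)$) is immediate. Your version just unpacks this observation in more detail.
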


\begin{proof}  $0$-calibration means that $(\tilde{o}_i, \tilde{p}_i)$ and $(o^*_i, \tilde{p}_i)$ have the same joint distribution.
\end{proof}

The second lemma says that $c : {\cal X} \to {\cal Y}$ incurs the same loss on real outcomes as on modeled outcomes if the predictor is $({\cal C}, 0)$-multiaccurate.

\begin{lemma}
    \label{thm:multiaccuracy-omniprediction}
    If $\tilde{p}$ is $({\cal C}, 0)$-multiaccurate, then \[\E[\ell(c_i, o_i^*)] = \E[\ell(c_i, \tilde{o}_i)]\] for all $\ell \in {\cal L}$ and $c \in {\cal C}$.
\end{lemma}

\begin{proof} $({\cal C}, 0)$-multiaccuracy means that $(c_i, o_i^*)$ and $(c_i, \tilde{o}_i)$ have the same joint distribution.
\end{proof}

We now state and prove a rephrased version of one of the main theorems of \cite{gopalan2023loss} in terms of our new language. In \cite{gopalan2023loss}, the assumption of the theorem is that $\tilde{p}$ is multiaccurate with respect to all $[0,1]$-bounded functions of the level sets of each $c \in {\cal C}$. In our statement of the theorem, these criteria are encapsulated naturally by our statistical distance-based formulation of multiaccuracy:

\begin{theorem}
    \label{thm:simple-omniprediction}
    If $\tilde{p}$ is $\varepsilon_1$-calibrated and $({\cal C}, \varepsilon_2)$-multiaccurate, then $\tilde{p}$ is an $({\cal L}, {\cal C}, \varepsilon_1 + \varepsilon_2)$-omnipredictor.
\end{theorem}

\begin{proof} First consider the case that $\varepsilon_1 = \varepsilon_2 = 0$. Since $\tilde{p}^\ell_i$ is a function of $\tilde{p}_i$, we have
    \begin{align*}
        \E[\ell(\tilde{p}^\ell_i, o^*_i)] &= \E[\ell(\tilde{p}^\ell_i, \tilde{o}_i)] &\text{by Lemma \ref{thm:calibration-omniprediction},} \\
        &\le \E[\ell(c_i, \tilde{o}_i)] &\text{by Definition \ref{def:post-processing},} \\
        &= \E[\ell(c_i, o^*_i)] &\text{by Lemma \ref{thm:multiaccuracy-omniprediction}.}
    \end{align*}
    In the general case, standard properties of statistical distance ensure that the two expectation terms in Lemma \ref{thm:calibration-omniprediction} now differ by at most $\varepsilon_1$, and the two expectation terms in Lemma \ref{thm:multiaccuracy-omniprediction} now differ by at most $\varepsilon_2$. Here, we have used the assumption that the range of each $\ell \in {\cal L}$ is bounded between $0$~and~$1$. Adding these two slack terms to the first and third lines, respectively, of the above calculation yields \[\E[\ell(\tilde{p}^\ell_i, o^*_i)] \le \E[\ell(c_i, o^*_i)] + \varepsilon_1 + \varepsilon_2,\] so $\tilde{p}$ is an $({\cal L}, {\cal C}, \varepsilon_1 + \varepsilon_2)$-omnipredictor.
\end{proof}

For the sake of comparison, we also include a version of the original omniprediction proof of \cite{gopalan2022omnipredictors} in our current language. For simplicity, we only state the case of $\varepsilon_1 = \varepsilon_2 = 0$ but remark that additional Lipschitzness assumptions on ${\cal L}$ would be required in the case of $\varepsilon_1, \varepsilon_2 > 0$.

\begin{theorem}
    Assume ${\cal O} = \{0, 1\}$ and ${\cal Y} \subseteq [0, 1]$. If $\tilde{p}$ satisfies covariance-based $({\cal C}, 0)$-multi-calibrated and each $\ell \in {\cal L}$ is convex in its first input, then $\tilde{p}$ is an $({\cal L}, {\cal C}, 0)$-omnipredictor.
\end{theorem}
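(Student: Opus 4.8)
The plan is to prove the $\varepsilon = 0$ statement by the classical level‑set argument, now powered by the observation that covariance‑based $({\cal C},0)$‑multi‑calibration makes $c_i$ conditionally uncorrelated with $o^*_i$ on each level set of $\tilde p$, together with convexity of the loss in its (action) argument. Fix $\ell \in {\cal L}$ and $c \in {\cal C}$. Since ${\cal X}$ is finite, $\tilde p$ takes finitely many values, so I would condition on $\tilde p_i = v$ and write $\bar c_v = \E[c_i \mid \tilde p_i = v]$, $q_v = \E[o^*_i \mid \tilde p_i = v]$, and $g_v(y) = (1-q_v)\ell(y,0)+q_v\ell(y,1)$, which is convex in $y$ as a positive combination of the convex maps $\ell(\cdot,0)$ and $\ell(\cdot,1)$.

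The first key step: $\E\big|{\rm Cov}(c_i, o^*_i \mid \tilde p_i)\big| = 0$ forces ${\rm Cov}(c_i, o^*_i \mid \tilde p_i = v) = 0$ almost surely, and since $o^*_i$ is binary this is equivalent to $\E[c_i \mid o^*_i = b, \tilde p_i = v] = \bar c_v$ for each $b \in \{0,1\}$ of positive conditional probability. Averaging over $b$ and applying Jensen's inequality to $\ell(\cdot,0)$ and $\ell(\cdot,1)$ then gives
\[\E[\ell(c_i, o^*_i)\mid \tilde p_i = v] = \sum_{b\in\{0,1\}}\Pr[o^*_i = b\mid \tilde p_i = v]\,\E[\ell(c_i,b)\mid o^*_i = b, \tilde p_i = v] \;\ge\; g_v(\bar c_v).\]
The second key step uses that the level set is calibrated, $q_v = v$, so $g_v(y) = \E_{o\sim v}[\ell(y,o)]$ is exactly the objective minimized by ${\rm post}_\ell$; hence $g_v(\tilde p^\ell_i) \le g_v(\bar c_v)$, and since $\tilde p^\ell_i$ is a function of $\tilde p_i$ the left side equals $\E[\ell(\tilde p^\ell_i, o^*_i)\mid \tilde p_i = v]$. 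Chaining the two inequalities on every level set and taking the expectation over $v\sim\tilde p_i$ yields $\E[\ell(\tilde p^\ell_i, o^*_i)] \le \E[\ell(c_i, o^*_i)]$, i.e., $\tilde p$ is an $({\cal L},{\cal C},0)$‑omnipredictor.

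The step I expect to require the most care is the appeal to calibration ($q_v = v$): covariance‑based multi‑calibration with respect to ${\cal C}$ alone does not obviously deliver it, since plugging a constant function into the covariance condition is vacuous, so one must either fold calibration into the multi‑calibration hypothesis (as is effectively done in \cite{gopalan2022omnipredictors}) or simply assume $\tilde p$ is $0$‑calibrated. A secondary subtlety is that the hypothesis gives only ${\cal Y}\subseteq[0,1]$, so $\bar c_v$ could a priori leave ${\cal Y}$; this is harmless when ${\cal Y}$ is an interval, and in general one replaces the comparison $g_v(\tilde p^\ell_i)\le g_v(\bar c_v)$ by the first‑order (variational) optimality inequality for the minimizer $\tilde p^\ell_i$ of the convex function $g_v$ over ${\cal Y}$. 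Everything else is routine bookkeeping: finiteness of the level sets, the binary‑outcome identity for conditional expectations, and linearity of expectation.
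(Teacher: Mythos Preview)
Your proposal is correct and takes essentially the same approach as the paper: both arguments hinge on (i) calibration to identify the conditional law of $o^*_i$ given $\tilde p_i$ with $\tilde p_i$ itself, (ii) optimality of ${\rm post}_\ell$ against the comparator $\bar c_v = \E[c_i\mid \tilde p_i]$, (iii) the zero-covariance condition to equate $\E[c_i\mid \tilde p_i]$ with $\E[c_i\mid \tilde p_i, o^*_i]$, and (iv) Jensen applied to $\ell(\cdot,0)$ and $\ell(\cdot,1)$. The only difference is cosmetic: the paper writes these four steps as a single global chain of (in)equalities invoking its calibration lemma twice, whereas you unpack the same chain on each level set $\{\tilde p_i = v\}$ before averaging. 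You also correctly flag the two points the paper's proof glosses over---that calibration ($q_v = v$) must be assumed alongside the covariance condition, and that $\bar c_v$ need not lie in ${\cal Y}$ when ${\cal Y}$ is not an interval---both of which are genuine and handled the same way in the original source \cite{gopalan2022omnipredictors}.
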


\begin{proof}
    Rephrasing the proof from \cite{gopalan2022omnipredictors} yields:
    \begin{align*}
        \E[\ell(\tilde{p}^\ell_i, o^*_i)] &= \E[\ell(\tilde{p}^\ell_i, \tilde{o}_i)] &\text{by Lemma \ref{thm:calibration-omniprediction}} \\
        &\le \E[\ell(\E[c_i\mid \tilde{p}_i], \tilde{o}_i)] &\text{by Definition \ref{def:post-processing}} \\
        &= \E[\ell(\E[c_i\mid \tilde{p}_i], o^*_i)] &\text{by Lemma \ref{thm:calibration-omniprediction}} \\
        &= \E[\ell(\E[c_i\mid \tilde{p}_i, o^*_i], o^*_i)] &\text{by Definition \ref{def:covariance-multi-calibration}} \\
        &\le \E[\ell(c_i, o^*_i)] &\text{by convexity of $\ell(-, 0)$ and $\ell(-, 1)$.}
    \end{align*}
    In the second equality, we use the fact that $\E[c_i \mid \tilde{p}_i]$ is a function of $\tilde{p}_i$. In the third equality, we used the fact that  $\E[c_i \mid \tilde{p}_i] = \E[c_i \mid \tilde{p}_i, o_i^*]$ (i.e., that $c_i$ and $o_i^*$ are conditionally uncorrelated given $\tilde{p}_i$). This is by definition of covariance-based $({\cal C}, 0)$-multi-calibration.
\end{proof}

\subsubsection{Loss Functions Dependent on Information of Individuals}
So far the concept of omniprediction considers loss functions $\ell$ depending on an action $y \in {\cal Y}$ and outcome $o \in {\cal O}$. In this section, we extend the notion to consider loss functions that may depend on additional information about an individual, that is, the loss $\ell(y, o, z)$ also depend on information $z \in {\cal T}$ of an individual $i$ w.r.t.\ whom the action $y$ and outcome $o$ are associated with. Here, we can consider different relevant information $z$ about an individual captured by a class of functions $\mathcal{Z} = \{{z}: {\cal X} \rightarrow {\cal T}\}$. Now omniprediction w.r.t.\ loss functions ${\cal L}$, hypotheses ${\cal C}$, and auxiliary information functions $\mathcal{Z}$ should guarantee that the predictor $\tilde p$ has competitive performance against hypotheses in ${\cal C}$, in minimizing a large class of losses w.r.t.\ expressive information of individuals. 

To formally state the result, we first slightly rephrase the setting of omniprediction. In prior works and above, we compare the loss incurred using an action, $y_i = {\rm post}_\ell(\tilde p_i)$, derived from postprocessing the predicted outcome distribution $\tilde p_i$, with loss incurred using actions $c_i$ prescribed by a hypothesis $c \in {\cal C}$. The postprocessing function ${\rm post}$ (Definition~\ref{def:post-processing}) ensures that $y_i$ is Bayes optimal w.r.t.\ the composed function $\ell'(p, o) = \ell({\rm post}_\ell(p), o)$, in the sense that the expectation of $\ell'$ on input $o$ drawn from a distribution $p^*$ is minimized when 
when the true probability distribution $p^*$ is given as the input --- namely, $p^* = \argmin_{p \in \Delta{\cal O}} \E_{o \sim p^*}[\ell'(p, o)]$. We say that such loss functions are {\em Bayes optimal}. Then we can equivalently state previous results w.r.t.\ a class of Bayes optimal loss functions ${\cal L'}$  and a hypothesis class ${\cal C}$ mapping individuals in ${\cal X}$ to outcome distributions $\Delta{\cal O}$ (instead of actions). Next, we describe our extension formally in this setting. 

\begin{definition}\label{def:Bayes-optimal-loss}
     Let ${\cal L}$ be a collection of \emph{loss} functions $\ell : {\cal Y} \times { \cal O} \times {\cal T} \to [0, 1]$. We say that $\cal L$ is {\em $\varepsilon$-Bayes optimal} if every $\ell \in {\cal L}$ satisfies that 
     \begin{align*}
         \forall z \in {\cal T}, \ p^*, p \in \Delta{\cal O}, \quad    \E_{o \sim p^*}[\ell(p^*, o, z)] \le \E_{o \sim p^*}[\ell(p, o, z)] + \varepsilon
     \end{align*}
\end{definition}

We now extend the definition of an omnipredictor. Let ${\cal C}$ be a collection of hypothesis from ${\cal X}$ to  $\Delta{\cal O}$, and ${\cal Z}$ be a collection of auxiliary information functions from ${\cal X}$ to ${\cal T}$.
\begin{definition}
    \label{def:omnipredictor-extended}
    Say that $\tilde{p} : {\cal X} \to \Delta {\cal O}$ is a $({\cal L}, {\cal C}, {\cal Z}, \varepsilon)$-omnipredictor if \[\E[\ell(\tilde{p}_i, o^*_i, z_i)] \le \E[\ell(c_i, o^*_i, z_i)] + \varepsilon\] for all $\ell \in {\cal L}$, $c \in {\cal C}$, and $z \in {\cal T}$.
\end{definition}
We show that our simple proof of omniprediction in the previous section can be easily adapted to accommodate the richer class of loss functions we consider. The main difference is that we now need the predictor $\tilde p$ to be multicalibrated w.r.t.\ ${\cal Z}$ and multiaccurate w.r.t.\ ${\cal C}||{\cal Z} = \{c||z : c \in {\cal C}, z\in {\cal Z}\}$.  
\begin{theorem}
    \label{thm:omniprediction-extended}
    Let ${\cal L}$ be a collection of $\varepsilon_3$-Bayes optimal loss functions. 
    If $\tilde{p}$ is $({\cal Z}, \varepsilon_1)$-multicalibrated and $({\cal C}||{\cal Z}, \varepsilon_2)$-multiaccurate, then $\tilde{p}$ is an $({\cal L}, {\cal C}, {\cal Z}, \varepsilon_1 + \varepsilon_2+\varepsilon_3)$-omnipredictor.
\end{theorem}

\begin{proof} First consider the case that $\varepsilon_1 = \varepsilon_2 = \varepsilon_3 = 0$. We have
    \begin{align*}
        \E[\ell(\tilde{p}_i, o^*_i, z_i)] 
        &= \E[\ell(\tilde{p}_i, \tilde{o}_i, z_i)] &\text{by Definition \ref{def:multicalibration} and $\tilde p$ is multicalibrated w.r.t.\ ${\cal Z}$,} \\
        &\le \E[\ell(c_i, \tilde{o}_i, z_i)] &\text{by Definition \ref{def:Bayes-optimal-loss},} \\
        &= \E[\ell(c_i, o^*_i, z_i)] &\text{by Lemma \ref{thm:multiaccuracy-omniprediction} and $\tilde p$ is multiaccruate w.r.t.\ ${\cal C}||{\cal Z}$.}
    \end{align*}
    In the general case, standard properties of statistical distance ensure that the first and third equality of expectation terms  differ by at most $\varepsilon_1$ and $\varepsilon_2$ (using the fact that the range of each $\ell \in {\cal L}$ is bounded between $0$~and~$1$). The right hand side of the second inequality has an additional $\varepsilon_3$ term by definition of Bayes optimality of $\cal L$. Therefore, we obtain  \[\E[\ell(\tilde{p}_i, o^*_i, z_i)] \le \E[\ell(c_i, o^*_i, z_i)] + \varepsilon_1 + \varepsilon_2+\varepsilon_3,\] so $\tilde{p}$ is an $({\cal L}, {\cal C}, {\cal Z}, \varepsilon_1 + \varepsilon_2+\varepsilon_3)$-omnipredictor.
\end{proof}

\section{Leakage Simulation and Outcome Indistinguishability}
\label{sec:leakage-simulation-and-oi}

The interesting relationship between multiaccuracy and the \emph{leakage simulation lemma}, or \emph{simulating auxiliary inputs} problem, in cryptography on the one hand allows us to obtain the first lower bound on the complexity of multiaccurate predictors. On the other hand, it inspires us to ask whether the stronger notion of multicalibration yields stronger consequences. We show this is the case, deriving a multicalibration-based proof of a hardcore lemma for {\em real-valued} functions.
 
Originating in the field of \emph{leakage-resilient cryptography}~\cite{dziembowski2008leakage}, the problem of leakage simulation defined by \cite{jetchev2014how} is as follows. Given correlated random variables $(X, O)$ on a set ${\cal X} \times {\cal O}$ and a collection of \emph{distinguisher functions} ${\cal A} = \{ A: {\cal X} \times {\cal O} \to \{0, 1\}\}$, the objective is to construct a low-complexity (w.r.t.\ ${\cal A}$) \emph{simulator} $h : {\cal X} \to \Delta{\cal O}$ such that no function in ${\cal A}$ can distinguish a sample $(X, O)$ from the true joint distribution from a simulated sample $(X, \tilde O)$, where $X$ is sampled from the true marginal distribution over ${\cal X}$ and $\tilde O$ is sampled according to the simulated distribution $h(X)$.

Observe that the leakage simulation problem can also be viewed as an equivalent reformulation of the problem of constructing a predictor satisfying \emph{no-access outcome indistinguishability} proposed in \cite{dwork2021outcome}, which they showed is equivalent to {\em multiaccuracy}. In the most general form, outcome indistinguishability studies a family ${\cal A}$ of \emph{distinguishers}, which are functions $A : {\cal X} \times {\cal O} \times (\Delta {\cal O})^{\cal X} \to [0, 1]$ that take as input an individual, an outcome, and a predictor and attempts to distinguish genuine outcomes $o^*_i$ from modeled outcomes $\tilde{o}_i$. The definition of outcome indistinguishability requires that the distinguishing advantage shall be small. Formally, 

\begin{definition}
\label{def:oi}
A predictor $\tilde{p} : {\cal X} \to \Delta{\cal O}$ is \emph{$({\cal A}, \varepsilon)$-outcome-indistinguishable} if for all $A \in {\cal A}$, \[\Big|\E\left[A(i, \tilde{o}_i, \tilde{p})\right] - \E\left[A(i, o^*_i, \tilde{p})\right]\Big| \le \varepsilon.\]
\end{definition}

In the above definition, the distinguisher $A$ has white-box access to the predictor $\tilde p$, which gives the most information of $\tilde p$. By restricting the access of $A$ to the predictor $\tilde p$ in different manners, Dwork \emph{et al.} \cite{dwork2021outcome} obtained a hierarchy of definitions of outcome indistinguishability. Of particular interest to us will be their notions of \emph{no-access} outcome indistinguishability and \emph{sample-access} outcome indistinguishability, which are shown to be equivalent to multiaccuracy and multicalibration respectively. In no-access outcome indistinguishability, the distinguisher cannot access $\tilde{p}$ at all, that is, every distinguisher $A \in {\cal A}$ takes the form $A(j, o, \tilde{p}) = A'(j, o)$. In \emph{sample-access} outcome indistinguishability, the distinguisher is only provided the output of $\tilde{p}$ on the individual under consideration---equivalently, each distinguisher $A \in {\cal A}$ takes the form $A(j, o, \tilde{p}) = A'(j, o, \tilde{p}_j)$ for some function $A' : {\cal X} \times {\cal O} \times (\Delta {\cal O}) \to [0, 1]$.

We observe that no access outcome indistinguishability is equivalent to leakage simulation: $(i, o^*_i)$ is the analogue of $(X, O)$; the predictor $\tilde p$ is analogous to the simulator $h$ while $\tilde o_i$  is analogous to $\tilde O$ (sampled according to $h(X)$ and $\tilde p_i$ respectively). The goals are also identical: no distinguisher $A$ in the class consider can tell part $(i,o^*_i)$ from $(i, \tilde o_i)$, or $(X, O)$ from $(x, \tilde O)$. 
In addition, algorithms in~\cite{trevisan2009regularity,jetchev2014how} for achieving leakage simulation are very similar to the multiaccuracy algorithm in~\cite{hebertjohnson2018multicalibration}.

Leveraging this equivalence immediately yields the first lower bound for the complexity of no-access outcome indistinguishability predictors relative to ${\cal A}$.  This follows from the result  in~\cite{chen2018complexity} that, relative to $\cal A$, the  complexity of a simulator is at least $\Omega(\ell \varepsilon^{-2})$, namely, the simulator makes at last $\Omega(\ell \varepsilon^{-2})$ black-box calls to some distinguishers in $\cal A$. 
Note that the lower bound only holds for simulators that are restricted to black-box use of the distinguishers and satisfy a restriction that, when invoked on input $X$, they only make black-box calls to the distinguishers on the same input $X$. All the leakage simulation, multiaccuracy, and multicalibration algorithms in the literature satisfy this restriction.  
Therefore, we arrive at that predictors satisfying no-access outcome indistinguishability (under same constraints) also have relative complexity $\Omega(\ell \varepsilon^{-2})$ w.r.t.\ distinguishers in $\cal A$. The equivalence between no-access outcome indistinguishability and multiaccuracy further tells us that the same relative complexity lower bound holds for multiaccurate predictors w.r.t.\ $\cal C$ (the analogue of $\cal A$). Finally, since multicalibration is stronger, the same lower bound extends to multicalibrated predictors.

This forecloses the existence of predictors that are smaller than the distinguisher yet fools them all (subject to the above restriction).  In other words, there is no predictor analogous to a pseudo-random generator that fools all polynomial-time tests.

Beyond the lower bound, the connection between no-access outcome indistinguishability and leakage simulation inspired us in two more directions. First, the work of~\cite{chen2018complexity} presented a leakage simulation algorithm via no-regret learning. Inspired by their algorithm, we present in Section~\ref{sec:no-regret-and-complexity} a general algorithmic framework for achieving sample-access outcome indistinguishability, equivalently multi-calibration, also via no-regret learning. Our framework  unifies algorithms in prior works. Second, inspired by the connection between leakage simulation and the hardcore lemma for Boolean functions, we ask whether the stronger notion of multicalibration yields stronger consequences. Indeed, we present in Section~\ref{sec:hardcore} a multicalibration-based proof of hardcore lemma for real-valued functions.  

\section{Sample Complexity and No-Regret Learning}
\label{sec:no-regret-and-complexity}

Various notions of multi-group fairness, including multiaccuracy, multicalibration, strict multicalibration (Definitions~\ref{def:multiaccuracy}, \ref{def:multicalibration}, and \ref{def:strict-multicalibration}), and low-degree multicalibration~\cite{gopalan2022lowdegree} are implied by the notion of \emph{outcome indistinguishability} of \cite{dwork2021outcome} with respect to different classes of adversaries. Thus, to achieve these notions it suffices to design algorithms for \emph{outcome indistinguishability}. On this front, our contributions are twofold. First, in Section~\ref{sec:algorithms}, we present an algorithmic template that unifies prior algorithms for achieving outcome indistinguishability through the lens of no-regret learning (see Algorithms~\ref{alg:no-regret} and~\ref{alg:mirror-descent}). Second, we show in Section~\ref{sec:complexity} that an instantiation of our algorithmic template yields an improved upper bound on sample complexity for achieving multicalibration in the multiclass setting and for low-degree multicalibration.

\subsection{Outcome Indistinguishability via No-Regret Learning}
\label{sec:algorithms}

In this section, we present an algorithmic template that unifies two existing algorithms for achieving outcome indistinguishability (and hence multicalibration), through the lens of no-regret learning. In Section \ref{sec:complexity}, we show that an instantiation of the template yields algorithms with improved upper bounds on the sample complexity of multicalibration.

The two algorithms under consideration are similar in that both make iterative updates to an arbitrary initial predictor $\tilde{p}$. However, they differ in their implementations of the update rule. The first update rule selects $A \in {\cal A}$ that successfully distinguishes $\tilde{p}$ from $p^*$ with advantage $\varepsilon$, and makes an additive update to $\tilde{p}$ resembling projected gradient descent \cite{hebertjohnson2018multicalibration, dwork2021outcome}. The second update rule also selects a distinguisher, but instead updates $\tilde{p}$ in a multiplicative manner \cite{kim2019multiaccuracy}.

To establish the claimed connection, we will first show in Section \ref{sec:no-regret} that the described algorithmic template can be instantiated with any update rule based on an algorithm with a \emph{no-regret guarantee}. We will then discuss in Section \ref{sec:mirror-descent} how projected gradient descent and multiplicative weight updates can be viewed as instances of \emph{mirror descent}, an algorithm with exactly the required no-regret guarantee. One benefit of this unified presentation via no-regret learning is that prior works require separate analyses for the two algorithms, but we only need a single, very simple, proof, relying only on the no-regret guarantee. We will also examine the relative merits of using projected gradient descent versus multiplicative weight updates for this role. (In brief, multiplicative weight updates work better in the multiclass setting, but projected gradient descent is more robust to a poor initialization.)

\subsubsection{No-Regret Updates}
\label{sec:no-regret}

We first recall the general framework of no-regret online learning. Consider $T$ rounds of gameplay between two players, called the \emph{decision-maker} and the \emph{adversary}. In each round, the decision-maker chooses a \emph{mixed strategy} (i.e., a probability distribution) over a finite set ${\cal O}$ of available \emph{pure strategies}. In response, the adversary adaptively chooses a \emph{loss function}, which assigns an arbitrary numeric loss between $0$ and $1$ to each to each available pure strategy. At the end of the round, the decision-maker incurs a penalty equal to the expected loss of its chosen mixed strategy, while learning the entire description of the loss function (not just the penalty it incurred).

Intuitively, a decision-making algorithm satisfies a \emph{no-regret guarantee} if the overall expected loss of a decision-maker employing the algorithm is no worse than the penalty the decision-maker would have incurred by playing any particular pure strategy against the same sequence of loss functions.

We now state a formal definition of no-regret learning. For simplicity, we focus our attention on decision-making algorithms whose strategy in round $t + 1$ is completely determined by what happened in round $t$, but this assumption is easy to relax.

\begin{definition}
    A \emph{decision-making algorithm} is specified by a distribution $D^{(1)} \in \Delta {\cal O} $ and a function \({\sf update} : \Delta {\cal O} \times [0, 1]^{\cal O}\to \Delta {\cal O} \). We say it \emph{satisfies a $(T, \varepsilon)$-no-regret guarantee} if \[\frac{1}{T}\sum_{t=1}^T \E_{o \sim D^{(t)}}\left[L^{(t)}(o)\right] \le \frac{1}{T}\sum_{t=1}^T L^{(t)}(o^*) + \varepsilon\] for every pure strategy $o^* \in {\cal O}$, every sequence of loss functions $L^{(1)}, \ldots, L^{(T)} \in [0, 1]^{\cal O}$, and the sequence of distributions $D^{(t)} \in \Delta {\cal O}$ given by \(D^{(t + 1)} = {\sf update}(D^{(t)}, L^{(t)}).\)
\end{definition}

\begin{example} The \emph{projected gradient descent} and \emph{multiplicative weight update} rules are
\begin{align*}
    D^{(t+1)} &= {\rm proj}_{\Delta{\cal O}}\left(D^{(t)} - \eta L^{(t)}\right), \\
    D^{(t+1)}(o) &= \frac{D^{(t)}(o)e^{-\eta L^{(t)}(o)}}{\sum_{o' \in {\cal O}} D^{(t)}(o')e^{-\eta L^{(t)}(o')}},
\end{align*}
respectively, for a parameter $\eta > 0$ called the \emph{step size}.
\end{example}

We will state the standard no-regret guarantees for these update rules, along with appropriate initializations, in Section \ref{sec:mirror-descent}.

Algorithm \ref{alg:no-regret} shows how to achieve outcome indistinguishability via no-regret updates. Indeed, Algorithm \ref{alg:no-regret} can be viewed as running $|{\cal X}|$ instances of a no-regret algorithm in parallel. Each instance corresponds to one member $j$ of the population ${\cal X}$, and the distribution $\tilde{p}_j$ corresponds to a mixed strategy over $o \in {\cal O}$. The predicted probabilities $\tilde{p}_j(o)$ are refined over multiple rounds.

The most important question is how the loss function is chosen in each round. Toward the goal of outcome indistinguishability against ${\cal A}$, given the current predictor $\tilde{p}$, the algorithm finds a distinguisher $A \in {\cal A}$ that has relatively high distinguishing advantage with respect to $\tilde{p}$. Such an adversary naturally defines a loss for each individual-outcome pair as follows: \[L_j(o) = A(j, o, \tilde{p}).\] We emphasize that though the no-regret algorithm is run separately for each member of ${\cal X}$, the choice of the distinguisher (and hence the loss functions) depends on the \emph{entire} predictor $\tilde{p}$. Furthermore, it suffices to find a distinguisher with relatively high advantage as opposed to maximal advantage.

Finally, if ${\cal X}$ is very large, it may be infeasible to run a separate instance of the no-regret algorithm for each $j \in {\cal X}$. This is not a problem because the collection of values $\tilde{p}_j(o)$ is implicitly defined by the distinguishers $A$ found in each round, which in turn give an efficient representation of the predictor $\tilde{p}$ as a whole.

\begin{algorithm}
    \caption{Outcome Indistinguishability with No-Regret Updates}
    \label{alg:no-regret}
    \DontPrintSemicolon

    \SetKw{Return}{return}
    \SetKwProg{Def}{def}{:}{end}
    \SetKwFunction{Construct}{Construct-via-No-Regret}

    \Def{\Construct{$\tilde{p}, {\cal A}, \varepsilon, {\sf update}$}}{
        \For{$A \in {\cal A}$}{
            \tcp{check for $({\cal A}, \varepsilon)$-OI violation}
            \If{$\E\left[A(i, \tilde{o}_i, \tilde{p})\right] > \E\left[A(i, o^*_i, \tilde{p})\right] + \varepsilon$}{
                \For{$j \in {\cal X}$}{
                    \For{$o \in {\cal O}$}{
                        \tcp{define loss function}
                        $L_j(o) \gets A(j, o, \tilde{p})$ 
                    }
                    \tcp{no-regret update}
                    $\tilde{p}'_j \gets {\sf update}(\tilde{p}_j, L_j)$
                }
                \tcp{recurse}
                \Return \Construct{$\tilde{p}', {\cal A}, \varepsilon, {\sf update}$}
            }
        }
        \Return $\tilde{p}$
    }
\end{algorithm}

\begin{theorem}
\label{thm:oi-no-regret}
Suppose ${\cal A}$ is closed under negation, meaning that $A \in {\cal A}$ iff $1 - A \in {\cal A}$, and that the function ${\sf update}$ satisfies a $(T, \varepsilon)$-no-regret guarantee when initialized at $p \in \Delta{\cal O}$. Set $\tilde{p}_j = p$ for each $j \in {\cal X}$. Then \(\textsc{Construct-via-No-Regret}(\tilde{p}, {\cal A}, \varepsilon, {\sf update})\) returns an $({\cal A}, \varepsilon)$-outcome-indistinguishable predictor in under $T$ recursive calls.
\end{theorem}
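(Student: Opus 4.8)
The plan is to prove two things: that whenever \textsc{Construct-via-No-Regret} halts by reaching its final return statement, the predictor it returns is $({\cal A},\varepsilon)$-outcome-indistinguishable; and that the recursion cannot continue for $T$ or more steps. Together these give the theorem, since each invocation either recurses or returns, so a bounded recursion depth forces termination.

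The first point is an immediate consequence of closure under negation. If a given invocation reaches the end of its loop over ${\cal A}$ without the stopping condition ever firing, then $\E[A(i,\tilde o_i,\tilde p)] \le \E[A(i,o^*_i,\tilde p)] + \varepsilon$ for every $A \in {\cal A}$. Applying this inequality to $1-A$, which also lies in ${\cal A}$, yields $\E[A(i,o^*_i,\tilde p)] - \E[A(i,\tilde o_i,\tilde p)] \le \varepsilon$; combining the two bounds gives $\bigl|\E[A(i,\tilde o_i,\tilde p)] - \E[A(i,o^*_i,\tilde p)]\bigr| \le \varepsilon$ for all $A \in {\cal A}$, which is exactly Definition~\ref{def:oi}.

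For the recursion bound, I would view the execution as running, in parallel over all $j \in {\cal X}$, one copy of the decision-making algorithm specified by $p$ and ${\sf update}$. Suppose for contradiction that there are at least $T$ recursive calls, and let $\tilde p^{(1)}, \dots, \tilde p^{(T)}$ be the successive predictors (with $\tilde p^{(1)}_j = p$ for all $j$), let $A^{(1)}, \dots, A^{(T)}$ be the distinguishers selected, let $\tilde o^{(t)}_i$ be the modeled outcome of $\tilde p^{(t)}$, and let $L^{(t)}_j(o) = A^{(t)}(j,o,\tilde p^{(t)})$ be the loss functions. For each fixed $j$, the sequence $\tilde p^{(1)}_j, \dots, \tilde p^{(T)}_j$ is precisely the play of ${\sf update}$ initialized at $p$ against $L^{(1)}_j, \dots, L^{(T)}_j$, so the $(T,\varepsilon)$-no-regret guarantee applies: for every pure strategy $o \in {\cal O}$ we have $\frac1T\sum_{t=1}^T \E_{o'\sim \tilde p^{(t)}_j}[L^{(t)}_j(o')] \le \frac1T\sum_{t=1}^T L^{(t)}_j(o) + \varepsilon$. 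The crucial step is to average these $|{\cal O}|$ inequalities against the conditional law of $o^*_i$ given $i = j$: because those weights sum to $1$ and do not affect the left-hand side, we obtain $\frac1T\sum_{t=1}^T \E_{o'\sim \tilde p^{(t)}_j}[L^{(t)}_j(o')] \le \frac1T\sum_{t=1}^T \E\bigl[L^{(t)}_j(o^*_i) \mid i = j\bigr] + \varepsilon$. Taking the expectation over $j \sim i$ rewrites the left-hand side as $\frac1T\sum_t \E[A^{(t)}(i,\tilde o^{(t)}_i,\tilde p^{(t)})]$ and the right-hand side as $\frac1T\sum_t \E[A^{(t)}(i,o^*_i,\tilde p^{(t)})] + \varepsilon$. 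On the other hand, averaging over $t$ the stopping conditions that triggered each recursive call gives the strict reverse inequality, a contradiction. Hence the recursion makes fewer than $T$ calls, so the algorithm halts, and by the first point the predictor it returns is $({\cal A},\varepsilon)$-outcome-indistinguishable.

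The only step that is not routine bookkeeping is identifying the correct comparator for the no-regret guarantee: one should compare each decision-maker's loss not against a single fixed outcome, but against a \emph{randomized} outcome distributed as $o^*_i \mid i$. This is what converts the collection of per-individual regret bounds into a statement purely about the two global distinguishing advantages $\E[A(i,\tilde o_i,\tilde p)]$ and $\E[A(i,o^*_i,\tilde p)]$ --- precisely the quantities that the algorithm's stopping rule compares.
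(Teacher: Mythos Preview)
Your proof is correct and follows the same approach as the paper's: assume $T$ recursive calls occur, apply the no-regret guarantee, and derive a contradiction with the fact that each selected distinguisher has advantage greater than~$\varepsilon$. You simply spell out two steps that the paper leaves implicit --- why closure under negation ensures the returned predictor is $({\cal A},\varepsilon)$-OI (the paper just says ``clearly''), and how the per-individual regret bounds aggregate into the global inequality by averaging against the law of $o^*_i\mid i$ and then over $i$ (the paper states the aggregated inequality without derivation).
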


\begin{proof}
Let $\tilde{p}^{(t)}$ denote the argument to the $t$\textsuperscript{th} recursive call (e.g., $\tilde{p}^{(1)} = \tilde{p}$), let $\tilde{o}^{(t)}_i \in {\cal O}$ be a random variable whose conditional distribution given $i$ is specified by $\tilde{p}_i$, and suppose toward a contradiction that $T$ or more recursive calls are made. The no-regret guarantee implies that \[\frac{1}{T}\sum_{t=1}^T\left(\E\left[A(i, \tilde{o}^{(t)}_i, \tilde{p}^{(t)})\right] - \E\left[A(i, o^*_i, \tilde{p}^{(t)})\right]\right) \le \varepsilon.\] By design, if the algorithm does not terminate in $T$ recursive calls, then each summand on the left is greater than $\varepsilon$, which leads to a contradiction. Thus, $\textsc{Construct-via-No-Regret}$ always returns some predictor in under $T$ recursive calls, which the stopping condition clearly ensures is $({\cal A}, \varepsilon)$-outcome-indistinguishable.
\end{proof}

\subsubsection{Mirror Descent Updates}
\label{sec:mirror-descent}

In this section, we explain how projected gradient descent and multiplicative weight updates fit into the framework of no-regret learning and compare their advantages when used as the update rule in Algorithm \ref{alg:no-regret}. These two implementations of Algorithm \ref{alg:no-regret} are typically analyzed by tracking a potential function measuring the ``divergence'' of a predictor $\tilde{p}$ from the ground truth $p^*$ as updates to $\tilde{p}$ are made. The mirror descent perspective that we adopt in this section will clarify which ``divergence'' functions can be used in such an argument to derive a no-regret guarantee, while giving convergence rates for the corresponding update rules.

\paragraph{Background}

To begin, we state without proof some basic properties about the two algorithms under consideration. For more detail, we refer the reader to texts on convex optimization \cite{bubeck2015convex, nemirovsky1983problem}.

\begin{itemize}
\item \textbf{Projected Gradient Descent} Let $K \subset \mathbb{R}^n$ be a convex and compact constraint set. Then for any initialization $x_1 \in K$ and sequence $y_1, y_2, \ldots \in \mathbb{R}^n$, the update rule \[x_{t+1} = {\rm proj}_K ( x_t - \eta y_t )\] with step size $\eta > 0$ satisfies the regret bound  \[\sum_{t=1}^T \langle y_t, x_t - x \rangle \le \frac{\|x - x_1\|_2^2}{2\eta} + \frac{\eta}{2}\sum_{t=1}^T \|y_t\|_2^2\] for all $T \in \mathbb{N}$ and $x \in K$. Here, $\|\cdot\|_p$ denotes the \emph{$p$-norm}.

\item \textbf{Multiplicative Weight Updates} Let $\Delta^{n-1} = \{w \in [0, 1]^n : \|w\|_1 = 1\}$. Then for any initialization $x_1 \in \Delta^{n-1} \cap \mathbb{R}_{>0}^n$ and sequence $y_1, y_2, \ldots \in \mathbb{R}^n$, the update rule \[x_{t+1, j} \propto x_{t, j} \exp(-\eta y_{t, j}), \qquad x_{t+1} \in \Delta^{n-1}\] with step size $\eta > 0$ satisfies the regret bound \[\sum_{t=1}^T \langle y_t, x_t - x \rangle \le \frac{D_{\rm KL}\left(x\|x_1\right)}{\eta} + \frac{\eta}{2}\sum_{t=1}^T\|y_t\|_\infty^2\] for all $T \in \mathbb{N}$ and $x \in \Delta^{n-1}$. Here, $D_{\rm KL}$ denotes the \emph{Kullback-Leibler divergence} and $\|\cdot\|_\infty$ denotes the \emph{maximum norm}.

\item \textbf{Mirror Descent} Let $\|\cdot\|$ and $\|\cdot \|_*$ be dual norms on a finite-dimensional vector space $V$ and its dual $V^*$, and let $D$ be the \emph{Bregman divergence associated with a mirror map that is $1$-strongly convex with respect to $\|\cdot\|$ on $\Omega \subseteq V$}. Let $K \subset \overline{\Omega}$ be a convex and compact constraint set. Then for any initialization $x_1 \in K \cap \Omega$ and sequence $y_1, y_2, \ldots \in V^*$, the update rule
\[x_{t+1} = \argmin_{x \in K \cap \Omega} \;\; \langle y_t, x \rangle + \frac{D(x, x_t)}{\eta}\] with step size $\eta > 0$ is well-defined and satisfies the regret bound \[\sum_{t=1}^T \langle y_t, x_t - x \rangle \le \frac{D(x, x_1)}{\eta} + \frac{\eta}{2}\sum_{t=1}^T \|y_t\|_*^2\] for all $T \in \mathbb{N}$ and $x \in K$. Here, $\langle y, x \rangle$ denotes the real number $y(x)$.

The general update rule may be interpreted as selecting $x_{t+1}$ that responds well to $y_t$ without moving too far away from $x_t$, as enforced by the penalty term $D(x_{t+1}, x_t)/\eta$.

\item \textbf{Relationships} One can show that projected gradient descent is a special case of mirror descent with $\|\cdot\| = \|\cdot\|_* = \|\cdot\|_2$ on $\mathbb{R}^n$ and $D(p, q) = \frac{1}{2}\|p - q\|_2^2$ on $\Omega = \mathbb{R}^n$. Similarly, one can show that the multiplicative weights algorithm is a special case of mirror descent with $\|\cdot\| = \|\cdot\|_1$ and $\|\cdot\|_* = \|\cdot\|_\infty$ on $\mathbb{R}^n$ and $K = \Delta^{n-1}$ and $D(p, q) = D_{\rm KL}(p\|q) = \sum_{j=1}^n p_j\log(p_j/q_j)$ on $\Omega = \mathbb{R}_{>0}^n$.

\end{itemize}

\paragraph{Application to OI}\label{sec:application-to-OI}

Using the above notation, let $K = \Delta{\cal O}$ and \(L = \max_{f : {\cal O} \to [0, 1]} \|f\|_*\). Algorithm \ref{alg:mirror-descent} specializes Algorithm \ref{alg:no-regret} to the case of mirror descent updates, which includes projected gradient descent and multiplicative weight updates. In the pseudocode for Algorithm \ref{alg:mirror-descent}, the expression $\E_{o \sim p}[A(j, o, \tilde{p})]$ should be read as $\sum_{o \in {\cal O}} p(o) A(j, o, \tilde{p})$.

\begin{algorithm}
    \caption{Outcome Indistinguishability with Mirror Descent Updates}
    \label{alg:mirror-descent}
    \DontPrintSemicolon

    \SetKw{Return}{return}
    \SetKwProg{Def}{def}{:}{end}
    \SetKwFunction{Construct}{Construct-via-Mirror-Descent}

    \Def{\Construct{$\tilde{p}, {\cal A}, \varepsilon, D, \eta$}}{
        \For{$A \in {\cal A}$}{
            \tcp{check for $({\cal A}, \varepsilon)$-OI violation}
            \If{$\E\left[A(i, \tilde{o}_i, \tilde{p})\right] > \E\left[A(i, o^*_i, \tilde{p})\right] + \varepsilon$}{
                \For{$j \in {\cal X}$}{
                    \tcp{mirror descent update}
                    $\tilde{p}'_j \gets \argmin_{p \in \Delta{\cal O}} \;\; \E_{o \sim p}\left[A(j, o, \tilde{p})\right] + D(p, \tilde{p}_j)/\eta$
                }
                \tcp{recurse}
                \Return \Construct{$\tilde{p}', {\cal A}, \varepsilon, D, \eta$}
            }
        }
        \Return $\tilde{p}$
    }
\end{algorithm}

\begin{theorem}
\label{thm:oi-mirror-descent}
$\textsc{Construct-via-Mirror-Descent}(\tilde{p}, {\cal A}, D, \varepsilon, \eta)$ with step size $\eta = \varepsilon/L^2$ returns an $({\cal A}, \varepsilon)$-outcome-indistinguishable predictor in under \[2\left(\frac{L}{\varepsilon}\right)^2\E[D(p^*_i, \tilde{p}_i)]\] recursive calls, where $p_i^*$ denotes the conditional distribution of $o^*_i$ given $i$.
\end{theorem}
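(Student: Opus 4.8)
The plan is to obtain the statement as a quantitative refinement of the proof of Theorem \ref{thm:oi-no-regret}, in which the abstract no-regret guarantee is replaced by the explicit mirror descent regret bound recalled in Section \ref{sec:mirror-descent}. First I would observe that, for each individual $j \in {\cal X}$, the inner loop of Algorithm \ref{alg:mirror-descent} performs precisely the mirror descent update $x_{t+1} = \argmin_{x \in K \cap \Omega} \langle y_t, x\rangle + D(x, x_t)/\eta$ with constraint set $K = \Delta{\cal O}$, initialization $x_1 = \tilde{p}_j$, and loss vectors $y_t = L_j^{(t)}$, where $L_j^{(t)}(o) = A^{(t)}(j, o, \tilde{p}^{(t)})$ and $A^{(t)} \in {\cal A}$ is the distinguisher selected in the $t$th recursive call. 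Since $L_j^{(t)}$ takes values in $[0,1]^{\cal O}$, we have $\|L_j^{(t)}\|_* \le L$ by the definition of $L$, so the mirror descent regret bound yields, for every $T$, every fixed individual $i$, and every comparator $q \in K$, the inequality $\sum_{t=1}^T \langle L_i^{(t)}, \tilde{p}_i^{(t)} - q\rangle \le D(q, \tilde{p}_i)/\eta + \eta T L^2 / 2$.

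The key point where this argument diverges from Theorem \ref{thm:oi-no-regret} is that the initialization differs across individuals, so there is no single $(T,\varepsilon)$-no-regret guarantee to invoke; instead I would average the per-individual bounds. Concretely, I would instantiate the regret bound above with the comparator $q = p^*_i$ — the conditional distribution of $o^*_i$ given $i$, which indeed lies in $\Delta{\cal O} = K$ — and then take expectation over $i$. Using $\langle L_i^{(t)}, \tilde{p}_i^{(t)}\rangle = \E[A^{(t)}(i, \tilde{o}_i^{(t)}, \tilde{p}^{(t)}) \mid i]$ and $\langle L_i^{(t)}, p^*_i\rangle = \E[A^{(t)}(i, o^*_i, \tilde{p}^{(t)}) \mid i]$, and the fact that the (random) penalty term contributes only its expectation $\E[D(p^*_i, \tilde{p}_i)]/\eta$, this gives $\sum_{t=1}^T \big(\E[A^{(t)}(i, \tilde{o}_i^{(t)}, \tilde{p}^{(t)})] - \E[A^{(t)}(i, o^*_i, \tilde{p}^{(t)})]\big) \le \E[D(p^*_i, \tilde{p}_i)]/\eta + \eta T L^2/2$.

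Finally I would close the argument by contradiction exactly as in Theorem \ref{thm:oi-no-regret}: if $T$ or more recursive calls are made, then for each $t \le T$ the OI-violation test that triggered the $t$th recursion forces the corresponding summand on the left to exceed $\varepsilon$ (and, as in Theorem \ref{thm:oi-no-regret}, closure of ${\cal A}$ under negation is what promotes this one-sided test to the two-sided $({\cal A},\varepsilon)$-OI conclusion for the returned predictor), so $T\varepsilon < \E[D(p^*_i, \tilde{p}_i)]/\eta + \eta T L^2 / 2$. Substituting $\eta = \varepsilon/L^2$ turns the last term into $T\varepsilon/2$; rearranging yields $T < 2(L/\varepsilon)^2\,\E[D(p^*_i, \tilde{p}_i)]$, which is the claimed bound on the number of recursive calls. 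I do not anticipate a serious obstacle here; the only steps needing care are (i) justifying that $p^*_i$ is a legitimate comparator for the mirror descent bound — the bound holds for all $x \in K$, not merely the pure strategies to which the no-regret \emph{definition} is restricted — and, in the multiplicative-weights instantiation, that the initialization $\tilde p$ is fully supported so that $D(p^*_i,\tilde p_i)$ is finite; and (ii) correctly bookkeeping the averaging over $i$, since the distinguisher $A^{(t)}$ is common to all individuals while the regret bounds are per-individual.
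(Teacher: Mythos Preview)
Your proposal is correct and follows essentially the same approach as the paper's own proof: apply the per-individual mirror descent regret bound with comparator $p^*_i$, average over $i$ to obtain $\sum_{t=1}^T(\E[A^{(t)}(i,\tilde o_i^{(t)},\tilde p^{(t)})]-\E[A^{(t)}(i,o^*_i,\tilde p^{(t)})])\le \E[D(p^*_i,\tilde p_i)]/\eta+\eta T L^2/2$, and then derive a contradiction from the fact that each summand exceeds $\varepsilon$. Your write-up is in fact more explicit than the paper's about the averaging step and about why closure under negation is needed for the two-sided OI conclusion (a hypothesis the paper states for Theorem~\ref{thm:oi-no-regret} but tacitly relies on here as well).
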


\begin{proof}
Let $\tilde{p}^{(t)}$ denote the argument to the $t$\textsuperscript{th} recursive call (e.g., $\tilde{p}^{(1)} = \tilde{p}$), and let $\tilde{o}^{(t)}_i \in {\cal O}$ be a random variable whose conditional distribution given $i$ is specified by $\tilde{p}_i$. The mirror descent regret bound implies that \[\sum_{t=1}^T \left(\E\left[A(i, \tilde{o}^{(t)}_i; \tilde{p}^{(t)})\right] - \E\left[A(i, o^*_i; \tilde{p}^{(t)})\right] \right) \le \frac{\E[D(p_i^*, \tilde{p}_i)]}{\eta} + \frac{\eta T L^2}{2}.\] By design, each summand on the left is greater than $\varepsilon$, which leads to a contradiction if the step size is $\eta = \varepsilon/L^2$ and there are $T \ge 2(L/\varepsilon)^2\E[D(p^*_i, \tilde{p}_i)]$ calls. Thus, $\textsc{Construct-via-Mirror-Descent}$ always returns some predictor, which the stopping condition clearly ensures is $({\cal A}, \varepsilon)$-outcome-indistinguishable.
\end{proof}

\begin{example}
\label{example:additive-updates}
Using projected gradient descent yields $L = \sqrt{|{\cal O}|}$ and a bound of \[\frac{|{\cal O}|\E\left[\|p_i^* - \tilde{p}_i\|_2^2\right]}{\varepsilon^2} \le \frac{2|{\cal O}|}{\varepsilon^2}\] on the number of recursive calls. In the binary case, i.e., ${\cal O} = \{0, 1\}$, the update rule satisfies \[\tilde{p}'_j(1) \leftarrow {\rm proj}_{[0, 1]} \left( \tilde{p}_j(1) - \frac{\eta}{2}(A(j, 1, \tilde{p}) - A(j, 0, \tilde{p}))\right),\] which agrees with the original outcome indistinguishability algorithm \cite{dwork2021outcome}.
\end{example}

\begin{example}
\label{example:multiplicative-updates}
Using multiplicative weight updates yields $L = 1$ and a bound of $2\varepsilon^{-2}\E[D_{\rm KL}(p_i^* \| \tilde{p}_i)]$ on the number of recursive calls. If we initialize $\tilde{p}_j$ to the uniform distribution on ${\cal O}$ for all $j \in {\cal X}$, then this bound reduces to $2\varepsilon^{-2}\log(|{\cal O}|)$, which has a better dependence on $|{\cal O}|$ than the bound for projected gradient descent but allows less flexibility in the initialization of $\tilde{p}$.
\end{example}

The original analyses of the generic multicalibration and outcome indistinguishability algorithms are not phrased in terms of no-regret bounds like our proofs of Theorems \ref{thm:oi-no-regret} and \ref{thm:oi-mirror-descent}. Instead, they track changes to a \emph{potential function} as $\tilde{p}$ is iteratively updated. In our proof of Theorem \ref{thm:oi-mirror-descent}, this potential function corresponds exactly to the quantity $\E[D(p^*_i, \tilde{p}_i)]$.

\subsection{Weak Agnostic Learning}
\label{sec:weak-agnostic-learning}

To prepare for the discussion of our new complexity upper bounds, we now present a variant of Algorithm \ref{alg:no-regret} that abstracts the process of finding a distinguisher $A \in {\cal A}$ that distinguishes real from modeled outcomes with advantage $\varepsilon$. The abstraction we consider is based on that of a \emph{weak agnostic learner}, which appeared in the original paper on multi-group fairness \cite{hebertjohnson2018multicalibration}.

\begin{definition}
    \label{def:weak-agnostic-learner}
    Let ${\sf WAL}_{{\cal A}}$ be an algorithm that takes as input a parameter $\varepsilon > 0$ and a predictor $\tilde{p}$ and outputs either a distinguisher $A \in {\cal A}$ or the symbol $\bot$. We assume that ${\sf WAL}_{{\cal A}}$ has the ability to draw data samples that are i.i.d. copies of $(i, o^*_i)$. We say that ${\sf WAL}_{{\cal A}}$ is a \emph{weak agnostic learner} with failure probability $\beta$ if the following two conditions hold:
    \begin{itemize}
        \item If there exists $A \in {\cal A}$ such that $\Delta_A(\tilde{p}) := \E[A(i, \tilde{o}_i, \tilde{p})] - \E[A(i, o^*_i, \tilde{p})] > \varepsilon$, then ${\sf WAL}_{{\cal A}}(\varepsilon, \tilde{p})$ outputs $A' \in {\cal A}$ such that $\Delta_{A'}(\tilde{p}) > \varepsilon/2$ with probability at least $1 - \beta$.
        \item If every $A \in {\cal A}$ satisfies $\Delta_A(\tilde{p}) \le \varepsilon$, then ${\sf WAL}_{{\cal A}}(\varepsilon, \tilde{p})$ outputs either $A' \in {\cal A}$ such that $\Delta_{A'}(\tilde{p}) > \varepsilon/2$ or $\bot$ with probability at least $1 - \beta$.
    \end{itemize}
\end{definition}

A simple application of Hoeffding's inequality and a union bound yields the following lemma.

\begin{lemma}
    \label{thm:samples-per-iteration}
    For every family ${\cal A}$ of distinguishers, there exists a weak agnostic learning algorithm ${\sf WAL}_{{\cal A}}$ with failure probability $\beta$ that draws at most \(O\left(\log(|{\cal A}|/\beta)/\varepsilon^2\right)\) on input $\varepsilon$ and $\tilde{p}$.
\end{lemma}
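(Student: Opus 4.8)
The plan is to build ${\sf WAL}_{\cal A}$ by brute-force enumeration over ${\cal A}$ using a single batch of samples. First I would draw $n = \lceil C\varepsilon^{-2}\log(|{\cal A}|/\beta)\rceil$ i.i.d.\ copies $(i_1, o^*_{i_1}), \ldots, (i_n, o^*_{i_n})$ of $(i, o^*_i)$, where $C$ is an absolute constant fixed in the analysis below. The one subtlety is that $\Delta_A(\tilde p) = \E[A(i, \tilde o_i, \tilde p)] - \E[A(i, o^*_i, \tilde p)]$ involves the modeled outcome $\tilde o_i$, which the learner does not observe directly; but since $\tilde p$ is given as input, the inner expectation $\E_{o \sim \tilde p_i}[A(i, o, \tilde p)] = \sum_{o \in {\cal O}} \tilde p_i(o)\, A(i, o, \tilde p)$ is a deterministic function of $i$ that the learner can evaluate. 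Hence for each $A \in {\cal A}$ I would form the empirical estimate
\[
\hat\Delta_A \;=\; \frac{1}{n}\sum_{k=1}^{n}\left(\sum_{o \in {\cal O}} \tilde p_{i_k}(o)\, A(i_k, o, \tilde p) \;-\; A(i_k, o^*_{i_k}, \tilde p)\right),
\]
whose summands are independent, lie in $[-1, 1]$, and have mean $\Delta_A(\tilde p)$, so that $\E[\hat\Delta_A] = \Delta_A(\tilde p)$.

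The algorithm then outputs the distinguisher $A'$ achieving $\max_{A \in {\cal A}} \hat\Delta_A$ if this maximum exceeds $\tfrac{3}{4}\varepsilon$, and outputs $\bot$ otherwise. To analyze it, I would apply Hoeffding's inequality to each $\hat\Delta_A$: since the summands lie in an interval of length $2$, we get $\Pr[\,|\hat\Delta_A - \Delta_A(\tilde p)| > \varepsilon/4\,] \le 2\exp(-n\varepsilon^2/32)$. A union bound over the $|{\cal A}|$ distinguishers shows that, for a suitable choice of $C$, with probability at least $1 - \beta$ every estimate $\hat\Delta_A$ is within $\varepsilon/4$ of the true value $\Delta_A(\tilde p)$; call this the \emph{good event}.

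It remains to check the two conditions of Definition \ref{def:weak-agnostic-learner} on the good event. If some $A$ has $\Delta_A(\tilde p) > \varepsilon$, then $\hat\Delta_A > \tfrac{3}{4}\varepsilon$, so the algorithm outputs some $A'$ with $\hat\Delta_{A'} \ge \hat\Delta_A > \tfrac{3}{4}\varepsilon$, whence $\Delta_{A'}(\tilde p) > \tfrac{3}{4}\varepsilon - \tfrac{1}{4}\varepsilon = \tfrac{1}{2}\varepsilon$, establishing the first condition. For the second, if every $A$ has $\Delta_A(\tilde p) \le \varepsilon$, then whenever the algorithm outputs $A'$ rather than $\bot$ we have $\hat\Delta_{A'} > \tfrac{3}{4}\varepsilon$ and hence $\Delta_{A'}(\tilde p) > \tfrac{1}{2}\varepsilon$, which is exactly what the second condition permits. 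The sample count is $n = O(\varepsilon^{-2}\log(|{\cal A}|/\beta))$, as claimed. There is essentially no hard step here: the argument is a textbook concentration-plus-union-bound, and the only points requiring a little care are observing that the learner can compute the $\tilde o_i$-expectation from the given $\tilde p$, and choosing the acceptance threshold $\tfrac{3}{4}\varepsilon$ with enough slack (namely $\varepsilon/4$ on each side) so that the inequalities in Definition \ref{def:weak-agnostic-learner} come out strict.
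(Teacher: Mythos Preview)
Your proposal is correct and follows exactly the approach the paper indicates: Hoeffding's inequality on the empirical estimates plus a union bound over the $|{\cal A}|$ distinguishers. Your write-up in fact supplies the details the paper omits, including the observation that the $\tilde o_i$-expectation can be computed exactly from the given $\tilde p$ and the choice of the $\tfrac{3}{4}\varepsilon$ threshold to make the strict inequalities in Definition~\ref{def:weak-agnostic-learner} work out.
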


Algorithm \ref{alg:oi-via-weak-agnostic-learning} shows how to utilize $\mathsf{WAL}_{{\cal A}}$ as a subroutine for achieving outcome indistinguishability.

\begin{algorithm}[tb]
    \caption{Outcome Indistinguishability via Weak Agnostic Learning}
    \label{alg:oi-via-weak-agnostic-learning}
    \DontPrintSemicolon
    \SetKw{Return}{return}
    \SetKwProg{Def}{def}{:}{end}
    \SetKwFunction{Construct}{Construct-via-WAL}
    \vspace{0.2cm}

    \Def{\Construct{$\varepsilon, \tilde{p}, \mathsf{WAL}_{{\cal A}}, {\sf update}$}}{
        $A \gets \mathsf{WAL}_{{\cal A}}(\varepsilon, \tilde{p})$ \tcp*{check for $({\cal A}, \varepsilon)$-OI violation}
        \If{$A = \bot$}{
            \Return $\tilde{p}$\;
        }
        \For{$j \in {\cal X}$}{
            \For{$o \in {\cal O}$}{
                $L_j(o) \gets A(j, o, \tilde{p})$ \tcp*{define loss function}
            }
            $\tilde{p}'_j \gets {\sf update}(\tilde{p}_j, L_j)$ \tcp*{no-regret update}
        }
        \Return \Construct{$\varepsilon, \tilde{p}', \mathsf{WAL}_{{\cal A}}, {\sf update}$} \tcp*{recurse}
    }
    
\end{algorithm}

\begin{theorem}
    \label{thm:generic-sample-complexity}
    If ${\cal A}$ is closed under complement, then there exists an algorithm ${\sf WAL}_{{\cal A}}$, an algorithm ${\sf update}$, and an initialization of $\tilde{p}$ such that with probability at least $1-\beta$, the algorithm  \(\textsc{Construct}(\varepsilon, \tilde{p}, {\sf WAL}_{{\cal A}}, {\sf update})\) returns an $({\cal A}, \varepsilon)$-outcome-indistinguishable predictor in at most $\log(\ell)/\varepsilon^2$ recursive calls and using at most $\log(\ell)\log(|{\cal A}|/\beta)/\varepsilon^4$ samples, where $\ell = |{\cal O}|$.
\end{theorem}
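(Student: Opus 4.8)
The plan is to instantiate $\textsc{Construct}$ with the multiplicative weight update rule of Example~\ref{example:multiplicative-updates}, each $\tilde p_j$ initialized to the uniform distribution on ${\cal O}$ and step size $\eta = \Theta(\varepsilon)$, and with ${\sf WAL}_{\cal A}$ taken to be the Hoeffding-based weak agnostic learner of Lemma~\ref{thm:samples-per-iteration}. The analysis then splits into a deterministic ``progress'' argument bounding the number of recursive calls---essentially the potential argument in the proof of Theorem~\ref{thm:oi-mirror-descent}---together with a sampling argument that charges a few fresh samples to each call; the two pieces are glued by a union bound.

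First I would condition on the event ${\cal E}$ that every invocation of ${\sf WAL}_{\cal A}$ during the run behaves as in Definition~\ref{def:weak-agnostic-learner}. On ${\cal E}$, two facts hold: whenever $\textsc{Construct}$ performs an update it does so with a distinguisher $A$ satisfying $\Delta_A(\tilde p) > \varepsilon/2$ (the conclusion of both bullets of Definition~\ref{def:weak-agnostic-learner} whenever the output is not $\bot$); and when $\textsc{Construct}$ halts---i.e.\ ${\sf WAL}_{\cal A}$ returns $\bot$---no $A \in {\cal A}$ has $\Delta_A(\tilde p) > \varepsilon$ (by the first bullet). Because ${\cal A}$ is closed under complement and $\Delta_{1-A}(\tilde p) = -\Delta_A(\tilde p)$, the latter upgrades to $|\Delta_A(\tilde p)| \le \varepsilon$ for all $A \in {\cal A}$, which is exactly $({\cal A}, \varepsilon)$-outcome-indistinguishability of the returned predictor. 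For the number of calls, I would reuse the argument of Theorem~\ref{thm:oi-mirror-descent} and Example~\ref{example:multiplicative-updates}: the multiplicative-weights regret bound, applied once per individual with comparator the true conditional law $p^*_j$ and then averaged over $i$, yields $\sum_{t} \big(\E[A^{(t)}(i, \tilde o^{(t)}_i, \tilde p^{(t)})] - \E[A^{(t)}(i, o^*_i, \tilde p^{(t)})]\big) \le \E[D_{\rm KL}(p^*_i \,\|\, \tilde p_i^{(1)})]/\eta + \eta T L^2/2$; plugging in $L = 1$, $\eta = \Theta(\varepsilon)$, and $\E[D_{\rm KL}(p^*_i \,\|\, \mathrm{unif})] \le \log \ell$, and using that each summand exceeds $\varepsilon/2$ on ${\cal E}$, forces $T = O(\log(\ell)/\varepsilon^2)$ recursive calls.

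It remains to bound samples and $\Pr[{\cal E}]$. Having fixed the deterministic iteration budget $T = O(\log(\ell)/\varepsilon^2)$, I would run each call of ${\sf WAL}_{\cal A}$ at failure probability $\beta/T$; by Lemma~\ref{thm:samples-per-iteration} this costs $O\!\big(\log(|{\cal A}|T/\beta)/\varepsilon^2\big)$ samples per call, so the total over all calls is $O\!\big(\log(\ell)\log(|{\cal A}|/\beta)/\varepsilon^4\big)$, the additive $\log\log\ell$ and $\log(1/\varepsilon)$ terms inside the logarithm being of lower order. A union bound over the at most $T$ calls gives $\Pr[{\cal E}] \ge 1 - \beta$, completing the argument.

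The main obstacle is the interplay between the random stopping time of $\textsc{Construct}$ and the per-call sampling: the number of calls is not fixed in advance, so one cannot directly union-bound over ``the calls that occur.'' The fix is that the bound $T$ is deterministic \emph{conditioned on} ${\cal E}$---the regret/potential inequality needs only that each update uses advantage $> \varepsilon/2$---so one fixes $T$ first, allocates failure budget $\beta/T$ to each of the first $T$ possible calls, and then observes that on the resulting good event the algorithm really does halt within $T$ calls, defusing the apparent circularity. A secondary point is that the one-sided guarantee of ${\sf WAL}_{\cal A}$ (it detects $\Delta_A > \varepsilon$, not $|\Delta_A| > \varepsilon$) suffices precisely because closure under complement lets a two-sided OI violation present itself as a one-sided violation of either $A$ or $1 - A$; this is the only place the closure hypothesis enters.
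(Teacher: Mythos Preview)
Your proposal is correct and follows essentially the same approach as the paper: instantiate \textsc{Construct} with multiplicative weights and the uniform initialization (Example~\ref{example:multiplicative-updates}), use the Hoeffding-based ${\sf WAL}_{\cal A}$ of Lemma~\ref{thm:samples-per-iteration}, bound the number of iterations via the regret/potential argument of Theorem~\ref{thm:oi-mirror-descent}, and union-bound over the at most $T$ calls after allocating failure probability $\beta/T$ to each. Your write-up is in fact more explicit than the paper's terse proof, particularly in spelling out where closure under complement is used (upgrading the one-sided $\Delta_A \le \varepsilon$ at termination to the two-sided OI guarantee) and in addressing the circularity between the random stopping time and the per-call failure budget.
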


\begin{proof}
    The proof of Theorem \ref{thm:oi-mirror-descent} and Example \ref{example:multiplicative-updates}, along with the two properties in Definition \ref{def:weak-agnostic-learner}, gives a $O(\log(\ell)/\varepsilon^2)$ upper bound on the number of updates. Lemma \ref{thm:samples-per-iteration} upper bounds the number of samples needed per update. Choosing $\beta$ appropriately and applying a union bound over the entire sequence of updates yields the claimed sample complexity upper bound.
\end{proof}

\subsection{Sample Complexity}
\label{sec:complexity}

In this section, we consider a particular instantiation (Algorithm~\ref{alg:outcome-indistinguishability}) of our algorithmic template from Section~\ref{sec:algorithms} to derive improved upper bounds on sample complexity for achieving multicalibration in the multiclass setting and for low-degree multicalibration.

\begin{algorithm}[tb]
    \caption{Multiclass Outcome Indistinguishability via Multiplicative Weight Updates}
    \label{alg:outcome-indistinguishability}
    \DontPrintSemicolon
    \SetKw{Break}{break}
    \SetKw{Not}{not}
    \SetKw{And}{and}
    \SetKw{True}{true}
    \SetKw{False}{false}
    
    \vspace{0.2cm}
    \KwData{distinguisher family ${\cal A}$, parameter $\varepsilon$, iteration count $t$, samples per iteration $n$}
    \KwResult{predictor $\tilde{p} : {\cal X} \to \Delta{\cal O}$}
    \vspace{0.1cm}
    $\tilde{p}^{(0)}_j(o) \gets 1/|{\cal O}|$ for all $j \in {\cal X}$ and $o \in {\cal O}$ \tcp*{initialize $\tilde{p}^{(0)}$ to constant}
    \For{$s = 0, 1, \ldots, t - 1$}{
        $(i_{s1}, o^*_{i_{s1}}), \ldots, (i_{sn}, o^*_{i_{sn}}) \sim (i, o^*_i)$ \tcp*{draw $n$ fresh iid samples}
        ${\rm changed} \gets \False$\;
        \For{$A \in {\cal A}$}{
            $\alpha \gets \sum_{m=1}^n A(i_{sm}, o^*_{i_{sm}}, \tilde{p}^{(s)})$ \tcp*{genuine outcomes}
            $\beta \gets \sum_{m=1}^n\sum_{o \in {\cal O}}\tilde{p}_{i_{sm}}(o)A(i_{sm}, o, \tilde{p}^{(s)})$ \tcp*{modeled outcomes}
            \If(\tcp*[f]{is $A$'s advantage large?}){\Not {\rm changed} \And $|\beta - \alpha| > \varepsilon n / 2$}{   
                $\gamma \gets \mathrm{sign}(\beta - \alpha) \cdot \varepsilon/3$ \tcp*{step size}
                \For{$j \in {\cal X}$}{
                    \For{$o \in {\cal O}$}{
                        $f_j(o) \gets \tilde{p}^{(s)}_j(o)\exp(-\gamma A(j, o, \tilde{p}^{(s)}))$\tcp*{update weights}
                    }
                    $\tilde{p}^{(s+1)}_j(o) \gets f_j(o) / \sum_{o' \in {\cal O}} f_j(o')$\;
                    ${\rm changed} \gets \True$\;
                }
            }
        }
        \lIf{\Not {\rm changed}}{\Return $\tilde{p}^{(s)}$}
    }
    \Return $\bot$\;
\end{algorithm}

\begin{lemma}
    \label{thm:oi-iterations}
    Running Algorithm~\ref{alg:outcome-indistinguishability} with appropriately chosen $t \lesssim \log(|{\cal O}|)/\varepsilon^2$ and $n \lesssim \log(|{\cal A}|t)/\varepsilon^2$ yields an $({\cal A}, \varepsilon)$-outcome-indistinguishable predictor with probability at least $99\%$.\footnote{By $f(x) \lesssim g(x)$, we mean that $f(x) = O(g(x))$.}
\end{lemma}

\begin{proof}
Throughout this proof, and for the remainder of this section, let \[\Delta_A(\tilde{p}) = \Big|\E\left[A(i, \tilde{o}_i, \tilde{p})\right] - \E\left[A(i, o^*_i, \tilde{p})\right]\Big|\] for a distinguisher $A \in {\cal A}$ and predictor $\tilde{p} : {\cal X} \to \Delta{\cal O}$.We say that Algorithm~\ref{alg:outcome-indistinguishability} \emph{performs an update} in iteration $s$ if it reaches the line ``${\rm changed} \gets {\bf true}$'' during iteration $s$ of the outermost for-loop. We claim that the ``exhaustive search'' over $A \in {\cal A}$ in the algorithm correctly implements the weak agnostic learning step described in Section~\ref{sec:weak-agnostic-learning}. Indeed, a standard application of a Chernoff bound and union bound shows that for an appropriately chosen number $n \lesssim \log(|{\cal A}|t)/\varepsilon^2$ of samples per iteration, the following two properties hold with probability at least $99\%$ across all iterations $s \in \{0, 1, \ldots, t-1\}$:
\begin{itemize}
    \item[(a)] If some $A \in {\cal A}$ satisfies $\Delta_{A}(\tilde{p}^{(s)}) > \varepsilon$, then the algorithm performs an update in iteration $s$.
    \item[(b)] If the algorithm performs an update in iteration $s$ using $A^{(s)} \in {\cal A}$, then $\Delta_{A^{(s)}}(\tilde{p}^{(s)}) > \varepsilon/3$.
\end{itemize}
Since the algorithm only outputs $\tilde{p} \neq \bot$ after an iteration when no update was performed, property (a) immediately implies that such a predictor $\tilde{p}$ must be $({\cal A}, \varepsilon)$-outcome-indistinguishable. It remains to show that Algorithm~\ref{alg:outcome-indistinguishability} will never output $\bot$ for an appropriate value $t \lesssim \log(|{\cal O}|)/\varepsilon^2$. However, this follows immediately from property (b) and Example~\ref{example:multiplicative-updates}.
\end{proof}

By choosing the distinguisher family ${\cal A}$ judiciously, we can achieve multicalibration in a more sample-efficient manner than existing algorithms. In fact, the construction of the family ${\cal A}$ follows naturally from our statistical distance-based definition of multicalibration:

\begin{definition}
\label{def:oi-family-for-multicalibration}
Let \({\cal A}_{{\cal C}, {\cal G}}^{\mathrm{MC}} = \left\{ A_{c,E} \mid c \in {\cal C}, \; E \subseteq {\cal Y} \times {\cal O} \times {\cal G}\right\},\) where \[A_{c,E}(j, o, \tilde{p}) = {\bf 1}[(c_j, o, \hat{p}_j) \in E]\] for each member $j \in {\cal X}$, possible outcome $o \in {\cal O}$, and predictor $\tilde{p} : {\cal X} \to \Delta{\cal O}$.
\end{definition}

\begin{theorem}
\label{thm:multicalibration-samples}
Running Algorithm~\ref{alg:outcome-indistinguishability} on ${\cal A} = {\cal A}_{{\cal C}, {\cal G}}^{\mathrm{MC}}$ and appropriately chosen ${\cal G}, t, n$ yields a predictor $\tilde{p}$ such that $\hat{p}$ is $({\cal C}, \varepsilon)$-multicalibrated with probability at least $99\%$. The algorithm samples at most \[tn \lesssim \left(\log(|{\cal C}|) + |{\cal Y}||{\cal O}|\left(\frac{4}{\varepsilon}\right)^{|{\cal O}|-1}\right) \cdot \frac{\log(|{\cal O}|)}{\varepsilon^4}\] i.i.d. individual-outcome pairs.
\end{theorem}

Letting ${\cal Y} = \{0, 1\}$ in Theorem~\ref{thm:multicalibration-samples}, so that ${\cal C} \subseteq \{0, 1\}^{\cal X}$, we recover the sample complexity upper bound that we initially stated in Section~\ref{sec:introduction}. Intuitively, our savings compared to prior works comes from the fact that our algorithm \emph{directly} targets the milder requirements of ordinary multicalibration, while prior algorithms typically ``go through'' strict multicalibration by aiming for stringent per-level-set guarantees.

\begin{proof}[Proof of Theorem~\ref{thm:multicalibration-samples}]
Observe that the family ${\cal A}^{\rm MC}_{{\cal C}, {\cal G}}$ has size $2^{|{\cal Y}||{\cal O}||{\cal G}|}|{\cal C}|$. By Lemma~\ref{thm:grid-size}, we can choose an $\eta$-covering ${\cal G}$ of $\Delta{\cal O}$ in such a way that $|{\cal G}| < (3/\eta)^{\ell-1}$. Thus, by Lemma~\ref{thm:oi-iterations}, Algorithm~\ref{alg:outcome-indistinguishability} with input ${\cal A}^{\rm MC}_{{\cal C}, {\cal G}}$ and parameters $\varepsilon/4$ and $\eta = 3\varepsilon/4$ outputs a $({\cal A}^{\rm MC}_{{\cal C}, {\cal G}}, \varepsilon/4)$-outcome-indistinguishable predictor $\tilde{p}$ with probability at least $99\%$ using at most \[tn \lesssim \left(\log(|{\cal C}|) + |{\cal Y}||{\cal O}|\left(\frac{4}{\varepsilon}\right)^{|{\cal O}|-1}\right) \cdot \frac{\log(|{\cal O}|)}{\varepsilon^4}\] samples. Since ${\cal G}$ is an $\eta$-covering of $\Delta{\cal O}$, the inequality $\delta(\hat{o}_i, \tilde{o}_i \mid i) \le \eta$ holds almost surely, so \[\delta\big((c_i, \hat{o}_i, \hat{p}_i), (c_i, o^*_i, \hat{p}_i)\big) \le \delta\big((c_i, \tilde{o}_i, \hat{p}_i), (c_i, o^*_i, \hat{p}_i)\big) + \eta.\] The first term is at most $\varepsilon/4$ by the definition of statistical distance and $({\cal A}_{\cal C, G}^{\mathrm{MC}}, \varepsilon/4)$-outcome-indistinguishability of $\tilde{p}$, so the right hand side is at most $\varepsilon$. We conclude that the discretized predictor $\hat{p}$ is $({\cal C}, \varepsilon)$-multicalibrated.
\end{proof}

To justify our upper bound for degree-$k$ multicalibration in Section~\ref{sec:introduction}, we now turn our attention to the notion of \emph{low-degree} multicalibration from \cite{gopalan2022lowdegree}. A rephrased statement of the definition is as follows:

\begin{definition}
    A function $f : [0, 1]^\ell \to [0, 1]$ is a \emph{monomial of degree less than $k$} if it takes the form $f(v) = v_{t_1} \cdots v_{t_j}$ for some $j < k$ indices $t_1, \ldots, t_j \in [\ell]$. For ${\cal O} = \{0, 1\}$ and ${\cal C} \subseteq [0, 1]^{\cal X}$ and $k \in \mathbb{N}$, let ${\cal A}_{{\cal C}, k}^{\mathrm{MC}}$ to be the family of all distinguishers of the form \[A_{c,f,o'}(j, o, \tilde{p}) = c(j)f(\tilde{p}_j){\bf 1}[o = o'],\] where $c \in {\cal C}$ and $o' \in {\cal O}$ and $f$ is a monomial of degree less than $k$. We say that a predictor $\tilde{p}$ is $({\cal C}, \varepsilon)$-degree-$k$ multicalibrated if $\tilde{p}$ is $({\cal A}_{{\cal C}, k}^{\mathrm{MC}}, \varepsilon)$-outcome indistinguishable.
\end{definition}

Using the fact that the family ${\cal A}_{{\cal C}, k}^{\mathrm{MC}}$ is a subset of ${\cal A}_{{\cal C}, {\cal G}}^{\mathrm{MC}}$, one can show that degree-$k$ multicalibration is \emph{weaker} than the notion of multicalibration considered so far. With this in mind, it should not be surprising that running Algorithm~\ref{alg:outcome-indistinguishability} on ${\cal A}_{{\cal C}, k}^{\mathrm{MC}}$ instead of ${\cal A}_{{\cal C}, {\cal G}}^{\mathrm{MC}}$ immediately gives us the following tighter upper bound on the samples needed for degree-$k$ multicalibration:

\begin{theorem}
    \label{thm:low-degree-samples}
    Running Algorithm~\ref{alg:outcome-indistinguishability} on ${\cal A} = {\cal A}_{{\cal C}, k}^{\mathrm{MC}}$ and appropriately chosen $t, n$ yields a $({\cal C}, \varepsilon)$-degree-$k$ multicalibrated predictor with probability at least $99\%$ and samples at most \[tn \lesssim \left(\log(|{\cal C}|) + k\log\left(\frac{\ell}{k}\right) + \log\left(\frac{\ell}{\varepsilon}\right)\right) \cdot \frac{\log(\ell)}{\varepsilon^4}\] i.i.d. individual-outcome pairs, where $\ell = |{\cal O}|$ and $\ell \ge k$.
\end{theorem}

The proof of Theorem~\ref{thm:low-degree-samples} will show, in particular, that the improvement in our Theorem~\ref{thm:low-degree-samples} compared to Theorem 35 of \cite{gopalan2022lowdegree} comes primarily from our deliberate use of \emph{multiplicative} updates to $\tilde{p}$ in Algorithm~\ref{alg:outcome-indistinguishability}, as opposed to additive updates.

\begin{proof}[Proof of Theorem~\ref{thm:low-degree-samples}]
A standard counting argument shows that the family ${\cal A}^{\rm MC}_{{\cal C}, k}$ has size $\binom{k+\ell-1}{\ell-1}\ell|{\cal C}|$. Thus, by Lemma~\ref{thm:oi-iterations}, Algorithm~\ref{alg:outcome-indistinguishability} with input ${\cal A}^{\rm MC}_{{\cal C}, {\cal G}}$ and parameters $\varepsilon/4$ and $\eta = 3\varepsilon/4$ outputs a $({\cal A}^{\rm MC}_{{\cal C}, k}, \varepsilon)$-outcome-indistinguishable predictor $\tilde{p}$ with probability at least $99\%$ using at most \[tn \lesssim \left(\log(|{\cal C}|) + k\log\left(\frac{\ell}{k}\right) + \log\left(\frac{\ell}{\varepsilon}\right)\right) \cdot \frac{\log(\ell)}{\varepsilon^4}\] samples. By definition, such a predictor $\tilde{p}$ is $({\cal C}, \varepsilon)$-degree-$k$ multicalibrated.
\end{proof}

To conclude this section, we also give an upper bound on the sample complexity of \emph{strict} multicalibration.

\begin{definition}
Let \({\cal A}_{{\cal C}, {\cal G}}^{\mathrm{SMC}} = \left\{ A_{\vec{c},E} \mid \vec{c} \in {\cal C}^{\cal G}, \; E \subseteq {\cal Y} \times {\cal O} \times {\cal G}\right\},\) where \[A_{\vec{c},E}(j, o, \tilde{p}) = {\bf 1}[(\vec{c}(\hat{p}_j)_j, o, \hat{p}_j) \in E]\] for each member $j \in {\cal X}$, possible outcome $o \in {\cal O}$, and predictor $\tilde{p} : {\cal X} \to \Delta{\cal O}$.
\end{definition}

\begin{theorem}
\label{thm:strict-multicalibration-samples}
Running Algorithm~\ref{alg:outcome-indistinguishability} on ${\cal A} = {\cal A}_{{\cal C}, {\cal G}}^{\mathrm{SMC}}$ and appropriately chosen ${\cal G}, t, n$ yields a predictor $\tilde{p}$ such that $\hat{p}$ is strictly $({\cal C}, \varepsilon)$-multicalibrated with probability at least $99\%$. The algorithm samples at most \[tn \lesssim \left(\log(|{\cal C}|) + |{\cal Y}||{\cal O}|\right) \cdot \left(\frac{4}{\varepsilon}\right)^{|{\cal O}|-1} \cdot \frac{\log(|{\cal O}|)}{\varepsilon^4}\] i.i.d. individual-outcome pairs.
\end{theorem}

\begin{proof}
Observe that the family ${\cal A}^{\rm SMC}_{{\cal C}, {\cal G}}$ has size $2^{|{\cal Y}||{\cal O}||{\cal G}|}|{\cal C}|^{|{\cal G}|}$. By Lemma~\ref{thm:grid-size}, we can choose an $\eta$-covering ${\cal G}$ of $\Delta{\cal O}$ in such a way that $|{\cal G}| < (3/\eta)^{\ell-1}$. Thus, by Lemma~\ref{thm:oi-iterations}, Algorithm~\ref{alg:outcome-indistinguishability} with input ${\cal A}^{\rm SMC}_{{\cal C}, {\cal G}}$ and parameters $\varepsilon/4$ and $\eta = 3\varepsilon/4$ outputs a $({\cal A}^{\rm SMC}_{{\cal C}, {\cal G}}, \varepsilon/4)$-outcome-indistinguishable predictor $\tilde{p}$ with probability at least $99\%$ using at most \[tn \lesssim \left(\log(|{\cal C}|) + |{\cal Y}||{\cal O}|\right) \cdot \left(\frac{4}{\varepsilon}\right)^{|{\cal O}|-1} \cdot \frac{\log(|{\cal O}|)}{\varepsilon^4}\] samples. Since ${\cal G}$ is an $\eta$-covering of $\Delta{\cal O}$, the inequality $\delta(\hat{o}_i, \tilde{o}_i \mid i) \le \eta$ holds almost surely, so \[\E\left[\max_{c \in {\cal C}} \, \delta \big( (c_i, \hat{o}_i), (c_i, o^*_i) \mid \hat{p}_i \big)\right] \le \E\left[\max_{c \in {\cal C}} \, \delta \big( (c_i, \tilde{o}_i), (c_i, o^*_i) \mid \hat{p}_i \big)\right] + \eta.\] The expectation on the right hand side is at most $\varepsilon/4$ by the definition of statistical distance and $({\cal A}_{\cal C, G}^{\mathrm{SMC}}, \varepsilon/4)$-outcome-indistinguishability of $\tilde{p}$, so the right hand side is at most $\varepsilon$. We conclude that the discretized predictor $\hat{p}$ is strictly $({\cal C}, \varepsilon)$-multicalibrated.
\end{proof}

\subsection{Improvements in Special Cases}

In the case that ${\cal C} \subseteq \{0, 1\}^{\cal X}$, we can refine Theorem~\ref{thm:multicalibration-samples} so that its bound depends on the \emph{Vapnik–Chervonenkis dimension} ${\sf VC}({\cal C})$ of ${\cal C}$ instead of the logarithm of the cardinality of ${\cal C}$. To do so, we first state some definitions and lemmas that follow directly from basic properties of VC dimension that can be found in standard texts \cite{shalevshwartz2014understanding}.

\begin{definition}
    For finite sets ${\cal S}$ and ${\cal H} \subseteq \{0, 1\}^{\cal S}$, the \emph{VC dimension} ${\sf VC}({\cal H})$ of ${\cal H}$ is the size of the largest subset ${\cal T} \subseteq {\cal S}$ such that every possible function ${\cal T} \to \{0, 1\}$ is the restriction of some function in ${\cal H}$. Such a set ${\cal T}$ is said to be \emph{shattered} by ${\cal H}$. Given a family ${\cal A}$ of distinguishers and a predictor $\tilde{p}$, we write ${\sf VC}({\cal A}_{\tilde{p}})$ to denote the VC dimension of the collection ${\cal A}_{\tilde{p}} \subseteq \{0, 1\}^{{\cal X} \times {\cal O}}$ of functions $A_{\tilde{p}} : {\cal X} \times {\cal O} \to \{0, 1\}$ given by \(A_{\tilde{p}}(j, o) = A(j, o, \tilde{p})\) for $A \in {\cal A}$.
\end{definition}

\begin{lemma}
    \label{thm:vc-samples-per-iteration}
    For every family ${\cal A}$ of distinguishers and every $\varepsilon > 0$, there exists a weak agnostic learning algorithm ${\sf WAL}_{{\cal A}, \varepsilon}$ with failure probability $\beta$ that draws at most \(O\left(({\sf VC}({\cal A}_{\tilde{p}}) + \log(1/\beta))/\varepsilon^2\right)\) samples.
\end{lemma}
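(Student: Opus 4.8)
The plan is a routine VC uniform-convergence argument, with one twist for the modeled-outcome term. Fix the predictor $\tilde{p}$ and write $d = {\sf VC}({\cal A}_{\tilde{p}})$ for the VC dimension of the Boolean class ${\cal A}_{\tilde{p}} \subseteq \{0,1\}^{{\cal X}\times{\cal O}}$ of maps $(j,o)\mapsto A(j,o,\tilde{p})$ for $A\in{\cal A}$. The learner ${\sf WAL}_{{\cal A},\varepsilon}$ draws $m = O\big((d + \log(1/\beta))/\varepsilon^2\big)$ i.i.d. data samples $(i^{(1)},o^{*(1)}),\ldots,(i^{(m)},o^{*(m)})$ and, for each $k$, additionally simulates a modeled outcome $\tilde{o}^{(k)}\sim\tilde{p}_{i^{(k)}}$ using internal randomness (this is free in terms of data samples, since $\tilde{p}$ is an input). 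It then computes, for each $A\in{\cal A}$, the empirical advantage $\hat{\Delta}_A = \frac1m\sum_{k=1}^m A(i^{(k)},\tilde{o}^{(k)},\tilde{p}) - \frac1m\sum_{k=1}^m A(i^{(k)},o^{*(k)},\tilde{p})$, returns any $A$ with $\hat{\Delta}_A > 3\varepsilon/4$ if one exists, and returns $\bot$ otherwise. (Only the number of data samples is bounded here; computation over ${\cal A}$ is not our concern.)

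For the analysis, recall the VC uniform convergence theorem (see, e.g., \cite{shalevshwartz2014understanding}): for a Boolean class ${\cal H}$ of VC dimension $d$ and any distribution $P$ on its domain, a sample of $O((d+\log(1/\delta))/\gamma^2)$ i.i.d. points from $P$ satisfies, with probability at least $1-\delta$, that every $h\in{\cal H}$ has empirical mean within $\gamma$ of $\E_P[h]$. Apply this twice to ${\cal H} = {\cal A}_{\tilde{p}}$ with $\gamma = \varepsilon/8$ and $\delta = \beta/2$: once with $P$ the law of $(i,o^*_i)$ and the samples $(i^{(k)},o^{*(k)})$, and once with $P$ the law of $(i,\tilde{o}_i)$ and the samples $(i^{(k)},\tilde{o}^{(k)})$, which are i.i.d. copies of $(i,\tilde{o}_i)$ because $i^{(k)}$ is drawn from the marginal of $i$ and $\tilde{o}^{(k)}\mid i^{(k)}\sim\tilde{p}_{i^{(k)}}$ (the two samples reuse the $i^{(k)}$'s, but a union bound is indifferent to that dependence). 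This choice of $\gamma,\delta$ is exactly what makes $m = O((d+\log(1/\beta))/\varepsilon^2)$ suffice. On the resulting event of probability at least $1-\beta$, we have $|\hat{\Delta}_A - \Delta_A(\tilde{p})| \le \varepsilon/4$ for every $A\in{\cal A}$, and then the two requirements of Definition \ref{def:weak-agnostic-learner} follow by inspecting the threshold $3\varepsilon/4$: if some $A$ has $\Delta_A(\tilde{p}) > \varepsilon$ then $\hat{\Delta}_A > 3\varepsilon/4$, so the learner outputs some $A'$ with $\hat{\Delta}_{A'} > 3\varepsilon/4$ and hence $\Delta_{A'}(\tilde{p}) > \varepsilon/2$; and whenever the learner outputs any $A'$ it has $\hat{\Delta}_{A'} > 3\varepsilon/4$, forcing $\Delta_{A'}(\tilde{p}) > \varepsilon/2$, which (together with the option of outputting $\bot$) covers the second bullet.

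The only step that is not entirely mechanical is estimating the modeled-outcome term $\E[A(i,\tilde{o}_i,\tilde{p})]$ uniformly over $A$. The tempting move of replacing each sampled $\tilde{o}^{(k)}$ by the exact conditional expectation $\sum_{o}\tilde{p}_{i^{(k)}}(o)A(i^{(k)},o,\tilde{p})$, which the learner can compute, would require a generalization bound for the class of $[0,1]$-valued functions $i\mapsto\E_{o\sim\tilde{p}_i}[A(i,o,\tilde{p})]$ rather than for the Boolean class ${\cal A}_{\tilde{p}}$, complicating matters. Drawing one fresh modeled outcome per individual instead keeps everything inside a second, plain application of VC uniform convergence for the same class ${\cal A}_{\tilde{p}}$, at only a constant-factor cost in $m$. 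Note also that, unlike Theorem \ref{thm:generic-sample-complexity}, this lemma does not need ${\cal A}$ to be closed under complement.
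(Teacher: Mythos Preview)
Your proof is correct and supplies exactly the standard VC uniform-convergence argument that the paper invokes but does not spell out: the paper states this lemma without proof, remarking only that it ``follow[s] directly from basic properties of VC dimension that can be found in standard texts \cite{shalevshwartz2014understanding}.'' Your handling of the modeled-outcome term by simulating $\tilde{o}^{(k)}\sim\tilde{p}_{i^{(k)}}$ (rather than computing the exact conditional expectation) is a clean way to stay within a single Boolean class and reuse the same uniform-convergence bound, and the threshold arithmetic with $3\varepsilon/4$ is fine.
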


With these tools in hand, we now state and prove the main result of this section.

\begin{theorem}
    \label{thm:multicalibration-vc-samples}
    Fix ${\cal C} \subseteq \{0, 1\}^{\cal X}$ and let $\ell = |{\cal O}|$. There is an algorithm that takes as input \[O\left({\sf VC}({\cal C}) + \ell\left(\frac{4}{\varepsilon}\right)^{\ell-1}\right) \cdot \frac{\log(\ell)}{\varepsilon^4}\] i.i.d. individual-outcome pairs and outputs a strictly $({\cal C}, \varepsilon)$-multicalibrated predictor w.p. $99\%$.
\end{theorem}

\begin{proof}
    Given ${\cal C}$, ${\cal G}$, and $E \subseteq \{0, 1\} \times {\cal O} \times {\cal G}$, let \({\cal A}_{{\cal C}, {\cal G}, E}^{\mathrm{MC}} = \left\{ A_{c,E} \mid c \in {\cal C}\right\}\) where \[A_{c,E}(j, o, \tilde{p}) = {\bf 1}[(c_j, o, \hat{p}_j) \in E]\] for each member $j \in {\cal X}$, possible outcome $o \in {\cal O}$, and predictor $\tilde{p} : {\cal X} \to \Delta{\cal O}$. Note that \[{\cal A}_{{\cal C}, {\cal G}, \tilde{p}}^{\rm MC} = \bigcup_E {\cal A}_{{\cal C}, {\cal G}, E, \tilde{p}}^{\mathrm{MC}},\] where the union is taken over all $2^{2|{\cal O}||{\cal G}|}$ possible choices of $E$. We claim that \[{\sf VC}\left({\cal A}_{{\cal C}, {\cal G}, E, \tilde{p}}^{\mathrm{MC}}\right) \le {\sf VC}({\cal C)}\] for any $E$ and any predictor $\tilde{p}$. Once this is shown, it will follow from Lemma \ref{thm:vc-samples-per-iteration} with failure probability $\beta/2^{2|{\cal O}||{\cal G}|}$ and a union bound over the choice of $E$ that there exists a weak agnostic learning algorithm ${\sf WAL}_{{\cal A}_{{\cal C}, {\cal G}}^{\mathrm{MC}}, \varepsilon}$ with failure probability $\beta$ that uses only $O\big(({\sf VC}({\cal C}) + \log(2^{2|{\cal O}||{\cal G}|}/\beta))/\varepsilon^2\big)$ samples. Proceeding as in the proof of Lemma~\ref{thm:generic-sample-complexity} and Theorem~\ref{thm:multicalibration-samples} yields the desired sample complexity bound.
    
    It remains to prove the claim. To this end, fix $E$ and $\tilde{p}$ and let ${\cal T} = \{(j_1, o_1), \ldots, (j_d, o_d)\} \subseteq {\cal X} \times {\cal O}$ be a maximum size set shattered by ${\cal A}_{{\cal C}, {\cal G}, E, \tilde{p}}^{\rm MC}$. Consider the element $(j_d, o_d) \in {\cal T}$. In order for there to exist two distinguishers $A_{c, E}$ and $A_{c', E}$ such that $A_{c,E}(j_d, o_d, \tilde{p}) = 0$ and $A_{c',E}(j_d, o_d, \tilde{p}) = 1$, it must be the case that $c(j_d) \neq c'(j_d)$, or else we would have $A_{c,E}(j_d, o_d, \tilde{p}) = A_{c', E}(j_d, o_d, \tilde{p})$. More generally, if the restrictions of $A_{c,E}$ and $A_{c',E}$ to ${\cal T}$ are distinct, then the restrictions of $c$ and $c'$ to $\{j_1, \ldots, j_d\}$ must also be distinct. Since the distinguishers shatter ${\cal T}$, it follows that $\{j_1, \ldots, j_d\} \subseteq {\cal X}$ is shattered by ${\cal C}$ and hence that $d \le {\sf VC}({\cal C})$. This completes the proof of the claim.
\end{proof}

\subsection{A Randomized Approach}

In this section, we give a variant of Theorem~\ref{thm:multicalibration-samples} using an alternate algorithm for achieving multicalibration in the case that ${\cal C} \subseteq \{0, 1\}^{\cal X}$ that we believe may be of interest. We will present the algorithm in this section with a slightly simpler notion of weak agnostic learning than in the preceding sections. We caution the reader that the sample complexity upper bound we derive in Theorem~\ref{thm:multicalibration-samples-random-signs} will not be as tight as that of Theorem~\ref{thm:multicalibration-samples}.

\begin{definition}
    \label{def:weak-agnostic-learner-standard}
    Suppose ${\cal C} \subseteq \{0, 1\}^{\cal X}$. Let ${\sf WAL}_{{\cal C}}$ be an algorithm that takes as input a parameter $\varepsilon > 0$ and a sequence of labeled data $(x_1, y_1), (x_2, y_2), \ldots \in {\cal X} \times [-1, 1]$, and outputs either a function $c \in {\cal C}$ or the symbol $\bot$. Consider a fixed joint distribution over ${\cal X} \times [-1, 1]$, and a pair $(x, y)$ drawn from this distribution. We say that ${\sf WAL}_{{\cal C}}$ is a \emph{weak agnostic learner} with failure probability $\beta$ with respect to this distribution if the following two conditions hold when ${\sf WAL}_{{\cal C}, \varepsilon}$ is run on input $\varepsilon$ and i.i.d. copies of $(x, y)$:
    \begin{itemize}
        \item If there exists $c \in {\cal C}$ such that $\E[c_x y] > \varepsilon$, then ${\sf WAL}_{{\cal C}}$ outputs $c' \in {\cal C}$ such that $\E[c'_x y] > \varepsilon/2$ with probability at least $1-\beta$.
        \item If every $c \in {\cal C}$ satisfies $\E[c_x y] \le \varepsilon$, then ${\sf WAL}_{{\cal C}}$ outputs $c' \in {\cal C}$ such that $\E[c'_x y] > \varepsilon/2$ or $\bot$ with probability at least $1 - \beta$.
    \end{itemize}
\end{definition}

\begin{algorithm}
    \caption{Outcome Indistinguishability via Randomized Distinguisher Selection}
    \label{alg:oi-via-random-signs}
    \DontPrintSemicolon
    \SetKw{Return}{return}
    \SetKwProg{Def}{def}{:}{end}
    \SetKwFunction{Label}{Label}
    \SetKwFunction{Select}{Select-Distinguisher}
    \SetKwFunction{Construct}{Construct}

    \vspace{0.2cm}
    
    \Def{\Label{$j, o, \tilde{p}, E$}}{
        $o' \sim \tilde{p}_j$ \tcp*{draw $o' \in {\cal O}$ from the distribution $\tilde{p}_j$}
        \Return ${\bf 1}[(1, o', \hat{p}_j) \in E] - {\bf 1}[(1, o, \hat{p}_j) \in E]$\;
    }
    
    \vspace{0.2cm}
    
    \Def{\Select{$\varepsilon, \tilde{p}, \mathsf{WAL}_{{\cal C}}$}}{
        $S \gets O(\log(1/\beta))$ \tcp*{set parameters}
        $T \gets O(({\sf VC}({\cal C}) + \log(S/\beta))/\varepsilon^2)$\;
        \For(\tcp*{one-sided error amplification}){$s = 1, \ldots, S$}{
            $E_s \sim 2^{\{1\} \times {\cal O} \times {\cal G}}$ \tcp*{draw $E_s \subseteq \{1\} \times {\cal O} \times {\cal G}$ uniformly at random}
            \For{$t = 1, \ldots, T$}{
                $(i_{st}, o^*_{i_{st}}) \sim (i, o^*_i)$ \tcp*{draw independent copy of $(i, o^*_i)$}
                $(x_{st}, y_{st}) \gets \left((i_{st}, o^*_{i_{st}}), \; \text{\Label{\(i_{st}, o^*_{i_{st}}, \tilde{p}, E_s\)}}\right)$\;
            }
            $c_s \gets {\sf WAL}_{{\cal C}}\left(\varepsilon, (x_{s1}, y_{s1}), \ldots, (x_{sT} y_{sT})\right)$ \tcp*{check for $({\cal C}, \varepsilon)$-multicalibration violation}
            \If{$c_s \neq \bot$}{
                \Return $A_{c_s, E_s}$\;
            }
        }
        \Return $\bot$\;
    }
    
    \vspace{0.2cm}
    
    \Def{\Construct{$\varepsilon, \tilde{p}, \mathsf{WAL}_{{\cal C}}, {\sf update}$}}{
        $\varepsilon' \gets \varepsilon / 8\sqrt{|{\cal O}||{\cal G}|}$\;
        $A \gets$ \Select{$\varepsilon', \tilde{p}, {\sf WAL}_{{\cal C}}$}\;
        \If{$A = \bot$}{
            \Return $\tilde{p}$\;
        }
        \For{$j \in {\cal X}$}{
            \For{$o \in {\cal O}$}{
                $L_j(o) \gets A(j, o, \tilde{p})$ \tcp*{define loss function}
            }
            $\tilde{p}'_j \gets {\sf update}(\tilde{p}_j, L_j)$ \tcp*{no-regret update}
        }
        \Return \Construct{$\varepsilon, \tilde{p}', \mathsf{WAL}_{{\cal C}}, {\sf update}$} \tcp*{recurse}
    }
    
\end{algorithm}

\begin{lemma}
    \label{thm:random-signs-lemma}
    Let ${\cal A} = {\cal A}_{{\cal C}, {\cal G}}^{\rm MC}$ and $\varepsilon' = \varepsilon / 8\sqrt{|{\cal O}||{\cal G}|}$. The procedure $\textsc{Select-Distinguisher}(\varepsilon', \tilde{p}, {\sf WAL}_{{\cal C}})$ in Algorithm \ref{alg:oi-via-random-signs} samples at most $O(({\sf VC}({\cal C}) + \log(1/\beta))\log(1/\beta)/\varepsilon'^2)$ copies of $(i, o^*_i)$ and its output satisfies the following two properties:
    \begin{itemize}
        \item If there exists $A \in {\cal A}$ with $\Delta_A > \varepsilon$, then the procedure returns $A' \in {\cal A}$ with $\Delta_{A'} > \varepsilon'/2$ with probability at least $1 - \beta$.
        \item If every $A \in {\cal A}$ satisfies $\Delta_A \le \varepsilon$, then the procedure returns either $A' \in {\cal A}$ with $\Delta_{A'} > \varepsilon' / 2$ or $\bot$ with probability at least $1 - \beta$.
    \end{itemize}
\end{lemma}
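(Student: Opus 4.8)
The plan is to check that \textsc{Select-Distinguisher} is a weak agnostic learner for ${\cal A} = {\cal A}_{{\cal C},{\cal G}}^{\rm MC}$ with the parameter degrading from $\varepsilon$ to $\varepsilon' = \varepsilon/8\sqrt{|{\cal O}||{\cal G}|}$, by reducing to the given learner ${\sf WAL}_{{\cal C}}$ for the simpler $\{0,1\}$-valued class ${\cal C}$. The first step is an exact identity relating the labeled data fed to ${\sf WAL}_{{\cal C}}$ to the quantity $\Delta_{A_{c,E_s}}(\tilde p)$ we want to make large. Fix a round $s$ with its set $E_s \subseteq \{1\}\times{\cal O}\times{\cal G}$, and let $(x,y)$ be one of its labeled pairs: $i$ is drawn from the population, $o^*_i$ from its conditional law given $i$, and an independent $o'\sim\tilde p_i$ inside \textsc{Label}, with $y = {\bf 1}[(1,o',\hat p_i)\in E_s] - {\bf 1}[(1,o^*_i,\hat p_i)\in E_s]$. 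Since every $c\in{\cal C}$ is $\{0,1\}$-valued and $E_s$ has first coordinate always $1$, we have $c_i\cdot{\bf 1}[(1,o,\hat p_i)\in E_s] = {\bf 1}[(c_i,o,\hat p_i)\in E_s] = A_{c,E_s}(i,o,\tilde p)$ for every outcome $o$ (both sides vanish when $c_i=0$). Taking expectations and observing that $o'$ plays the role of the modeled outcome $\tilde o_i$ gives $\E[c_xy] = \Delta_{A_{c,E_s}}(\tilde p)$ for every $c\in{\cal C}$. Hence running ${\sf WAL}_{{\cal C}}$ on the pairs of round $s$ is exactly a search over $c$ for a large value of $\Delta_{A_{c,E_s}}(\tilde p)$, and the soundness bullet is immediate: if every $A\in{\cal A}$ has $\Delta_A\le\varepsilon$, then every $c$ has $\E[c_xy] = \Delta_{A_{c,E_s}}(\tilde p)\le\varepsilon$ in every round, so the two defining properties of ${\sf WAL}_{{\cal C}}$ — instantiated with failure probability $\beta/(2S)$, which by a Hoeffding-plus-VC argument costs the $T = O\big(({\sf VC}({\cal C})+\log(S/\beta))/(\varepsilon')^2\big)$ samples the pseudocode allocates — together with a union bound over the $S$ rounds guarantee that with probability $\ge 1-\beta$ any returned $c_s$ satisfies $\Delta_{A_{c_s,E_s}}(\tilde p) = \E[c_{s,x}y] > \varepsilon'/2$.

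The substantive part is the completeness bullet, where the loss of the $\sqrt{|{\cal O}||{\cal G}|}$ factor is an $\ell_1$-to-$\ell_2$ conversion via anti-concentration of random signs. Suppose $\Delta_A > \varepsilon$ for some $A = A_{c^*,E^*}\in{\cal A}$. Splitting $E^*$ into its parts over $\{0\}\times{\cal O}\times{\cal G}$ and $\{1\}\times{\cal O}\times{\cal G}$ and rewriting the former in terms of $1-c^*$ (which we may take to lie in ${\cal C}$, i.e.\ ${\cal C}$ closed under complement as elsewhere in this section), we may assume, losing a factor of $2$, that $E^*\subseteq\{1\}\times{\cal O}\times{\cal G}$ and $\Delta_{A_{c^*,E^*}}(\tilde p) > \varepsilon/2$. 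For $(o,g)\in{\cal O}\times{\cal G}$ put $b_{o,g} = \Pr[c^*_i=1,\tilde o_i=o,\hat p_i=g] - \Pr[c^*_i=1,o^*_i=o,\hat p_i=g]$, so $\Delta_{A_{c^*,E}}(\tilde p) = \sum_{(1,o,g)\in E}b_{o,g}$ for every $E\subseteq\{1\}\times{\cal O}\times{\cal G}$. The crucial observation is that $\sum_{o,g}b_{o,g} = 0$, because this sum is exactly the advantage $\Delta_{A_{c^*,\{1\}\times{\cal O}\times{\cal G}}}(\tilde p)$ of the distinguisher that outputs $c^*_j$ and ignores the outcome. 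Thus the positive and negative parts of $b\in\mathbb{R}^{{\cal O}\times{\cal G}}$ have equal $\ell_1$ mass; since $E^*$ achieves $\sum_{(1,o,g)\in E^*}b_{o,g}>\varepsilon/2$, the positive part exceeds $\varepsilon/2$, so $\|b\|_1 > \varepsilon$ and, with $n:=|{\cal O}||{\cal G}|$, Cauchy–Schwarz gives $\|b\|_2 > \varepsilon/\sqrt{n}$.

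Now a uniformly random $E_s\sim 2^{\{1\}\times{\cal O}\times{\cal G}}$ corresponds to i.i.d.\ signs $\sigma_{o,g}\in\{\pm1\}$ with ${\bf 1}[(1,o,g)\in E_s] = (1+\sigma_{o,g})/2$, so $\Delta_{A_{c^*,E_s}}(\tilde p) = \frac12\sum_{o,g}b_{o,g} + \frac12\sum_{o,g}\sigma_{o,g}b_{o,g} = \frac12\sum_{o,g}\sigma_{o,g}b_{o,g}$ by the vanishing sum. The variable $R := \sum_{o,g}\sigma_{o,g}b_{o,g}$ is symmetric about $0$ with $\E[R^2] = \|b\|_2^2$ and $\E[R^4]\le 3\|b\|_2^4$, so by the Paley–Zygmund inequality $\Pr[R^2 \ge \frac12\|b\|_2^2] \ge \frac1{12}$ and hence $\Pr[R \ge \|b\|_2/\sqrt{2}] \ge \frac1{24}$. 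Therefore, for each round $s$ independently, $\Pr_{E_s}[\Delta_{A_{c^*,E_s}}(\tilde p) \ge \varepsilon/(2\sqrt{2}\sqrt{n})] \ge \frac1{24}$, and $\varepsilon/(2\sqrt{2}\sqrt{n}) \ge \varepsilon' = \varepsilon/(8\sqrt{n})$. Call round $s$ \emph{lucky} if some $c\in{\cal C}$ satisfies $\Delta_{A_{c,E_s}}(\tilde p) > \varepsilon'$; by the above this occurs with probability $\ge 1/24$, independently over the i.i.d.\ draws $E_s$. On a lucky round in which ${\sf WAL}_{{\cal C}}$ does not fail, the first property of ${\sf WAL}_{{\cal C}}$ forces $c_s\ne\bot$ with $\Delta_{A_{c_s,E_s}}(\tilde p) = \E[c_{s,x}y] > \varepsilon'/2$, so the procedure returns a distinguisher of advantage $>\varepsilon'/2$; on a non-lucky round the second property ensures that any returned $c_s$ is likewise good. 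So provided no call to ${\sf WAL}_{{\cal C}}$ fails and at least one round is lucky, the output is good.

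To finish, choosing $S = O(\log(1/\beta))$ makes $\Pr[\text{no lucky round}] \le (1-1/24)^S \le \beta/2$, and a union bound over the at most $S$ calls to ${\sf WAL}_{{\cal C}}$ (each failing with probability $\le\beta/(2S)$) bounds the total failure probability by $\beta$, giving both bullets. The sample count is $S$ rounds times $T$ copies of $(i,o^*_i)$ per round, i.e.\ $O\big(({\sf VC}({\cal C})+\log(1/\beta))\log(1/\beta)/(\varepsilon')^2\big)$, and the internal draws $o'\sim\tilde p_i$ inside \textsc{Label} are not samples of $(i,o^*_i)$. I expect the anti-concentration step to be the main obstacle — in particular the observation that the constant term $\frac12\sum_{o,g}b_{o,g}$ vanishes identically, which is precisely what stops a random $E_s$ from being biased toward a non-positive advantage and lets a single random sign pattern recover an $\Omega(\varepsilon/\sqrt{n})$ advantage with constant probability; the remaining ingredients (the algebraic identity, the VC sample bound for ${\sf WAL}_{{\cal C}}$, and the two union bounds) are routine.
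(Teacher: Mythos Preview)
Your proof is correct and follows essentially the same route as the paper's: the identity $\E[c_xy]=\Delta_{A_{c,E_s}}(\tilde p)$, the reduction to $E^*\subseteq\{1\}\times{\cal O}\times{\cal G}$ via closure under complement, the Rademacher representation of $\Delta_{A_{c^*,E_s}}$, the Cauchy--Schwarz $\ell_1$-to-$\ell_2$ step, and Paley--Zygmund for anti-concentration are all exactly what the paper does. You are in fact more explicit than the paper on the key point that $\sum_{o,g}b_{o,g}=0$ (which the paper uses silently when writing $\E[c_i\,\textsc{Label}(\cdots)\mid E_s]=\tfrac12\sum\sigma b$ with no constant term), and your lucky/non-lucky case split makes the union-bound bookkeeping cleaner than the paper's sketch.
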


\begin{proof}
    Suppose first that there exists a distinguisher $A_{c, E} \in {\cal A}$ with advantage $\Delta_{A_{c,E}} > \varepsilon$, where $c \in {\cal C}$ and $E \subseteq \{0, 1\} \times {\cal O} \times {\cal G}$. This means that \[\Pr[(c_i, \tilde{o}_i, \hat{p}_i) \in E] - \Pr[(c_i, o^*_i, \hat{p}_i) \in E] > \varepsilon,\] which can also be written as \[\E\big[{\bf 1}[(c_i, \tilde{o}_i, \hat{p}_i) \in E] - {\bf 1}[(c_i, o^*_i, \hat{p}_i) \in E]\big] > \varepsilon.\] It follows that at least one of the two inequalities \[\E\Big[c_i \Big({\bf 1}[(1, \tilde{o}_i, \hat{p}_i) \in E] - {\bf 1}[(1, o^*_i, \hat{p}_i) \in E]\Big)\Big] > \frac{\varepsilon}{2}\] or \[\E\Big[(1 - c_i) \Big({\bf 1}[(0, \tilde{o}_i, \hat{p}_i) \in E] - {\bf 1}[(0, o^*_i, \hat{p}_i) \in E]\Big)\Big] > \frac{\varepsilon}{2}\] must hold. If ${\cal C}$ is closed under complement, we may assume without loss of generality that the first inequality holds. Using the expression $\textsc{Label}$ defined in the algorithm, we may rewrite this inequality as \[\E\left[c_i \cdot \textsc{Label}(i, o^*_i, \tilde{p}, E)\right] > \frac{\varepsilon}{2}.\]

    Consider a fixed iteration $s \in [S]$ in the main loop of procedure $\textsc{Select-Distinguisher}$. Using standard anti-concentration inequalities, we will show that \[\E\Big[c_i \cdot \textsc{Label}(i, o^*_i, \tilde{p}, E_s) \;\Big|\; E_s\Big] > \varepsilon'\] with probability at least $\Omega(1)$ (over the randomness in the choice of $E_s \subseteq \{1\} \times {\cal O} \times {\cal G}$).

    Once this is shown, it will follow that for an appropriate choice of the number of iterations $S = O(\log(1/\beta))$, with probability at least $1 - \beta/2$ over the draws of $E_1, \ldots, E_S$, there will exist some iteration $s \in [S]$ such that the set $E_s$ has the above property.

    Additionally, ${\sf WAL}_{{\cal C}}$ succeeds in all $S$ iterations with probability at least $1 - \beta/2$ since it is fed $T = O(({\sf VC}({\cal C}) + \log(S/\beta)/(\varepsilon')^2)$ fresh labeled samples in each iteration.
    
    Consequently, $\textsc{Select-Distinguisher}$ returns a distinguisher $A_{c_s, E_s}$ such that \[\Delta_{A_{c_s, E_s}} = \E\Big[c_{si} \cdot \textsc{Label}(i, o^*_i, \tilde{p}, E_s) \;\Big|\; c_s, E_s\Big] > \frac{\varepsilon}{2}\] with probability at least $1 - \beta$ over all of the internal randomness of the algorithm (i.e., the draws of $E_1, \ldots E_S$ and the samples $(i_{st}, o^*_{i_{st}})$ fed to ${\sf WAL}_{{\cal C}}$).

    In the case that every distinguisher $A_{c,E} \in {\cal A}$ satisfies $\Delta_{A_{c,E}} \le 3\sqrt{|{\cal O}||{\cal G}|}\varepsilon$, the error guarantee of ${\sf WAL}_{{\cal C}}$ similarly ensures that the procedure returns either $A' \in {\cal A}$ with $\Delta_{A'} > \varepsilon / 2$ or $\bot$ with probability at least $1-\beta$.

    It remains to prove the claim that the existence of a set $E \subseteq \{0, 1\} \times {\cal O} \times {\cal G}$ satisfying \[\E[c_i \cdot \textsc{Label}(i, o^*_i,\tilde{p}, E)] > \frac{\varepsilon}{2}\] implies that \[\E[c_i \cdot \textsc{Label}(i, o^*_i,\tilde{p}, E_s) \mid E_s] > \varepsilon'\] with probability at least $\Omega(1)$ over the draw of a uniformly random $E_s \subseteq \{1\} \times {\cal O} \times {\cal G}$. To begin, define the sign $\sigma_{E_s}(o, v)$ to be $1$ if $(1, o, v) \in E_s$ and $-1$ if $(1, o, v) \notin E_s$. It is clear that the collection of random variables $\{ \sigma_{E_s}(o, v) \mid o \in {\cal O}, v \in {\cal G}\}$ are independent Rademacher random variables.

    Some algebra shows that \[\E[c_i \textsc{Label}(i, o^*_i,\tilde{p}, E_s) \mid E_s] = \frac{1}{2}\sum_{\substack{o \in {\cal O} \\ v \in {\cal G}}} \sigma_{E_s}(o, v) \Big(\Pr[c_i = 1, \tilde{o}_i = o, \hat{p}_i = v] - \Pr[c_i = 1, o^*_i = o, \hat{p}_i = v]\Big),\] which clearly has mean $0$ and standard deviation \[\frac{1}{2}\left(\sum_{\substack{o \in {\cal O} \\ v \in {\cal G}}}\Big(\Pr[c_i = 1, \tilde{o}_i = o, \hat{p}_i = v] - \Pr[c_i = 1, o^*_i = o, \hat{p}_i = v]\Big)^2\right)^{\frac{1}{2}}.\]

    By the Cauchy-Schwarz inequality, this standard deviation is at least 
    \[\frac{1}{2\sqrt{|{\cal O}||{\cal G}|}}\sum_{\substack{o \in {\cal O} \\ v \in {\cal G}}} \Big| \Pr[c_i = 1, \tilde{o}_i = o, \hat{p}_i = v] - \Pr[c_i = 1, o^*_i = o, \hat{p}_i = v] \Big|,\] which is at least \[\frac{1}{2\sqrt{|{\cal O}||{\cal G}|}} \E[c_i \cdot \textsc{Label}(i, o^*_i, \tilde{p}, E)] > \frac{\varepsilon}{4\sqrt{|{\cal O}||{\cal G}|}} = 2\varepsilon'.\] To conclude the proof, we observe any symmetric random variable with mean $0$ and standard deviation $2\varepsilon'$ must exceed $\varepsilon'$ with probability $\Omega(1)$ (e.g., by the Paley-Zygmund inequality).
\end{proof}

\begin{theorem}
    \label{thm:multicalibration-samples-random-signs}
    Fix ${\cal C} \subseteq \{0, 1\}^{\cal X}$ and let $\ell = |{\cal O}|$. There is a constant $c > 0$ and an algorithm that takes as input \((c/\varepsilon)^{2\ell + c} \cdot {\sf VC}({\cal C})\) i.i.d. individual-outcome pairs and outputs a $({\cal C}, \varepsilon)$-multicalibrated predictor w.p. $99\%$ while only accessing ${\cal C}$ through calls to ${\sf WAL}_{{\cal C}}$.
\end{theorem}

\begin{proof}
    The proof of Theorem \ref{thm:oi-mirror-descent} with a step size proportional to $\varepsilon'$, along with the two properties in Lemma \ref{thm:random-signs-lemma}, gives a $O(\log(\ell)/\varepsilon'^2)$ upper bound on the number of updates. Lemma \ref{thm:random-signs-lemma} also upper bounds the number of samples needed per update by $O(\sf{VC}({\cal C})/\varepsilon'^2)$. Choosing $\beta$ appropriately and applying a union bound over the entire sequence of updates and substituting $\varepsilon' = \Theta(\varepsilon / \sqrt{|{\cal O}||{\cal G}|})$ yields the claimed sample complexity upper bound.
\end{proof}

\section{Graph Regularity as Structured Multicalibration}
\label{sec:graph-regularity}

\emph{Szemer\'{e}di's regularity lemma} \cite{szemeredi1975regular} is a cornerstone result in extremal graph theory with a wide range of applications in combinatorics, number theory, computational complexity theory, and other areas of mathematics. Roughly speaking, it states that any large, dense graph can be decomposed into parts that behave ``pseudorandomly'' in a certain precise sense. The \emph{Frieze-Kannan weak regularity lemma} \cite{frieze1996regularity} is a related result in graph theory with a qualitatively weaker conclusion, but parameter dependencies much better suited for algorithmic applications.

The goal of this section is to show that regularity partitions of a graph correspond to predictors satisfying multi-group fairness and an additional structural condition on their level sets. In Sections~\ref{sec:graph-regularity-definitions}, \ref{sec:graph-relationships}, and \ref{sec:intermediate-regularity-algorithm}, we state various definitions of graph regularity. In Section~\ref{sec:graph-regularity-reduction}, we state and prove thecorrespondence, which is the key result.

\subsection{Definitions of Graph Regularity}
\label{sec:graph-regularity-definitions}

Let $G = (V, E)$ be a graph, by which we mean that $V$ is a finite set and $E \subseteq V \times V$. For vertex subsets $S \subseteq V$ and $T \subseteq V$, let $e_G(S, T) = (S \times T) \cap E$ count the number of edges from $S$ to $T$, and let $d_G(S, T) = e_G(S, T) / |S||T|$ denote the \emph{density} of edges from $S$ to $T$. When the graph $G$ is clear from context, we will omit the subscript $G$ from $e_G$ and $d_G$.

To state Szemer\'{e}di's regularity lemma, we must first recall the notion of an $\varepsilon$-regular pair:

\begin{definition}
Let $X, Y \subseteq V$. We say that the pair $(X, Y)$ is \emph{$\varepsilon$-regular} if \[|d(S, T) - d(X, Y)| \le \varepsilon\] for all $S \subseteq X$ and $T \subseteq Y$ such that $|S| \ge \varepsilon|X|$ and $|T| \ge \varepsilon |Y|$.
\end{definition}

Intuitively, a pair $(X, Y)$ is $\varepsilon$-regular if edges from $X$ to $Y$ are distributed in a ``pseudorandom'' fashion. The Szemer\'{e}di regularity lemma finds a partition ${\cal P}$ of the vertices of $V$ such that most pairs of parts are $\varepsilon$-regular, in the following sense: 

\begin{definition}
\label{def:szemeredi-regularity}
A partition ${\cal P} = \{V_1, \ldots, V_m\}$ of $V$ satisfies \emph{Szemer\'{e}di $\varepsilon$-regularity} if \[\sum_{\substack{j,k \in [m] \\ (V_j, V_k)\text{ not }\varepsilon\text{-regular}}} |V_j||V_k| \le \varepsilon|V|^2.\]
\end{definition}

In contrast to Szemer\'{e}di regularity, which gives fine-grained ``local'' regularity guarantees on the pairs of regular parts, the weaker regularity condition of \cite{frieze1996regularity} gives only a coarse ``global'' regularity guarantee:

\begin{definition}
A partition ${\cal P} = \{V_1, \ldots, V_m\}$ of $V$ satisfies \emph{Frieze-Kannan $\varepsilon$-regularity} if for all $S, T \subseteq V$, \[\left|e(S, T) - \sum_{j,k \in [m]}d(V_j, V_k)|S \cap V_j||T \cap V_k|\right| \le \varepsilon|V|^2~.\] 
\end{definition}

\paragraph{Intermediate Regularity} We will soon show that for a certain instantiation of the multi-group fairness framework, Szemer\'{e}di regularity corresponds to strict multicalibration, and Frieze-Kannan regularity corresponds to multiaccuracy. Inspired by this connection, we will also show that (ordinary) multicalibration corresponds to an \emph{intermediate} notion of graph regularity that has, to our knowledge, not appeared in the prior literature:

\begin{definition}
Let $X, Y, S, T \subseteq V$. We say that the pair $(X, Y)$ is \emph{$(S, T, \varepsilon)$-regular} if \[|d(S \cap X, T \cap Y) - d(X, Y)| \le \varepsilon.\]
\end{definition}

\begin{definition}
\label{def:intermediate-regularity}
A partition ${\cal P} = \{V_1, \ldots, V_m\}$ of $V$ satisfies \emph{intermediate $\varepsilon$-regularity} if for all $S, T \subseteq V$, \[\sum_{\substack{j,k \in [m] \\ (V_j, V_k)\text{ not }(S, T, \varepsilon)\text{-regular}}} |S \cap V_j||T \cap V_k| \le \varepsilon|V|^2~.\] 
\end{definition}

We chose the name \emph{intermediate regularity} to emphasize that it is a strictly stronger notion than Frieze-Kannan weak regularity, but still strictly weaker than Szemer\'{e}di regularity. In Section~\ref{sec:graph-relationships}, we prove these claimed relationships. In Section~\ref{sec:intermediate-regularity-algorithm}, we present an algorithm for achieving intermediate regularity.

\subsection{Properties of Intermediate Regularity}
\label{sec:graph-relationships}

This section is devoted to the following two results, which establish the strict separation of our notion of intermediate regularity from Szemer\'{e}di regularity and from Frieze-Kannan regularity.

\begin{theorem}
\label{thm:frieze-kannan-intermediate-separation}
There is an absolute constant $c \in (0, 1)$ such that for all sufficiently small $\varepsilon > 0$:
\begin{itemize}
    \item For any graph $G$, if the vertex partition ${\cal P}$ satisfies intermediate $\varepsilon$-regularity, then ${\cal P}$ satisfies Frieze-Kannan $\varepsilon^c$-regularity.
    \item There exists a graph $G$ and a vertex partition ${\cal P}$ satisfying Frieze-Kannan $\varepsilon$-regularity but not intermediate $c$-regularity.
\end{itemize}
\end{theorem}

\begin{theorem}
\label{thm:intermediate-szemeredi-separation}
There is an absolute constant $c \in (0, 1)$ such that for all sufficiently small $\varepsilon > 0$:
\begin{itemize}
    \item For any graph $G$, if the vertex partition ${\cal P}$ satisfies Szemer\'{e}di $\varepsilon$-regularity, then ${\cal P}$ satisfies intermediate $\varepsilon^c$-regularity.
    \item There exists a graph $G$ such that any vertex partition ${\cal P}$ satisfying intermediate $\varepsilon$-regularity does not satisfy Szemer\'{e}di $\frac{c}{\sqrt{\log^*(1/\varepsilon)}}$-regularity.
\end{itemize}
\end{theorem}

In order to prove Theorems~\ref{thm:frieze-kannan-intermediate-separation} and \ref{thm:intermediate-szemeredi-separation}, it will be useful to introduce an alternative characterization of Szemer\'{e}di regularity, based on the notion of \emph{irregularity}:

\begin{definition}
Let $X, Y \subseteq V$. The \emph{irregularity} of the pair $(X, Y)$ is \[{\rm irreg}(X, Y) = \max_{\substack{S \subseteq X \\ T \subseteq Y}}\big|e(S, T) - d(X, Y)|S||T|\big|.\]
\end{definition}

Specifically, it is known that Szemer\'{e}di $\varepsilon$-regularity (Definition~\ref{def:szemeredi-regularity}) is equivalent to having irregularity at most $\varepsilon |V|^2$, up to a polynomial change in $\varepsilon$, where the irregularity of a partition is defined as follows:

\begin{definition}
The \emph{irregularity} of a partition ${\cal P} = \{V_1, \ldots, V_m\}$ of $V$ is \[{\rm irreg}({\cal P}) = \sum_{j,k \in [m]} {\rm irreg}(V_j, V_k). \]
\end{definition}

For more on this equivalence, we refer the reader to \cite{skorski2017cryptographic}. One can state an alternate version of the definition of intermediate regularity (Definition~\ref{def:intermediate-regularity}) that is equivalent up to a polynomial change in the $\varepsilon$ parameter. Specifically, one can check that intermediate $\varepsilon$-regularity is equivalent to having \emph{$(S, T)$-irregularity} at most $\varepsilon |V|^2$ for all $S, T \subseteq V$, up to a polynomial change in $\varepsilon$, where the $(S, T)$-irregularity of a partition is defined as follows:

\begin{definition}
Let $X, Y, S, T \subseteq V$. The \emph{$(S, T)$-irregularity} of the pair $(X, Y)$ is \[{\rm irreg}_{S, T}(X, Y) = \Big|e(S \cap X, T \cap Y) - d(X, Y)|S \cap X||T \cap Y|\Big|.\]
\end{definition}

\begin{definition}
\label{def:intermediate-irregularity}
The \emph{$(S, T)$-irregularity} of a partition ${\cal P} = \{V_1, \ldots, V_m\}$ of $V$ is \[{\rm irreg}_{S, T}({\cal P}) = \sum_{j,k \in [m]} {\rm irreg}_{S, T}(V_j, V_k). \]
\end{definition}

We now formally state and prove the claimed equivalence between Definitions~\ref{def:intermediate-regularity} and \ref{def:intermediate-irregularity}:

\begin{theorem}
\label{thm:intermediate-regularity-equivalence}
If ${\cal P}$ satisfies intermediate $\varepsilon$-regularity, then ${\cal P}$ has $(S, T)$-irregularity at most $2\varepsilon|V|^2$ for all $S, T \subseteq V$. Conversely, if ${\cal P}$ has $(S, T)$-irregularity at most $\varepsilon |V|^2$ for all $S, T \subseteq V$, then ${\cal P}$ satisfies intermediate $\sqrt{\varepsilon}$-regularity.
\end{theorem}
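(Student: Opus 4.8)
The plan is to prove both implications with the same elementary device: fix an arbitrary pair $S, T \subseteq V$, split the double sum $\sum_{j,k}$ defining ${\rm irreg}_{S,T}({\cal P})$ into the contribution of the $(S,T,\varepsilon)$-regular cell pairs and the contribution of the non-regular ones, and bound each piece. The one preliminary observation I would record is the identity
\[{\rm irreg}_{S,T}(V_j,V_k) = \big|d(S\cap V_j,\, T\cap V_k) - d(V_j,V_k)\big|\cdot|S\cap V_j|\,|T\cap V_k|,\]
which is immediate from $e(A,B) = d(A,B)|A||B|$, together with the convention that a pair with $S\cap V_j = \emptyset$ or $T\cap V_k = \emptyset$ is counted as regular (this is harmless, since such a pair contributes $0$ to every sum below).

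For the forward direction I would fix $S, T$ and write ${\rm irreg}_{S,T}({\cal P})$ as a sum over the $(S,T,\varepsilon)$-regular pairs plus a sum over the non-regular pairs. On a regular pair the displayed identity and the definition of $(S,T,\varepsilon)$-regularity give ${\rm irreg}_{S,T}(V_j,V_k)\le\varepsilon|S\cap V_j||T\cap V_k|$; summing these and using $\sum_{j,k}|S\cap V_j||T\cap V_k| = |S|\,|T|\le|V|^2$ (because the $V_j$ partition $V$) bounds the regular part by $\varepsilon|V|^2$. On a non-regular pair I would bound the density difference crudely by $1$, giving ${\rm irreg}_{S,T}(V_j,V_k)\le|S\cap V_j||T\cap V_k|$, and then invoke intermediate $\varepsilon$-regularity, which says precisely that the sum of these products over non-regular pairs is at most $\varepsilon|V|^2$. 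Adding the two estimates yields ${\rm irreg}_{S,T}({\cal P})\le 2\varepsilon|V|^2$, and since $S,T$ were arbitrary this is the first claim.

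For the converse I would again fix $S, T$, assume ${\rm irreg}_{S,T}({\cal P})\le\varepsilon|V|^2$, and note that on any cell pair that is \emph{not} $(S,T,\sqrt\varepsilon)$-regular the density difference exceeds $\sqrt\varepsilon$, so the displayed identity gives ${\rm irreg}_{S,T}(V_j,V_k)>\sqrt\varepsilon\,|S\cap V_j||T\cap V_k|$. Summing over the non-regular pairs and using that this partial sum is at most the full sum ${\rm irreg}_{S,T}({\cal P})$,
\[\sqrt\varepsilon \sum_{(V_j,V_k)\ \text{not}\ (S,T,\sqrt\varepsilon)\text{-regular}} |S\cap V_j|\,|T\cap V_k| \;<\; {\rm irreg}_{S,T}({\cal P}) \;\le\; \varepsilon|V|^2,\]
so the left-hand sum is at most $\sqrt\varepsilon|V|^2$, which is exactly intermediate $\sqrt\varepsilon$-regularity for this $S,T$. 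As $S,T$ are arbitrary, we are done.

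I do not expect a genuine obstacle here; the argument is a short double counting. The one place to stay alert is the contrast with the classical equivalence between Szemer\'edi regularity and small irregularity, where the internal maximum over sub-subsets forces a positive/negative sign decomposition of the witnessing sets: here $(S,T)$-irregularity of a pair is evaluated at a single fixed $S,T$ with no internal optimization, so ${\rm irreg}_{S,T}({\cal P})$ is a literal sum over cells and the estimates stay elementary. The other minor point is simply to be consistent about the empty-intersection convention above so that the index set ``non-regular pairs'' is unambiguous in both directions.
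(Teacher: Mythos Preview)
Your proposal is correct and follows essentially the same argument as the paper: split ${\rm irreg}_{S,T}({\cal P})$ into regular and non-regular cell pairs, bound each piece by $\varepsilon|V|^2$ in the forward direction, and in the converse use that each non-$(S,T,\sqrt\varepsilon)$-regular pair contributes at least $\sqrt\varepsilon|S\cap V_j||T\cap V_k|$ to ${\rm irreg}_{S,T}({\cal P})$. Your explicit remark about the empty-intersection convention is a helpful clarification that the paper leaves implicit.
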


\begin{proof}
    To prove the forward direction, suppose that the partition  ${\cal P} = \{V_1, \ldots, V_m\}$ satisfies intermediate $\varepsilon$-regularity. If $(V_j, V_k)$ is an $(S, T, \varepsilon)$-regular pair, then \[{\rm irreg}_{S, T}(V_j, V_k) \le \varepsilon |S \cap V_j||T \cap V_k|.\] If $(V_j, V_k)$ is not an $(S, T, \varepsilon)$-regular pair, we only have the bound \[{\rm irreg}_{S, T}(V_j, V_k) \le |S \cap V_j||T \cap V_k|.\] Consequently, \[{\rm irreg}_{S, T}({\cal P}) \le \sum_{\substack{j, k \in [m] \\ (V_j, V_k) \text{ is }(S, T, \varepsilon)\text{-regular}}} \varepsilon|S \cap V_j||T \cap V_k| + \sum_{\substack{j, k \in [m] \\ (V_j, V_k) \text{ not }(S, T, \varepsilon)\text{-regular}}} |S \cap V_j||T \cap V_k|.\] The first sum is clearly at most $\varepsilon|V|^2$, and the second sum is at most $\varepsilon|V|^2$ by Definition \ref{def:intermediate-regularity}.
    
    To prove the converse direction, suppose that ${\cal P}$ has $(S, T)$-irregularity at most $\varepsilon|V|^2$ for all $S, T \subseteq V$. If a pair $(V_j, V_k)$ is not $(S, T, \sqrt{\varepsilon})$-regular, then ${\rm irreg}_{S, T}(V_j, V_k) \ge \sqrt{\varepsilon}|S \cap V_j||T \cap V_k|$. It follows that the partition ${\cal P}$ under consideration satisfies \[\varepsilon |V|^2 \ge {\rm irreg}_{S, T}({\cal P}) \ge \sum_{\substack{j,k \in [m] \\ (V_j, V_k)\text{ not }(S, T, \sqrt{\varepsilon})\text{-regular}}} \sqrt{\varepsilon}|S \cap V_j||T \cap V_k|,\] and dividing both sides by $\sqrt{\varepsilon}$ allows us to conclude that ${\cal P}$ satisfies intermediate $\sqrt{\varepsilon}$-regularity.
\end{proof}

With these alternative characterizations of Szemer\'{e}di regularity and intermediate regularity in hand, we are ready to prove Theorems \ref{thm:frieze-kannan-intermediate-separation} and \ref{thm:intermediate-szemeredi-separation}, establishing the strict separation of intermediate regularity from prior notions.

\paragraph{Proof of Theorem~\ref{thm:frieze-kannan-intermediate-separation}}
For the first part, it suffices to show that ${\cal P}$ satisfies Frieze-Kannan $\varepsilon$-regularity if it has $(S, T)$-irregularity at most $\varepsilon|V|^2$ for all $S, T \subseteq V$. To this end, fix $S, T \subseteq V$ and observe that \[\left|e(S, T) - \sum_{j, k \in [m]} d_{jk}|S \cap V_j||T \cap V_k|\right| = \left|\sum_{j, k \in [m]} \big(e(S \cap V_j, T \cap V_k) - d_{jk}|S \cap V_j||T \cap V_k|\big)\right|,\] where $d_{jk} = d(V_j, V_k)$. By the triangle inequality, the right hand side is at most \[\sum_{j, k \in [m]} \Big|e(S \cap V_j, T \cap V_k) - d(V_j, V_k)|S \cap V_j||T \cap V_k|\Big| = {\rm irreg}_{S, T}({\cal P}) \le \varepsilon|V|^2. \]

For the second part, let $G = (V, E)$ be any graph with the property that \[\left|e_{G}(S, T) - \frac{1}{2}|S||T|\right| < \varepsilon |V|^2\] for all $S, T \subseteq V$. The existence of such a \emph{quasirandom} graph of density $1/2$ follows from standard probabilistic arguments, but explicit constructions are also known \cite{zhao2022graph}. We will now modify the graph $G$ into a graph $G'$ with a partition ${\cal P}$ satisfying Frieze-Kannan $\varepsilon$-regularity but not intermediate $1/9$-regularity. To do so, let $G' = (V', E')$ be the graph with $V' = V \times [2]$ and \[E' = \left\{\big((v_1, b_1), (v_2, b_2)\big) \mid (v_1, v_2) \in E \text{ or } b_1 \neq b_2 \text{ but not both} \right\}.\] This graph $G'$ can be realized as the \emph{Xor product} \cite{alon2007codes} of $G$ with a graph consisting of a single edge. We claim that the partition ${\cal P} = \big\{ \{(v, 1), (v, 2)\} \mid v \in V \big\}$ of $V'$ has the desired properties. To check Frieze-Kannan $\varepsilon$-regularity, observe that the density of edges from any part of ${\cal P}$ to another is precisely $1/2$, so it suffices to show that \[\left|e_{G'}(S', T') - \frac{1}{2}|S'||T'|\right| \le \varepsilon |V'|^2\] for any $S', T' \subseteq V'$. To this end, for $S' \subseteq V'$ and $b \in [2]$, let $S'_b = \{ v \in V \mid (v, b) \in V'\}$. Then
\begin{align*}
    \left|e_{G'}(S', T') - \frac{1}{2}|S'||T'|\right| = &\; \left||E \cap (S'_1 \times T'_1)| - \frac{1}{2}|S'_1||T'_1|\right| \\
    &+ \left||(V^2 \setminus E) \cap (S'_1 \times T'_2)| - \frac{1}{2}|S'_1||T'_2|\right| \\
    &+ \left||(V^2 \setminus E) \cap (S'_2 \times T'_1)| - \frac{1}{2}|S'_2||T'_1|\right| \\
    &+ \left||E \cap (S'_2 \times T'_2)| - \frac{1}{2}|S'_2||T'_2|\right|,
\end{align*} and, by our initial choice of $G$, each of the four terms on the right hand is at most $\varepsilon |V|^2 = \varepsilon |V'|^2/4$. To check that intermediate $1/9$-regularity fails, let $S' = T' = \{(v, b) \in V' \mid b = 1\}$. Then \[{\rm irreg}_{S', T'}({\cal P}) = \sum_{v_1, v_2 \in V} \left| {\bf 1}[(v_1, v_2) \in E] - \frac{1}{2}\right| = \frac{1}{2}|V|^2 = \frac{1}{8}|V'|^2. \]

\paragraph{Proof of Theorem~\ref{thm:intermediate-szemeredi-separation}}
For the first part, it suffices by Theorem \ref{thm:intermediate-regularity-equivalence} to show that the $(S, T)$-irregularity of ${\cal P}$ is bounded above by its irregularity. To this end, observe that for any $V_j, V_k \in {\cal P}$, we have that \[{\rm irreg}(V_j, V_k) = \max_{S, T \subseteq V} {\rm irreg}_{S, T}(V_j, V_k).\] Therefore, for any particular $S, T \subseteq V$, we have that \[{\rm irreg}_{S, T}({\cal P}) = \sum_{j, k \in [m]} {\rm irreg}_{S, T}(V_j, V_k) \le \sum_{j, k \in [m]} {\rm irreg}(V_j, V_k) = {\rm irreg}({\cal P}). \]

The second part follows readily from the following two facts. The first fact is a lower bound on the number of parts required to achieve Szemer\'{e}di $\varepsilon$-regularity. Specifically, \cite{fox2014tight} showed that there exists a graph for which every vertex partition ${\cal P}$ with \[{\rm irreg}({\cal P}) \le \varepsilon |V|^2\] requires the number of parts $|{\cal P}|$ to be at least a tower of twos of height $\Omega(1/\varepsilon^{2})$. The second fact is an upper bound on the number of parts required to achieve intermediate $\varepsilon$-regularity. Specifically, we will argue in Section~\ref{sec:intermediate-regularity-algorithm} that every graph has a vertex partition ${\cal P}$ with \[\max_{S, T \subseteq V} {\rm irreg}_{S, T}({\cal P}) \le \varepsilon |V|^2\] and $|{\cal P}| \le 4^{1/\varepsilon^2}$. Comparing these upper and lower bounds yields the claimed separation between intermediate and Szemer\'{e}di regularity.

\subsection{Algorithm for Intermediate Regularity}
\label{sec:intermediate-regularity-algorithm}

Theorem~\ref{thm:regularity-fairness} suggests that intermediate regularity might be achievable via a modified multicalibration algorithm. This is indeed the case, and standard analyses show that Algorithm~\ref{alg:intermediate-regularity}, initialized with the trivial partition ${\cal P} = \{V\}$, computes such a partition with complexity summarized by the second row of Table~\ref{tab:regularity-bounds}. The algorithm can also be viewed as a modification of a standard algorithm for Frieze-Kannan regularity. The \textsc{Select} subroutine of Algorithm~\ref{alg:intermediate-regularity} implements the algorithm from \cite{alon2004approximating} that takes as input a function $f : V \times V \to [-1, 1]$ and outputs $S, T \subseteq V$ such that \(\left|\sum_{(u,v) \in S \times T} f(u,v)\right| > \frac{1}{2} \max_{S', T' \subseteq V} \left|\sum_{(u, v) \in S' \times T'} f(u,v)\right|\) in ${\rm poly}(|V|)$ time.

\begin{table}[htb]
    \centering
    \begin{tabular}{c|c|c}
        Regularity Notion & Number of Parts & Time Complexity\tablefootnote{The time complexity bound for Frieze-Kannan regularity is for computing an implicit representation of the partition.} \\
        \hline
        Frieze-Kannan & $\exp({\rm poly}(1/\varepsilon))$ & ${\rm poly}(n/\varepsilon)$\\
        Intermediate & $\exp({\rm poly}(1/\varepsilon))$ & ${\rm poly}(n) \exp(\exp({\rm poly}(1/\varepsilon)))$ \\
        Szemer\'{e}di & ${\rm tower}({\rm poly}(1/\varepsilon))$ & ${\rm poly}(n) {\rm tower}({\rm poly}(1/\varepsilon))$
    \end{tabular}
    \caption{Some Upper Bounds on Achieving Graph Regularity ($n = |V|$, ${\rm tower}(h) = \underbrace{2^{2^{\cdot^{\cdot^{\cdot^{2}}}}}}_{h\text{ times}}$)}
    \label{tab:regularity-bounds}
\end{table}

\begin{algorithm}[tb]
    \caption{Construction of Intermediate $\varepsilon$-Regularity Partition}
    \label{alg:intermediate-regularity}
    \SetKwProg{Def}{def}{:}{end}
    \SetKwFunction{Refine}{\textsc{\textrm{Refine}}}
    \SetKwFunction{Select}{\textsc{\textrm{Select}}}
    \Def{\Refine{$\varepsilon$, $G$, ${\cal P}$}}{
        \For{$\sigma \in \{-1, +1\}^{m \times m}$}{
            $S, T \gets$ \Select{$\sum_{j=1}^m \sum_{k=1}^m \sigma_{jk}({\bf 1}_E - d(V_j, V_k)){\bf 1}_{V_j \times V_k}$} \\
            \If{${\rm irreg}_{S, T}({\cal P}) > \frac{1}{2}\varepsilon|V|^2$}{
                ${\cal P}' \gets$ common refinement of $S$, $T$, and ${\cal P}$ \\
                \Return \Refine{$\varepsilon$, $G$, ${\cal P}'$}
            }
        }
        \Return ${\cal P}$
    }
\end{algorithm}

\subsection{The Regularity-Multicalibration Theorem}
\label{sec:graph-regularity-reduction}

\begin{figure}[tb]
    \centering
    \includegraphics{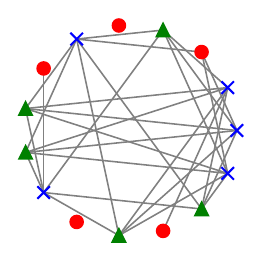}
    \includegraphics{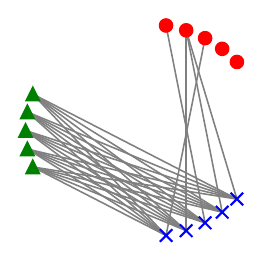}
    \includegraphics{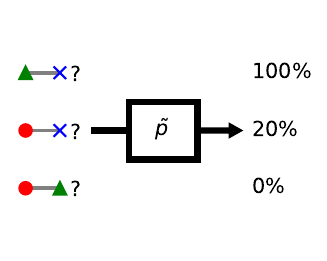}
    \caption{A graph $G$, a regularity partition of $G$, and a multicalibrated edge predictor for $G$.}
    \label{fig:graph}
\end{figure}

The Szemer\'{e}di and Frieze-Kannan regularity lemmas state that any graph $G$ has a partition satisfying $\varepsilon$-regularity (of the appropriate kind) whose number of parts is bounded by a function of $\varepsilon$. The partitions constructed in the proofs of these lemmas can be viewed as low-complexity approximations to the graph that fool a particular family of cryptographic distinguishers, as observed by \cite{skorski2017cryptographic}.

In this section, we first show that these distinguishers fit neatly within the framework of multi-group fairness. In particular, we will see that Frieze-Kannan weak regularity corresponds naturally to multiaccuracy and that Szemer\'{e}di regularity corresponds naturally to \emph{strict} multicalibration, with respect to the \emph{same} collection of subpopulations. Taking this connection one step further, we show that (ordinary) multicalibration naturally gives rise to our new notion of \emph{intermediate regularity}, which is stronger than Frieze-Kannan regularity but weaker than Szemer\'{e}di regularity.

\begin{definition}
    In the \emph{edge prediction problem} for a graph $G = (V, E)$, the population is ${\cal X} = V \times V$, each individual $i \in V \times V$ is a vertex pair drawn uniformly at random, and the true outcome of individual $i$ is the single bit $o^*_i = {\bf 1}[i \in E]$. The collection of protected subpopulations is ${\cal C} = \{{\bf 1}_{S \times T} \mid S, T \subseteq V\}$. In this setting, we call $\tilde{p} : {\cal X} \to [0, 1]$ an \emph{edge predictor for $G$}.
\end{definition}

\begin{theorem}[Regularity-Multicalibration]
\label{thm:regularity-fairness}
Given a graph $G$, consider the following definitions of \emph{fairness} for an edge predictor $\tilde{p}$ for $G$ and \emph{regularity} for a vertex partition ${\cal P}$ of $G$:
\begin{itemize}
    \item[(i)] multiaccuracy and Frieze-Kannan regularity.
    \item[(ii)] multicalibration and intermediate regularity.
    \item[(iii)] strict multicalibration and Szemer\'{e}di regularity.
\end{itemize}

For each such pair of definitions, there exists an absolute constant $0 < c < 1$ such that the following two implications hold for sufficiently small $\varepsilon$:

\begin{itemize}
    \item[(a)] If ${\cal P}$ is $\varepsilon$-regular, the predictor $\tilde{p}$ that outputs $d(V_j, V_k)$ on all of $V_j \times V_k \in {\cal P}^2$ is $({\cal C}, \varepsilon^c)$-fair.\footnote{For a partition ${\cal P} = \{V_1, \ldots, V_m\}$ of the vertices $V$, the set ${\cal P}^2 = \{V_j \times V_k \mid j,k \in [m]\}$ denotes the partition of $V \times V$ obtained from all pairwise Cartesian products of parts of ${\cal P}$.}
    \item[(b)] If $\tilde{p}$ is $({\cal C}, \varepsilon)$-fair and the set of level sets of $\tilde{p}$ is ${\cal P}^2$ for some partition ${\cal P}$, then ${\cal P}$ is $\varepsilon^c$-regular.
\end{itemize}

\end{theorem}

Before presenting the formal proof, we provide a short proof sketch to emphasize the main observations. For $V_j, V_k \in {\cal P}$, define $d_{jk} = d(V_j, V_k)$. In Section~\ref{sec:graph-relationships}, we showed through algebraic manipulations that the regularity criteria of Section~\ref{sec:graph-regularity-definitions} are equivalent, up to a polynomial change in $\varepsilon$, to the following conditions:
\begin{align*}
    \max_{S, T \subseteq V} \left| \sum_{j=1}^m \sum_{k=1}^m e(S \cap V_j, T \cap V_k) - d_{jk}|S \cap V_j||T \cap V_k|\right| &\le \varepsilon |V|^2 &\text{for Frieze-Kannan $\varepsilon$-regularity,} \\
    \max_{S, T \subseteq V} \sum_{j=1}^m \sum_{k=1}^m\left|e(S \cap V_j, T \cap V_k) - d_{jk}|S \cap V_j||T \cap V_k|\right| &\le \varepsilon |V|^2 &\text{for intermediate $\varepsilon$-regularity,} \\
    \sum_{j=1}^m \sum_{k=1}^m\max_{S, T \subseteq V} \left|e(S \cap V_j, T \cap V_k) - d_{jk}|S \cap V_j||T \cap V_k|\right| &\le \varepsilon |V|^2 &\text{for Szemer\'{e}di $\varepsilon$-regularity.}
\end{align*}
The relationships among our fairness criteria can be phrased similarly. Indeed, let \(\Delta_{S,v}(\tilde{p}) = \Pr[i \in S, \tilde{o}_i = 1, \tilde{p}_i = v] - \Pr[i \in S, o^*_i = 1, \tilde{p}_i = v]\) for a predictor $\tilde{p} : {\cal X} \to [0, 1]$, a subpopulation $S \subseteq {\cal X}$ and a value $v \in [0, 1]$. Then, the requirements of Definitions~\ref{def:multiaccuracy}, \ref{def:multicalibration}, and \ref{def:strict-multicalibration} reduce to:
\begin{align*}
    \max_S \left|\sum_v \Delta_{S,v}(\tilde{p})\right| &\le \varepsilon &\text{for $({\cal C}, \varepsilon)$-multiaccuracy,}\\
    \max_S \sum_v \left|\Delta_{S,v}(\tilde{p})\right| &\le \varepsilon &\text{for $({\cal C}, \varepsilon)$-multicalibration,}\\
    \sum_v \max_S \left|\Delta_{S,v}(\tilde{p})\right| &\le \varepsilon &\text{for strict $({\cal C}, \varepsilon)$-multicalibration,}
\end{align*}
where the maxima are taken over $S$ such that ${\bf 1}_S \in {\cal C}$ or ${\bf 1}_{{\cal X} \setminus S} \in {\cal C}$ or $S = {\cal X}$ and the sums are taken over the range of $\tilde{p}$.
Comparing the two displayed sets of inequalities will yield part (a) of the theorem. Part (b) will follow from a similar argument.

\begin{proof}[Proof of Theorem~\ref{thm:regularity-fairness}]
Throughout this proof, we will use the notation $d_{jk}$ and $\Delta_{S,v}$ from the proof sketch, as well as the alternate characterizations of Szemer\'{e}di and intermediate regularity from Section~\ref{sec:graph-relationships}.
\begin{itemize}
    \item[(a)] As usual, let $V_1, \ldots, V_m$ denote the parts of ${\cal P}$, and consider any fixed sets $S, T \subseteq V$ and value $v \in [0, 1]$. The construction of $\tilde{p}$ ensures that $\tilde{p}_{(a,b)} = d(V_j, V_k)$ for any vertex pair $(a,b) \in V_j \times V_k$, so some algebra yields \[\Delta_{S \times T, v}(\tilde{p}) = \sum_{\substack{(j,k) \in [m]^2 \text{ s.t. }\\\tilde{p}(V_j \times V_k) = v}} e(S \cap V_j, T \cap V_k) - d(V_j, V_k)|S \cap V_j||T \cap V_k|,\] along with the useful fact that $\Delta_{{\cal X} \setminus (S \times T), v}(\tilde{p}) = -\Delta_{S \times T, v}(\tilde{p})$. By taking absolute values, summing over $v$ in the range of $\tilde{p}$, and taking the max over $S, T \subseteq V$ in various orders on both sides of the above equation, we deduce the following three inequalities:
    \begin{align*}
        \max_{S, T \subseteq V} \left|\sum_{v \in \tilde{p}({\cal X})} \Delta_{S \times T, v}(\tilde{p})\right| &\le \max_{S, T \subseteq V} \left| \sum_{j=1}^m \sum_{k=1}^m e(S \cap V_j, T \cap V_k) - d_{jk}|S \cap V_j||T \cap V_k|\right|, \\
        \max_{S, T \subseteq V} \sum_{v \in \tilde{p}({\cal X})} \left|\Delta_{S \times T, v}(\tilde{p})\right| &\le \max_{S, T \subseteq V} \sum_{j=1}^m \sum_{k=1}^m\left|e(S \cap V_j, T \cap V_k) - d_{jk}|S \cap V_j||T \cap V_k|\right|, \\
        \sum_{v \in \tilde{p}({\cal X})} \max_{S, T \subseteq V} \left|\Delta_{S \times T, v}(\tilde{p})\right| &\le \sum_{j=1}^m \sum_{k=1}^m\max_{S, T \subseteq V} \left|e(S \cap V_j, T \cap V_k) - d_{jk}|S \cap V_j||T \cap V_k|\right|.
    \end{align*}
    The above three inequalities, show, respectively, that for an appropriate absolute constant $c \in (0, 1)$ and sufficiently small $\varepsilon > 0$, Frieze-Kannan $\varepsilon$-regularity of ${\cal P}$ implies $\varepsilon^c$-multiaccuracy of $\tilde{p}$, that intermediate $\varepsilon$-regularity of ${\cal P}$ implies $\varepsilon^c$-multicalibration of $\tilde{p}$, and that Szemer\'{e}di $\varepsilon$-regularity of ${\cal P}$ implies strict $\varepsilon^c$-multicalibration of $\tilde{p}$.
    \item[(b)] For a fixed set $S \subseteq V$ of vertices, let $\tilde{S} \subseteq V$ denote a random set of vertices sampled as follows: for each $j \in [m]$, independently include all vertices of $V_j$ in $\tilde{S}$ with probability $|S \cap V_j|/|V_j|$. 
    With this notation, a simple algebraic calculation shows that for any fixed $j,k \in [m]$ such that $V_j \times V_k$ exactly coincides with the $v$-level set of $\tilde{p}$, and for any fixed sets $S, T \subseteq V$, we have
    \begin{align*}
        e(S \cap V_j, T \cap V_k) - d(V_j, V_k)|S \cap V_j| |T \cap V_k| = \Delta_{S \times T, v}(\tilde{p}) - \E_{\tilde{S}, \tilde{T}}\left[\Delta_{\tilde{S} \times \tilde{T}, v}(\tilde{p})\right].
    \end{align*}
    We will manipulate this key equation in three different ways to derive the three versions of this part of the theorem. First, by summing over $j,k \in [m]$, taking absolute values, and applying the triangle inequality, we see that
    \begin{align*}
        \left|e(S, T) - \sum_{j,k \in [m]} d_{jk}|S \cap V_j| |T \cap V_k|\right| \le \left|\Delta_{S \times T}(\tilde{p})\right| + \E_{\tilde{S}, \tilde{T}}\left|\sum_{v \in \tilde{p}({\cal X})} \Delta_{\tilde{S} \times \tilde{T}, v}(\tilde{p})\right|.
    \end{align*}
    For small enough $\varepsilon, c > 0$, this shows that $\varepsilon$-multiaccuracy of $\tilde{p}$ implies Frieze-Kannan $\varepsilon^c$-regularity of ${\cal P}$. If we were to instead take absolute values of both sides before summing over $j, k \in [m]$ and applying the triangle inequality, we would see that
    \begin{align*}
        &\sum_{j,k \in [m]}\Big|e(S \cap V_j, T \cap V_k) - d_{jk}|S \cap V_j| |T \cap V_k|\Big|  \le \sum_{v \in \tilde{p}({\cal X})}\left|\Delta_{S \times T,v}(\tilde{p})\right| + \sum_{v \in \tilde{p}({\cal X})}\left|\Delta_{V \times V,v}(\tilde{p})\right|,
    \end{align*} which shows that $\varepsilon$-multicalibration of $\tilde{p}$ implies intermediate $\varepsilon^c$-regularity of ${\cal P}$. Finally, if we had chosen to take the maximum over $S,T$ before summing over $j,k \in [m]$ and applying the triangle inequality, we would have seen that
    \begin{align*}
        \sum_{j,k \in [m]} {\rm irreg}(V_j, V_k) \le \sum_{v \in \tilde{p}({\cal X})}\max_{S,T \subseteq V} \left|\Delta_{S \times T,v}(\tilde{p})\right| + \sum_{v \in \tilde{p}({\cal X})} \left|\Delta_{V \times V,v}(\tilde{p})\right|,
    \end{align*}
    which shows that strict $\varepsilon$-multicalibration of $\tilde{p}$ implies Szemer\'{e}di $\varepsilon^c$-regularity of ${\cal P}$.
\end{itemize}
\end{proof}
\section{Hardcore Lemma for Real-Valued Functions}
\label{sec:hardcore}

The leakage simulation lemma is connected with the hard-core lemma for deterministic Boolean functions as shown by~\cite{trevisan2009regularity, vadhan2013uniform}. We show that multicalibration enables stronger consequences, namely a hard-core lemma for real valued functions.

Informally, Impagliazzo's hard-core lemma says that for any boolean function $f : {\cal X} \to \{0, 1\}$ that is hard on average against a class $\mathcal{C} \subseteq \{0, 1\}^{\cal X}$, there is a large subset of ${\cal X}$, the {\em hard core}, whose size depends on the hardness of $f$, on which $f$ is effectively pseudorandom. We prove an analogous statement for real valued functions (equivalently, random functions with boolean outcomes) $p^*:\mathcal{X}\rightarrow[0,1]$, where hardness of being correct is replaced by hardness of approximation (in $L_2$ distance), while the hardcore/pseudorandom condition is replaced with a covariance condition. In other words, if it's hard to approximate the function, then there's some large set on which it's hard to even have any non-negligible covariance.

In order to formally state the theorem, let $R_{T}(\mathcal{C})$ denote the class of functions of relative complexity $T$ with respect to ${\cal C}$.  Let a $\mathcal{C}$ gate denote black-box computation of a function $c \in \mathcal{C}$. More precisely, every $\breve p \in R_{T}(\mathcal{C})$ is a circuit containing $T$ $\mathcal{C}$-gates and $\mathrm{poly}(T)$ additional 
basic operations (e.g., floating-point arithmetic and Boolean logical operations). In particular, this captures the class of all functions which can be the outcome of the multicalibration algorithm after $T$ rounds with an appropriate $\varepsilon$.

\begin{theorem}[Hardcore lemma for probabilities]\label{thm:hardcore}
Let $\alpha>0, \gamma \in (0,\frac{1}{3}]$. Let $\mathcal{C}\in[0,1]^{\mathcal{X}}$ be a collection of real valued functions. Let $p^*:\mathcal{X}\rightarrow[0,1]$ be a function which is hard to $\alpha$-approximate by functions in $R_{T}(\mathcal{C})$ for $T=O({\alpha^{-6}\gamma^{-4}})$, i.e.,  $\forall \breve{p}\in R_T(\mathcal{C}), \E[(p^*_i-\breve{p}_i)^2]>\alpha$.

Then, there exists a hardcore set $S\in\mathcal{X}$ where $\Pr[i\in S] > \frac{\alpha^2\gamma}{4}$ and $\forall c\in\mathcal{C},\Cov(c_i,p^*_i \mid i \in S)<\gamma\Var(p^*_i \mid i \in S)$.
\end{theorem}

The proof of this statement is based on the following intuition: First, the multicalibration algorithm gives us a relatively simple predictor $\breve{p}$. Next, a multicalibrated predictor partitions $\mathcal{X}$ into slices on which either (1) $p^*$ has low variance (so $\breve{p}$ is highly accurate), or (2) $\breve{p}$ is not highly accurate (so $p^*$ is high variance) but $\mathcal{C}$ is not able to take advantage of this (nothing in $\mathcal{C}$ is correlated with $p^*$ on this slice). Finally, if $\breve{p}$ is far from $p^*$, then there must exist a set on which the latter condition is true. That set is the hardcore.

We first prove Lemma~\ref{lem:hardcore-OI}, which states that we can obtain a relatively simple predictor satisfying both (a variant of) statistical-distance multicalibration and the additional guarantee that the predictor is perfectly accurate in expectation on all its slices.  This is obtained by a careful analysis of a simple post-processing of the predictor obtained by our multicalibration-through-outcome-indistinguishability  Algorithm~\ref{alg:outcome-indistinguishability} when instantiated with a suitable class of distinguishers.

The change is that instead of requiring indistinguishability w.r.t. the real valued $c$'s (which would require close to perfect accuracy), we ask for indistinguishability with respect to \textit{outcomes} when treating $c_i$ as a probability. That is, instead of bounding
$\delta((c_i,\breve{o}_i,\hat{p}_i),(c_i,o^*_i,\hat{p}_i))$, we bound $\delta((o^c_i,\breve{o}_i,\hat{p}_i),(o^c_i,o^*_i,\hat{p}_i))<\varepsilon$, where $o^c_i\sim\Ber(c_i)$.

We'll also need a new family of distinguishers.
\begin{definition}\label{def:oi-family-for-random-multicalibration}
Let \({\cal B}_{{\cal C}, {\cal G}}^{\mathrm{MC}} = \left\{ B_{c,E} \mid c \in {\cal C}, \; E \subseteq {\cal Y} \times {\cal O} \times {\cal G}\right\},\) for $\mathcal{C}\in\Delta\mathcal{Y}^\mathcal{X}$, where  \[B_{c,E}(j, o, \tilde{p}) = {\bf 1}[(o^c_j\leftarrow c_j, o, \hat{p}_j) \in E]\] for each member $j \in {\cal X}$, possible outcome $o \in {\cal O}$, and predictor $\tilde{p} : {\cal X} \to \Delta{\cal O}$.
\end{definition}

Note that Definition~\ref{def:oi-family-for-multicalibration} is a special case of Definition~\ref{def:oi-family-for-random-multicalibration} for trivial distributions.

\begin{lemma}
\label{lem:hardcore-OI}
    For every $\varepsilon > 0,\eta>0$, there is a function $\breve{p}\in R_{T(\varepsilon)}(\mathcal{C})$, $\breve{p}:\mathcal{X}\rightarrow[0,1]$ and a partitioning function $\hat{p} : {\cal X} \to \mathcal{G}$, with $\mathcal{G}$ an $\eta$-cover of $[0,1]$ s.t.
    \begin{enumerate}
        \item for all $c\in\mathcal{C}$, $\delta((o^c_i,\breve{o}_i,\hat{p}_i),(o^c_i,o^*_i,\hat{p}_i))<\varepsilon$.
        \item $\breve{p}$ is perfectly accurate in expectation on the slices $\hat{p}=v$.
        \item $T(\varepsilon)=O\left(\frac{1}{\varepsilon^2}\right)$
    \end{enumerate}
\end{lemma}

\begin{proof} 
Run Algorithm~\ref{alg:outcome-indistinguishability} using the collection of distinguishers ${\cal B}_{\cal C, G}^{\mathrm{MC}}$, where $\mathcal{C}$ is as provided in the statement of the theorem and $\mathcal{G}$ is the standard $\eta$ covering, to obtain $\tilde{p}$ that is
$\left({\cal B}_{\cal C, G}^{\mathrm{MC}},\frac{\varepsilon}{2}\right)$-outcome-indistinguishable from $p^*$  and has relative complexity $T(\varepsilon)=O(\varepsilon^{-2})$ to $\mathcal{C}$. This gives us a predictor satisfying
$$\delta((o^c_i,\breve{o}_i,\hat{p}_i),(o^c_i,o^*_i,\hat{p}_i))<\frac{\varepsilon}{2}$$

However, it does not necessarily satisfy Claim 2; we will modify $\tilde{p}$ to obtain a new predictor $\breve{p}$ that satisfies both Claims 1 and~2.  This is achieved by shifting $\tilde{p}$ on the level sets of $\hat{p}$, incurring an additive term of $|\mathcal{G}|$ on the complexity of $\hat{p}$. Speaking intuitively, Claim~1 still holds for this new $\hat{p}$ since we are only improving the accuracy of $\hat{p}$; we prove this intuition to be (nearly) correct. More formally, define $$\breve{p}_i:=\tilde{p}_i+\tau(\hat{p}_i),$$ where $$\tau_v:=\E[p^*_i-\tilde{p}_i|\hat{p}_i=v].$$ 
$\breve{p}_i$ satisfies Claim 1 by construction. Finally, it remains to show that Claim 1 still holds for this new $\breve{p}$. First, we show that the average magnitude of these shifts must be small:
\begin{align*}
    \E_v[|\tau_v|]
    &=\E_v[|\E[p^*_i-\tilde{p}_i|\hat{p}_i=v]|]\\
    &\le\delta((o^c_i,\tilde{o}_i,\hat{p}_i),(o^c_i,o^*_i,\hat{p}_i))
\end{align*}
We show that this implies that shifting at most doubles the statistical difference.

\begin{align*}
    \delta((o^c_i,\breve{o}_i,\hat{p}_i),(o^c_i,o^*_i,\hat{p}_i))
    &=\E_y[\delta((\breve{o}_i,\hat{p}_i),(o^*_i,\hat{p}_i)|o^c_i=y)]\\
    &=\E_y\left[\E_v\left[\left|\E_i[p^*_i-\breve{p}_i|\hat{p}_i=v]\right||o^c_i=y\right]\right]\\
    &=\E_y\left[\E_v\left[\left|\E_i[p^*_i-\tilde{p}_i-\tau_v|\hat{p}_i=v]\right||o^c_i=y\right]\right]\\
    &\le\E_y\left[\E_v\left[\left(\left|\E_i[p^*_i-\tilde{p}_i|\hat{p}_i=v]\right|+|\tau_v|\right)|o^c_i=y\right]\right]\\
    \intertext{Applying the above fact,}
    &\le\E_y\left[\E_v\left[\left|\E_i[p^*_i-\tilde{p}_i|\hat{p}_i=v]\right||o^c_i=y\right]\right]+\delta((o^c_i,\tilde{o}_i,\hat{p}_i),(o^c_i,o^*_i,\hat{p}_i))\\
    &=2\delta((o^c_i,\tilde{o}_i,\hat{p}_i),(o^c_i,o^*_i,\hat{p}_i))\\
    &\le\varepsilon.
\end{align*}

\end{proof}

We will continue to use $\mathcal{G}$ to denote the set of rounded to values/slices for a predictor. In this section, we will always use $\mathcal{G}$ of the form $(0,\eta,\dots,\lceil\frac{1}{\eta}-1\rceil\cdot\eta,1)$, so that $m=|\mathcal{G}|=\lceil\frac{1}{\eta}\rceil+1$. For clarity, we will assume from now on that $\eta = 1/m$ to avoid having to clutter the presentation with rounding.

The next lemma lets us find large slices on which the target function has large variance, under the assumption that $\tilde{p}$ fails to $\alpha$-approximate $p^*$.

\begin{lemma}\label{lem:alpha-hard}
Let $\breve{p}:[0,1]^\mathcal{X}$ and $\hat{p}:\mathcal{G}^\mathcal{X}$. If $\E[(\breve{p}_i-p_i^*)^2]>\alpha$, then there exists $v\in\mathcal{G}$ s.t. 
\begin{enumerate}
    \item $\Pr[\hat{p}_i=v]\ge 2\alpha\eta$
    \item and $\Var(p_i^*|\hat{p}_i=v)\ge \frac{\alpha}{2}-3\eta$
\end{enumerate}
\end{lemma}
\begin{proof}
First, we show that we can approximately break down $\E[(\breve{p}_i-p_i^*)^2]>\alpha$ as the average of the variance of $p^*$ on the slices defined by $\hat{p}$.
\\
\begin{align*}
\alpha
&< \E[(\breve{p}_i-p_i^*)^2]\\
&=\sum_{v\in\mathcal{G}}\Pr[\hat{p}_i=v]\E[(\breve{p}_i-p_i^*)^2|\hat{p}_i=v]\\
\intertext{Letting $\mu_v$ denote the expected value of $p^*_i$ on slice $v$ which, by construction, is equivalent to that of $\breve{p}_i$,}
&=\sum_{v\in\mathcal{G}}\Pr[\hat{p}_i=v]\E[(\breve{p}_i-\mu_v+\mu_v-p_i^*)^2|\hat{p}_i=v]\\
&=\sum_{v\in\mathcal{G}}\Pr[\hat{p}_i=v]\E[(p^*_i-\mu_v)^2 - 2(p^*_i-\mu_v)(\breve{p}_i-\mu_v) + (\breve{p}_i-\mu_v)^2\\
&=\sum_{v\in\mathcal{G}}\Pr[\hat{p}_i=v]\left(\Var(p^*_i|\hat{p}_i=v) - 2\Cov(p^*,\breve{p}|\hat{p}_i=v) + \Var(\breve{p}_i|\hat{p}_i=v)\right)\\
\intertext{Since $\breve{p}_i$ is bounded within an interval of size $\eta$,}
&\le\sum_{v\in\mathcal{G}}\Pr[\hat{p}_i=v](\Var(p^*_i|\hat{p}_i=v)+3\eta).
\end{align*}
Next, we apply a Markov type argument to show that the total probability of all the slices with large $p^*$ variance must be at least $\alpha/2$. Let $\mathcal{B}$ denote the slices on which $\Var(p^*_i|\hat{p}_i=v)\ge \frac{\alpha}{2}-3\eta$.
\begin{align*}
    &\sum_{v\in\mathcal{G}}\Pr[\hat{p}_i=v]\Var(p^*_i|\hat{p}_i=v) \\
    =&\,\sum_{v\in\mathcal{B}}\Pr[\hat{p}_i=v]\Var(p^*_i|\hat{p}_i=v) + \sum_{v\in\mathcal{G}-\mathcal{B}}\Pr[\hat{p}_i=v]\Var(p^*_i|\hat{p}_i=v) \\
    \intertext{Since $p^*$ is bounded in $[0,1]$, it has variance at most $1/4$,}
    \le&\,\sum_{v\in\mathcal{B}}\Pr[\hat{p}_i=v]\cdot\frac{1}{4} + \sum_{v\in\mathcal{G}-\mathcal{B}}\Pr[\hat{p}_i=v]\left(\frac{\alpha}{2}-3\eta\right)\\
    =&\,\frac{1}{4}\Pr[\mathcal{B}] + \frac{\alpha}{2}-3\eta.
\end{align*}
Combining with the first inequality, we conclude that
$$\Pr[\mathcal{B}]\ge 2\alpha$$
And because $\mathcal{G}$ is the uniform $\eta$ covering, we conclude that there exists a $v\in\mathcal{B}$ satisfying $\Pr[\hat{p}_i=v]\ge 2\alpha\eta$.
\end{proof}

The final two lemmas let us show that if there is no hardcore, then statistical distance must be large.
\begin{lemma}\label{lem:sd-cov}
    $\delta((o^c_i,\breve{o}_i),(o^c_i,o^*_i)\mid\hat{p}_i)\ge \Cov(c_i,p^*_i\mid\hat{p}_i)-\eta$.
\end{lemma}

\begin{proof}
We first show that $\Cov(c_i,p^*_i\mid\hat{p}_i) = \E[o^c_i(o^*_i-\hat{o}_i) | \hat{p}_i]$.

\begin{equation*}
\begin{split}
    \Cov(c_i,p^*_i\mid\hat{p}_i) 
    &= \E[c_i(p^*_i-\hat{p}_i) | \hat{p}_i]\\
    &= \E[o^c_i(o^*_i-\hat{o}_i) | \hat{p}_i]\\
\end{split}
\end{equation*}
Next, we show that $\E[o^c_i(o^*_i-\hat{o}_i) | \hat{p}_i] \leq \delta((o^c_i,\hat{o}_i,\hat{p}_i),(o^c_i,o^*_i,\hat{p}_i)\mid\hat{p}_i)$.
\begin{equation*}
\begin{split}
\E[o^c_i(o^*_i-\hat{o}_i) | \hat{p}_i]
&= \E[(o^c_i-\frac{1}{2})(o^*_i-\hat{o}_i) | \hat{p}_i]\\
&= \E[\frac{1}{2}(\Pr[o^c_i=1](\Pr[o^*_i=1]-\Pr[\tilde{o}_i=1])+\Pr[o^c_i=0](\Pr[o^*_i=0]-\Pr[\tilde{o}_i=0])]\\
&= \frac{1}{2}(\Pr[o^c_i=o^*_i]-\Pr[o^c_i-\tilde{o}_i])\\
&\leq\delta((o^c_i,\hat{o}_i),(o^c_i,o^*_i)\mid\hat{p}_i)\\
\end{split}
\end{equation*}
Putting the two together, we get
$$\delta((o^c_i,\hat{o}_i),(o^c_i,o^*_i)\mid\hat{p}_i) \geq \Cov(c_i,p^*_i\mid\hat{p}_i)$$
To conclude the proof, observe that \(\delta \big( (c_i, o^*_i), (c_i, \hat{o}_i) \mid \hat{p}_i \big) \le \delta \big( (c_i, o^*_i), (c_i, \breve{o}_i) \mid \hat{p}_i \big) + \eta\).
\end{proof}

\begin{lemma}\label{lem:hardcore-sd}
    If there exists $v^*\in\mathcal{G}$ satisfying:
    \begin{enumerate}
        \item $\Pr[\hat{p}_i=v^*]>\beta$
        \item $\Var(p^*_i|\hat{p}_i=v^*)>\lambda$
        \item $\Cov(c,p^*|\hat{p}_i=v^*)>\gamma\Var(p^*_i|\hat{p}_i=v^*)$
    \end{enumerate}
    Then
    $$\delta((c_i,\breve{o}_i,\hat{p}_i),(c_i,o^*_i,\hat{p}_i))>\beta\gamma\lambda-\beta\eta$$
\end{lemma}
\begin{proof}
    This lemma is a simple consequence of the Lemma~\ref{lem:sd-cov} and the definition of conditional statistical distance.
    \begin{align*}
        \delta((c_i,\breve{o}_i,\hat{p}_i),(c_i,o^*_i,\hat{p}_i))
        &= \sum_v\Pr[\hat{p}_i=v]\delta((c_i,\breve{o}_i,\hat{p}_i),(c_i,o^*_i,\hat{p}_i)|\hat{p}_i=v)\\
        &\ge \sum_v\Pr[\hat{p}_i=v](\Cov(c,p^*|\hat{p}_i=v)-\eta)\\
        &\ge \Pr[\hat{p}_i=v^*](\Cov(c,p^*|\hat{p}_i=v^*)-\eta)\\
        &> \beta(\gamma\Var(p^*_i|\hat{p}_i=v^*)-\eta) \\
        &> \beta\gamma\lambda-\beta\eta.
    \end{align*}
\end{proof}

Theorem~\ref{thm:hardcore} now follows by setting parameters and gluing these lemmas together.

\begin{proof}[Proof of Theorem~\ref{thm:hardcore}]
First, we apply Lemma~\ref{lem:hardcore-OI} with parameters $\varepsilon=\frac{\alpha^3\gamma^2}{16}$,$\eta=\frac{\alpha\gamma}{8}$ to attain a $\breve{p}\in R_{T(\varepsilon)}(\mathcal{C})$. $T(\varepsilon)=O\left(\frac{1}{\varepsilon^2}\right)=O\left(\frac{1}{\alpha^6\gamma^4}\right)$. Also, by the $\alpha$-hardness assumption, we can apply Lemma~\ref{lem:alpha-hard} to attain a $v^*$ satisfying $\Pr[\hat{p}_i=v^*]\ge 2\alpha\eta$ and $\Var(p_i^*|\hat{p}_i=v^*)\ge \frac{\alpha}{2}-3\eta$. Finally, by Lemma~\ref{lem:hardcore-sd}, we conclude that \textit{if there is no hard-core set} (i.e. the conditions of the lemma are satisfied with $\beta=2\alpha\eta$, $\lambda=\frac{\alpha}{2}-3\eta$, and $\gamma$ as provided), then 
$$\delta((c_i,\breve{o}_i,\hat{p}_i),(c_i,o^*_i,\hat{p}_i))>\beta\gamma\lambda-\beta\eta>\frac{\alpha^3\gamma^2}{16}=\varepsilon.$$ This violates the conclusion from our application of Lemma~\ref{lem:hardcore-OI}, that $$\delta((c_i,\breve{o}_i,\hat{p}_i),(c_i,o^*_i,\hat{p}_i))<\varepsilon.$$ We therefore conclude that there must exist a hardcore set as specified in the theorem statement.
\end{proof}

\bibliographystyle{alpha}
\bibliography{sources}

\end{document}